\newcolumntype{P}[1]{>{\centering\arraybackslash}p{#1}}
\newcommand{\fixedWidth}{3.9cm}
\def\eqref#1{equation~\ref{#1}}
\def\1{\bm{1}}
\DeclareMathAlphabet{\mathsfit}{\encodingdefault}{\sfdefault}{m}{sl}
\SetMathAlphabet{\mathsfit}{bold}{\encodingdefault}{\sfdefault}{bx}{n}
\newcommand{\R}{\mathbb{R}}
\DeclareMathOperator*{\argmax}{arg\,max}
\def\Rbb{{\mathbb{R}}}
\def\Cbb{{\mathbb{C}}}
\def\Spec{{\operatorname{Spec}}}
\definecolor{lb}{RGB}{31,119,180}
\definecolor{blue}{RGB}{31,119,180}
\newtcolorbox{mybox}[1]{colback=lb!5!white,colframe=lb!70!black,fonttitle=\bfseries,title=#1}
\newtcbox{\mymath}[1][]{%
    nobeforeafter, tcbox raise base, colframe=teal!60!black,
    colback=teal!10, boxrule=1pt,
    #1}
\newtcolorbox{resbox}{standard jigsaw,opacityback=0,boxrule=1pt,top=0.9ex,bottom=0.9ex,colbacktitle=lb!10!white,colframe=lb!70!white}
\newtcolorbox{purplebox}{colback=violet!13!white,colframe=violet!50!white,boxrule=0mm,arc=0mm,bottomtitle=0.5mm,left=2.5mm,leftrule=1mm,right=3.5mm,toptitle=0.75mm}
\theoremstyle{plain}
\newtheorem{theorem}{Theorem}[section]
\newtheorem{proposition}[theorem]{Proposition}
\newtheorem{lemma}[theorem]{Lemma}
\newtheorem{corollary}[theorem]{Corollary}
\theoremstyle{definition}
\newtheorem{definition}[theorem]{Definition}
\newtheorem{example}[theorem]{Example}
\newtheorem{notation}[theorem]{Notation}
\theoremstyle{remark}
\newtheorem{remark}[theorem]{Remark}
\title{Graph Persistence goes Spectral}
\author{
  Mattie Ji \\
  University of Pennsylvania \\
  \texttt{mji13@sas.upenn.edu} \\
   \And
  Amauri H. Souza \\
  Federal Institute of Ceará \\
  \texttt{amauriholanda@ifce.edu.br}\\
  \And
  Vikas Garg \\
Aalto University\\
YaiYai Ltd\\
\texttt{vgarg@csail.mit.edu}
}
\begin{document}

\maketitle

\begin{abstract}
Including intricate topological information (e.g., cycles) provably enhances the expressivity of message-passing graph neural networks (GNNs) beyond the Weisfeiler-Leman (WL) hierarchy. Consequently, Persistent Homology (PH) methods are increasingly employed for graph representation learning. In this context, recent works have proposed decorating classical PH diagrams with vertex and edge features for improved expressivity. However, these methods still fail to capture basic graph structural information. In this paper, we propose SpectRe --- a new topological descriptor for graphs that integrates spectral information into PH diagrams. Notably, SpectRe is strictly more expressive than PH and spectral information on graphs alone. We also introduce notions of global and local stability to analyze existing descriptors and establish that SpectRe is locally stable. Finally, experiments on synthetic and real-world datasets demonstrate the effectiveness of SpectRe and its potential to enhance the capabilities of graph models in relevant learning tasks.  Code is available at \url{https://github.com/Aalto-QuML/SpectRe/}.
\end{abstract}

\section{Introduction}

Relational data is ubiquitous in real-world applications, and can be elegantly abstracted with graphs. GNNs are state-of-the-art models for graph representation learning \cite{GNNsOriginal, kipf2017semisupervised, hamilton2017inductive, velivckovic2017graph, gin,bronstein2017geometric, GDLbook}. Almost all commonly employed GNNs can be cast as schemes where nodes repeatedly exchange messages with their neighbors \cite{gilmer_gnn}.   
Despite empirical success, GNNs are known to have some key limitations.
\looseness=-1

Notably, due to their strong local inductive bias, these GNNs and their higher-order counterparts are bounded in power by the WL hierarchy \citep{gin, Morris2019, Maron2019, WLsofar}. Furthermore, they fail to compute important graph properties such as cycles and connectivity \citep{Garg2020, substructures2020}. 
Topological descriptors
such as those based on PH can provide such information not just for GNNs but also the so-called topological neural networks (TNNs) that generalize message-passing to higher-dimensional topological domains, enabling more nuanced representations than the standard GNNs \citep{hajij2020cell, hajij2023topologicaldeeplearninggoing, Bodnar2021, pmlr-v139-bodnar21a, bodnar2022neural, pmlr-v235-papamarkou24a, TopNets2024, papillon2024, anonymous2025topological}.  

Specifically, PH employs  {\em filtrations} (or filter functions) that can track the evolution of key topological information; e.g., when a new component starts or the time interval during which a component survives (until two components merge, or indefinitely). This persistence information is typically encoded as (birth, death) pairs, or more generally tuples with additional entries, in a persistence diagram. The topological features derived from these persistence diagrams can be integrated into GNNs and TNNs to enhance their expressivity and boost their empirical performance \cite{TopNets2024}. PH is thus increasingly being utilized in (graph) machine learning \citep{Hofer2017, Hofer2019, Rieck19a, Zhao2019, pmlr-v108-zhao20d, pmlr-v108-carriere20a,  NEURIPS2021_e334fd9d, horn2022topological, yan2022neural, TREPH2023, trofimov2023, brilliantov2024compositional, pmlr-v235-balabin24a}. 

Understandably, there is a growing interest in designing more expressive PH descriptors for graphs \cite{ballester2024expressivity}. 
Recently, \citet{immonen2023going} analyzed the representational ability of {\em color-based} PH schemes, providing a complete characterization of the power of $0$-dimensional PH methods that employ vertex-level or edge-level filtrations using graph-theoretic notions. They also introduced RePHINE as a strictly more powerful descriptor than these methods. However, it turns out that RePHINE is still unable to separate some simple non-isomorphic graphs: e.g., on monochromatic graphs, RePHINE recovers the same information as vanilla PH. We, therefore, seek to design a more expressive PH descriptor here.
\looseness=-1

\begin{table*}[t!]
\caption{Overview of our theoretical results.}
    \vspace{-8pt}
\begin{center}
\begin{minipage}{12.5cm}
\begin{tcolorbox}[tab2,tabularx={ll},title={},boxrule=1pt,top=0.9ex,bottom=0.9ex,colbacktitle=lb!10!white,colframe=lb!70!white]
\textbf{Expressive Power of Filtration Methods (Section~\ref{sec::filtration})} \\ \addlinespace[0.3ex]
\quad Construction of SpectRe Diagrams & \Cref{def::RePHINE_Spec} \\ \addlinespace[0.3ex]
\quad $\operatorname{SpectRe}$ is isomorphism invariant  & \Cref{thm::well-defined} \\ \addlinespace[0.3ex] 
\quad $\operatorname{SpectRe} \succ \operatorname{RePHINE}$ and $\operatorname{SpectRe} \succ \text{Laplacian Spectrum}$ & \Cref{thm::expressivity} \\ \addlinespace[0.3ex]
\midrule
\textbf{Stability of RePHINE and SpectRe Diagrams (Section~\ref{sec::stability}):} \\ \addlinespace[0.3ex]
\quad Construction of a suitable metric $d_B^R$ on RePHINE & \Cref{def:REPHINE_dist} \\ \addlinespace[0.3ex]
\quad Construction of a suitable metric $d_B^{\operatorname{Spec}R}$ on $\operatorname{SpectRe}$ & \Cref{def:REPHINE_spec_dist} \\ \addlinespace[0.3ex]
\quad $\operatorname{RePHINE}$ is globally stable under $d_B^R$  & \Cref{thm::stability_REPHINE}
\\ \addlinespace[0.3ex] 
\quad $\operatorname{SpectRe}$ is locally stable under $d_B^{\operatorname{Spec}R}$  & \Cref{thm::rephine_spec_loc_stab}
\\ \addlinespace[0.3ex]
\quad Estimate on the Extent of Instability for $d_{B}^{\operatorname{Spec} R} $& \Cref{thm::bound_on_instability}\\ \addlinespace[0.3ex]
\quad Verifying $d_B^R$ and $d_B^{\operatorname{Spec} R}$ are metrics & \Cref{prop::metric} \\ \addlinespace[0.3ex]
\end{tcolorbox}
\end{minipage}
\end{center}

    \label{fig:theoretical-contributions}
    \vspace{-8pt}
\end{table*}

Our key idea is to enhance the persistence tuples of RePHINE with the evolving spectral information inherent in the subgraphs resulting from the filtration. Spectral information has been previously found useful in different learning tasks over graphs \cite{kreuzer2021rethinking,  pmlr-v162-wang22am, wang2022equivariant, lim2022sign, bo2023specformer,bo2023surveyspectralgraphneural, huang2023stability}, which motivates our investigations into extracting spectral signatures from the graph Laplacian.  

Laplacian appears in several flavors: unlike graph Laplacian (the $0$-th combinatorial Laplacian), rows in the $1$-st combinatorial Laplacian correspond to the edges of the graph (as opposed to vertices), and higher-dimensional persistent versions \citep{Wang_Nguyen_Wei_2020} have also been proposed for both. Since we are interested in graph-based filtrations, simply tracking the graph Laplacian of filtered subgraphs provides all the additional expressivity that the higher-order generalizations of the Laplacian can offer. Guided by this key insight, we introduce a new topological scheme called $\operatorname{SpectRe}$ that amalgamates RePHINE and graph Laplacians to be strictly more expressive than both these methods. 

Furthermore, we unravel the theoretical merits of SpectRe. Specifically, stability is a key desideratum of PH descriptors  \cite{Cohen-Steiner_Edelsbrunner_Harer_2006}. Stability of PH is typically assessed via a {\em bottleneck distance}, which provides a suitable metric on the space of persistence diagrams (obtained from different filtration functions). The bottleneck distance guarantees a minimum separation between the filtered spaces, which makes persistence diagrams an effective tool in several applications such as shape classification and retrieval \citep{Bergomi2022}. Stability results are known for standard persistence diagrams in \citep{Cohen-Steiner_Edelsbrunner_Harer_2006,MultiParameter2023,wang2024persistent}. However, since PH metrics for the persistence diagrams in edge-level filtrations are agnostic of the node color, even defining a suitable metric to quantify stability is challenging for methods such as RePHINE (that strictly generalizes PH with node colors). We first fill this gap with a novel generalization for the bottleneck distance that forms a metric on the space of persistence diagram of RePHINE, and establish that RePHINE is globally stable under this metric.
Interestingly, we proceed to show that SpectRe is not globally stable but only locally stable (which suffices for practical applications) under a similar metric, outside of a measure zero subset of all possible filtrations. Moreover, we quantify an explicit upper bound on how unstable SpectRe can be when ``crossing over a locus of instability" in terms of the complexity of the input graph.

To validate our theoretical analysis and show the effectiveness of SpectRe diagrams, we conduct experiments using two sets of synthetic datasets for assessing the expressive power of graph models (13 datasets in total), and multiple real datasets. Overall, these results demonstrate the higher expressivity of SpectRe and illustrate its potential for boosting the capabilities of graph neural networks on graph classification. 
We summarize our theoretical results in Table~\ref{fig:theoretical-contributions}, and relegate all the proofs to Appendix.
\looseness=-1

\section{Preliminaries}\label{sec::setup}

Unless mentioned otherwise, we will consider graphs $G = (V, E, c, X)$ with a finite vertex set $V$, edges $E \subseteq V \times V$, and a vertex-coloring function $c: V \to X$, where $X$ is a finite set denoting the space of available colors or features. All graphs are simple unless mentioned otherwise. Two graphs $G = (V, E, c, X)$ and $G' = (V', E', c', X')$ are \textbf{isomorphic} if there is a bijection $h: V \to V'$ of the vertices such that (1) the two coloring functions are related by $c = c' \circ h$ and (2) the edge $(v, w)$ is in $E$ if and only if $(h(v), h(w))$ is in $E'$.

We remark the first condition ensures that isomorphic graphs should share the same coloring set. For example, the graph $K_3$ with all vertices colored ``red" will not be isomorphic to the graph $K_3$ with all vertices colored ``blue", as it fails the first condition. For the rest of this work, we will assume that \textbf{\underline{all graphs that appear have the coloring set $X$}} (we can always, without loss of generality, take $X$ to be the union of their coloring sets). In this work, we are interested in graph features that change over time (i.e. persistent descriptors) as opposed to static features. Our notion of time will be a color filtration of a graph defined as follows. 

\begin{definition}[Coloring Filtrations]\label{def:coloring_fil}
 On a color set $X$, we choose a pair of functions $(f_v: X \to \R, f_e: X \times X \to \R_{>0})$ where $f_e$ is symmetric (i.e. $f_e(a, b) = f_e(b, a)$ for all $a, b \in X$). On a graph $G$ with a vertex color set $X$, the pair $(f_v, f_e)$ induces the following pair of functions $(F_v: V \cup E \to \mathbb{R}, F_e: V \cup E \to \mathbb{R}_{\geq 0})$.
\begin{enumerate}[noitemsep,topsep=0pt,left=8pt]
    \item For all $v \in V(G)$, $F_v(v) \coloneqq f_v(c(v))$. For all $e \in E(G)$ with vertices $v_1, v_2$, $F_v(e) = \max\{F_v(v_1), F_v(v_2)\}$. Intuitively, we are assigning the edge $e$ with the color $c(\argmax_{v_i} F_v(v_i))$ (the vertex color with a higher value under $f_v$).
    \item For all $v \in V(G)$, $F_e(v) = 0$. For all $e \in e(G)$ with vertices $v_1, v_2$, $F_e(e) \coloneqq f_e(c(v_1), c(v_2))$. Intuitively, we are assigning the edge $e$ with the color $(c(v_1), c(v_2))$.
\end{enumerate}
For each $t \in \R$, we write $G^{f_v}_t \coloneqq F_v^{-1}((-\infty, t])$ and $G^{f_e}_t \coloneqq F_e^{-1}((-\infty, t])$.
\end{definition}
Note we chose the notation ``$G^{f_v}_t$ and $G^{f_e}_t$" as opposed to ``$G^{F_v}_t$ and $G^{F_e}_t$" to emphasize that the function $G \mapsto (\{G^{f_v}_t\}_{t \in \Rbb}, \{G^{f_e}_t\}_{\Rbb})$ is well-defined for any graph $G$ with the common coloring set $X$. The lists $\{G^{f_v}_t\}_{t \in \R}$ and $\{G^{f_e}_t\}_{t \in \R}$ define a \textbf{vertex filtration} of $G$ by $F_v$ and an \textbf{edge filtration} of $G$ by $F_e$ respectively. It is clear that $G^{f_v}_t$ can only change when $t$ crosses a critical value in $\{f_v(c): c\in X\}$, and $G^{f_e}_t$ can only change when $t$ crosses a critical value in $\{f_e(c_1, c_2): (c_1, c_2) \in X \times X\}$. Hence, we can reduce both filtrations to finite filtrations at those critical values.

\subsection{Persistent Homology on Graphs}
The core idea of persistent homology is to track how topological features evolve throughout a filtration, accounting for their appearance/disappearance. In particular, we say a vertex $v$ (i.e. $0$-dimensional persistence information) is born when it first appears in a given filtration. When we merge two connected components represented by two vertices $v$ and $w$, we use a decision rule to kill off one of the vertices and mark the remaining vertex to represent the new connected component. Similarly, a cycle (i.e. $1$-dimensional persistence information) is born when it appears in a filtration, and it will never die. For a vertex $v$ or an edge $e$, we mark its \textbf{persistence pair} as the tuple $(b, d)$, where $b$ and $d$ indicate its birth and death time respectively (here $d = \infty$ if the feature never dies). 

\looseness=-1

\captionsetup[figure]{font=small}
\begin{wrapfigure}[10]{r}{0.4\textwidth}
    \centering
    \vspace{-12pt}
  \begin{adjustbox}{width=0.4\columnwidth}
\begin{tikzpicture}
    \node[draw, circle, fill=red!25, line width=0.5mm, minimum size=0.1cm] (A) at (0,-11) {\Large 2};
    \node[draw, circle, fill=red!25, line width=0.5mm, minimum size=0.1cm] (B) at (2,-11) {\Large 2};
    \node[draw, circle, fill=blue!25, line width=0.5mm, minimum size=0.1cm] (C) at (0,-9) {\Large 1};
    \node[draw, circle, fill=blue!25, line width=0.5mm, minimum size=0.1cm] (D) at (2,-9) {\Large 1};
    \node at (1, -8) {\LARGE $G$};
    \draw[line width=0.5mm] (A) -- (C);
    \draw[line width=0.5mm] (B) -- (D);
    \draw[line width=0.5mm] (A) -- (B);
    \draw[line width=0.5mm] (A) -- (0, -10);
    \draw[line width=0.5mm] (0, -10) -- (C);
    \draw[line width=0.5mm] (C) -- (D);
    \draw[line width=0.5mm] (B) -- (2, -10);
    \draw[line width=0.5mm] (2,-10) -- (D);

 \node at (5, -8) {\LARGE $t=1$};
  \node at (9, -8) {\LARGE $t=2$};

    \node[draw, circle, fill=blue!25, line width=0.5mm, minimum size=0.1cm] (C) at (4,-9) {\Large 1};
    \node[draw, circle, fill=blue!25, line width=0.5mm, minimum size=0.1cm] (D) at (6,-9) {\Large 1};
     \draw[line width=0.5mm] (C) -- (D);

         \node[draw, circle, fill=red!25, line width=0.5mm, minimum size=0.1cm] (A) at (8,-11) {\Large 2};
    \node[draw, circle, fill=red!25, line width=0.5mm, minimum size=0.1cm] (B) at (10,-11) {\Large 2};
    \node[draw, circle, fill=blue!25, line width=0.5mm, minimum size=0.1cm] (C) at (8,-9) {\Large 1};
    \node[draw, circle, fill=blue!25, line width=0.5mm, minimum size=0.1cm] (D) at (10,-9) {\Large 1};
    
    \draw[line width=0.5mm] (A) -- (C);
    \draw[line width=0.5mm] (B) -- (D);
    \draw[line width=0.5mm, color=black] (A) -- (B);
    \draw[line width=0.5mm, color=black] (A) -- (8, -10);
    \draw[line width=0.5mm, color=black] (8, -10) -- (C);
    \draw[line width=0.5mm, color=black] (C) -- (D);
    \draw[line width=0.5mm, color=black] (B) -- (10, -10);
    \draw[line width=0.5mm, color=black] (10,-10) -- (D);

    \node[draw,rounded corners=4.4mm, minimum width=3.3cm, align=left] at (5,-12.5) {0th: \textcolor{brown}{$(1,1)$ $(1,)$}\\
     1st:
     };

    \node[draw,rounded corners=4.4mm, minimum width=\fixedWidth, align=left] at (9,-12.5) {0th: $(1,1)$ \textcolor{brown}{$(1,\infty)\ (2, 2)\ (2, 2)$}\\
     1st: \textcolor{brown}{$(2,\infty)$}
     };
\end{tikzpicture}
\end{adjustbox}
\caption{Vertex-level PH: filtration and diagram induced by $f_v$. Here, we have that $f_v(\operatorname{blue})=1$ and $f_v(\operatorname{red})=2$.}
    \label{fig:enter-label}
\end{wrapfigure}
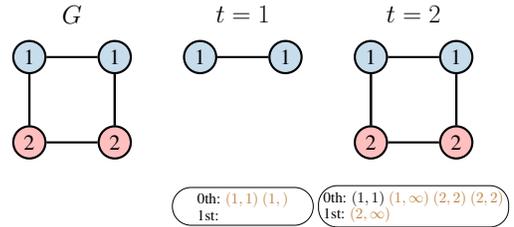
For color-based vertex and edge filtrations, there is a canonical way to calculate the persistence pairs of a graph for a given filtration. We refer the reader to Appendix A of \citet{immonen2023going} for a precise introduction. Following the terminology in \citet{immonen2023going}, we say a $0$-th dimensional persistence pair $(b, d)$ is a \textbf{real hole} if $d = \infty$, is an \textbf{almost hole} if $b \neq d < \infty$, and is a \textbf{trivial hole} if $b = d$. Note that edge-based filtrations do not have any trivial holes.

\begin{definition}\label{def:PH}
Let $f\!=\!(f_v, f_e)$ be on the coloring set $X$, as in Definition~\ref{def:coloring_fil}. The \textbf{persistent homology (PH) diagram} of a graph $G$ is a collection $\operatorname{PH}(G, f)$ composed of two lists $\operatorname{PH}(G, f)^0, \operatorname{PH}(G, f)^1$ where $\operatorname{PH}(G, f)^0$ are all persistence pairs in the vertex filtration $\{G^{f_v}_t\}_{t \in \Rbb}$ and $\operatorname{PH}(G, f)^1$ are all pairs in the edge filtration $\{G^{f_e}_t\}_{t \in \Rbb}$.
\looseness=-1
\end{definition}

\autoref{fig:enter-label} depicts a vertex filtration along with its 0-th dim PH diagram $\operatorname{PH}(G, f)^0$. We also provide in \autoref{fig:all_filtration} the persistence pairs of $\operatorname{PH}(G, f)^1$ for the same graph.

\subsection{RePHINE}
Despite the growing popularity of PH in graph representation learning, PH alone does not fully capture local color information. To overcome this limitation, \citet{immonen2023going} introduced RePHINE as a generalization of PH. 
\looseness=-1

\begin{definition}\label{def:RePHINE}
    Let $f= (f_v, f_e)$ be on $X$. The \textbf{RePHINE diagram} of a graph $G$ is a multi-set $\operatorname{RePHINE}(G, f) = \operatorname{RePHINE}(G, f)^0 \sqcup \operatorname{RePHINE}(G, f)^1$ of cardinality $|V(G)| + \beta^1_G$ where: 
    \begin{itemize}[noitemsep, topsep=0pt, left=8pt]
        \item {0-th dimensional component}: $\operatorname{RePHINE}(G, f)^0$ consists of tuples of the form $(b(v), d(v), \alpha(v), \gamma(v))$ for each vertex $v \in V(G)$. Here, $b(v)$ and $d(v)$ are the birth and death times of $v$ under the edge filtration $\{G^{f_e}_t\}_{t \in \Rbb}$, $\alpha(v) = f_v(c(v))$ and $\gamma(v) = \min_{\omega \in N(v)} f_e(c(v), c(w))$, where $N(v)$ denotes the neighboring vertices of $v$.\\

        The decision rule for which vertex to kill off is as follows - an almost hole $(b, d)$ corresponds to the merging of two connected components with vertex representatives $v_1$ and $v_2$. We kill off the vertex with a greater value under $\alpha$. If there is a tie, we kill off the vertex with a lower value under $\gamma$. If there is a further tie, Theorem 4 of \citet{immonen2023going} shows that the resulting diagram $\operatorname{RePHINE}(G, f)^0$ is independent of which vertex we kill off here.
        \item 1-st dimensional component: $\operatorname{RePHINE}(G, f)^1$ consists of tuples of the form $(1, d(e), 0, 0)$ for each $e$ in the first persistence diagram. $d(e)$ indicates the birth time of a cycle in the same filtration. In the definition of \citet{immonen2023going}, the birth of a cycle corresponds to what is called the death of a ``missing hole". This is why we use $d$ to indicate the birth time instead.
     \end{itemize}
\end{definition}

Theorem 5 of \citet{immonen2023going} asserts that $\operatorname{RePHINE}$ diagrams are strictly more expressive than $\operatorname{PH}$ diagrams. 

\section{Spectrum-informed Persistence Diagrams}\label{sec::filtration}

We now introduce a novel descriptor, called $\operatorname{SpectRe}$, that incorporates spectral information. We also demonstrate its isomorphism invariance and analyze its expressive power.
\looseness=-1

\subsection{Laplacian Spectrum and SpectRe}
Homology captures harmonic information (in the sense that the kernel of the graph Laplacian corresponds to the $0$-th homology), but there is some non-harmonic information we also want to account for. One option is to account for colors, as we have done with RePHINE. Another option is to augment RePHINE with spectral information. Building on this idea, we propose a new descriptor below.
\looseness=-1

\begin{definition}\label{def::RePHINE_Spec}
    Let $f= (f_v, f_e)$ be on the coloring set $X$. The \textbf{spectral RePHINE diagram} (SpectRe, in short) of a graph $G$ is a multi-set $\operatorname{SpectRe}(G, f) = \operatorname{SpectRe}(G, f)^0 \sqcup \operatorname{SpectRe}(G, f)^1$ of cardinality $|V(G)| + \beta^1_G$ where:
    \begin{itemize}[noitemsep, topsep=0pt,left=8pt]
        \item 0-th dimensional component: $\operatorname{SpectRe}(G, f)^0$ consists of tuples of the form $(b(v), d(v), \alpha(v), \gamma(v), \rho(v))$ for each vertex $v \in V(G)$. Here, $b, d, \alpha, \gamma$ are the same as Definition~\ref{def:RePHINE}, and $\rho(v)$ is the list of non-zero eigenvalues of the graph Laplacian of the connected component $v$ is in when it dies at time $d(v)$.
        
        \item 1-st dimensional component: $\operatorname{SpectRe}(G, f)^1$ consists of tuples of the form $(1, d(e), 0, 0, \rho(e))$ for each $e$ in the first persistence diagram. Here, $d(e)$ indicates the birth time of a cycle given by the edge $e$. $\rho(e)$ denotes the non-zero eigenvalues the graph Laplacian of the connected component $e$ is in when it is born at time $d(e)$.
     \end{itemize}
\end{definition}

For completeness, we also define the \textbf{Laplacian spectrum (LS) diagram} of a graph $G$ as the projection of $\operatorname{SpectRe}(G)$ to its $b$-component, $d$-component, and $\rho$-component. Figure~\ref{fig:all_filtration} shows an example of computing $\operatorname{SpectRe}, \operatorname{RePHINE},$ and $\operatorname{LS}$ on the same graph and filtration.

\begin{figure*}[!t]
  \centering
  \begin{adjustbox}{width=\textwidth}
  \begin{tikzpicture}
    \node[draw, circle, fill=red!25, line width=0.5mm, minimum size=0.1cm] (A) at (0,0) {\LARGE 2};
    \node[draw, circle, fill=red!25, line width=0.5mm, minimum size=0.1cm] (B) at (2,0) {\LARGE 2};
    \node[draw, circle, fill=blue!25, line width=0.5mm, minimum size=0.1cm] (C) at (0,2) {\LARGE 1};
    \node[draw, circle, fill=blue!25, line width=0.5mm, minimum size=0.1cm] (D) at (2,2) {\LARGE 1};
    \node at (1, 3) {\LARGE $G$};
    \draw[line width=1.0mm] (A) edge["3"] (C);
    \draw[line width=1.0mm] (B) edge["3"] (D);
    \draw[line width=1.0mm, color=red!50] (A) edge["1"] (B);
    \draw[line width=1.0mm, color=red!50] (A) -- (0, 1);
    \draw[line width=1.0mm, color=blue!50] (0, 1) -- (C);
    \draw[line width=1.0mm, color=blue!50] (C) edge["2"] (D);
    \draw[line width=1.0mm, color=red!50] (B) -- (2, 1);
    \draw[line width=1.0mm, color=blue!50] (2,1) -- (D);

    \node[draw, circle, fill=red!25, line width=0.5mm, minimum size=0.1cm] (A) at (4,0) {\LARGE 2};
    \node[draw, circle, fill=red!25, line width=0.5mm, minimum size=0.1cm] (B) at (6,0) {\LARGE 2};
    \node[draw, circle, fill=blue!25, line width=0.5mm, minimum size=0.1cm] (C) at (4,2) {\LARGE 1};
    \node[draw, circle, fill=blue!25, line width=0.5mm, minimum size=0.1cm] (D) at (6,2) {\LARGE 1};
    \node at (5, 3) {\LARGE $t=0$};

    \node[draw, circle, fill=red!25, line width=0.5mm, minimum size=0.1cm] (A) at (8,0) {\LARGE 2};
    \node[draw, circle, fill=red!25, line width=0.5mm, minimum size=0.1cm] (B) at (10,0) {\LARGE 2};
    \node[draw, circle, fill=blue!25, line width=0.5mm, minimum size=0.1cm] (C) at (8,2) {\LARGE 1};
    \node[draw, circle, fill=blue!25, line width=0.5mm, minimum size=0.1cm] (D) at (10,2) {\LARGE 1};
    \node at (9, 3) {\LARGE $t=1$};
    \draw[line width=1.0mm, color=red!50] (A) edge["1"] (B);

        \node[draw, circle, fill=red!25, line width=0.5mm, minimum size=0.1cm] (A) at (12,0) {\LARGE 2};
    \node[draw, circle, fill=red!25, line width=0.5mm, minimum size=0.1cm] (B) at (14,0) {\LARGE 2};
    \node[draw, circle, fill=blue!25, line width=0.5mm, minimum size=0.1cm] (C) at (12,2) {\LARGE 1};
    \node[draw, circle, fill=blue!25, line width=0.5mm, minimum size=0.1cm] (D) at (14,2) {\LARGE 1};
    \node at (13, 3) {\LARGE $t=2$};
    \draw[line width=1.0mm, color=red!50] (A) edge["1"] (B);
    \draw[line width=1.0mm, color=blue!50] (C) edge["2"] (D);

    \node[draw, circle, fill=red!25, line width=0.5mm, minimum size=0.1cm] (A) at (16,0) {\LARGE 2};
    \node[draw, circle, fill=red!25, line width=0.5mm, minimum size=0.1cm] (B) at (18,0) {\LARGE 2};
    \node[draw, circle, fill=blue!25, line width=0.5mm, minimum size=0.1cm] (C) at (16,2) {\LARGE 1};
    \node[draw, circle, fill=blue!25, line width=0.5mm, minimum size=0.1cm] (D) at (18,2) {\LARGE 1};
    \node at (17, 3) {\LARGE $t=3$};
    \draw[line width=1.0mm] (A) edge["3"] (C);
    \draw[line width=1.0mm] (B) edge["3"] (D);
    \draw[line width=1.0mm, color=red!50] (A) edge["1"] (B);
    \draw[line width=1.0mm, color=red!50] (A) -- (16, 1);
    \draw[line width=1.0mm, color=blue!50] (16, 1) -- (C);
    \draw[line width=1.0mm, color=blue!50] (C) edge["2"] (D);
    \draw[line width=1.0mm, color=red!50] (B) -- (18, 1);
    \draw[line width=1.0mm, color=blue!50] (18,1) -- (D);
     \node at (1, -1.5) {\LARGE $\operatorname{SpectRe}$};
     \node[draw,rounded corners=4.4mm, minimum width=\fixedWidth, align=left] at (5,-1.5) {$(0,,,,)$ $(0,,,,)$\\
     $(0,,,,)$ $(0,,,,)$\\
     $(1,,0,0,)$
     };

    \node[draw,rounded corners=4.4mm, minimum width=\fixedWidth, align=left] at (9,-1.5) {\textcolor{brown}{$(0,1,2,1,\{2\})$ $(0,,2,1,)$}\\
     $(0,,,,)$ $(0,,,,)$\\
     $(1,,0,0,)$
     };

    \node[draw,rounded corners=4.4mm, minimum width=\fixedWidth, align=left] at (13,-1.5) {$(0,1,2,1,\{2\})$ $(0,,2,1,)$\\
     \textcolor{brown}{$(0,2,1,2,\{2\})$ $(0,,1,2,)$}\\
     $(1,,0,0,)$
     };

    \node[draw,rounded corners=4.4mm, minimum width=\fixedWidth, align=left] at (17.8,-1.5) {$(0,1,2,1,\{2\})$ \textcolor{brown}{$(0,3,2,1,\{2, 2, 4\})$}\\
     $(0,2,1,2,\{2\})$ \textcolor{brown}{$(0,\infty,1,2,\{2, 2,4\})$}\\
     \textcolor{brown}{$(1,3,0,0,\{2, 2,4\})$}
     };


     \node at (1, -3) {\LARGE $\operatorname{RePHINE}$};
     \node[draw,rounded corners=4.4mm, minimum width=\fixedWidth, align=left] at (5,-3) {$(0,,,)$ $(0,,,)$\\
     $(0,,,)$ $(0,,,)$\\
     $(1,,0,0)$
     };

    \node[draw,rounded corners=4.4mm, minimum width=\fixedWidth, align=left] at (9,-3) {\textcolor{brown}{$(0,1,2,1)$ $(0,,2,1)$}\\
     $(0,,,)$ $(0,,,)$\\
     $(1,,0,0)$
     };

    \node[draw,rounded corners=4.4mm, minimum width=\fixedWidth, align=left] at (13,-3) {$(0,1,2,1)$ $(0,,2,1)$\\
     \textcolor{brown}{$(0,2,1,2)$ $(0,,1,2)$}\\
     $(1,,0,0)$
     };

    \node[draw,rounded corners=4.4mm, minimum width=\fixedWidth, align=left] at (17,-3) {$(0,1,2,1)$ \textcolor{brown}{$(0,3,2,1)$}\\
     $(0,2,1,2)$ \textcolor{brown}{$(0,\infty,1,2)$}\\
     \textcolor{brown}{$(1,3,0,0)$}
     };

    \node at (1, -4.5) {\LARGE $\operatorname{PH}(f)^1$};
     \node[draw,rounded corners=4.4mm, minimum width=\fixedWidth, align=left] at (5,-4.5) {0th: $(0,)$ $(0,)$\\
     $(0,)$ $(0,)$\\
     1st:
     };

    \node[draw,rounded corners=4.4mm, minimum width=\fixedWidth, align=left] at (9,-4.5) {0th: \textcolor{brown}{$(0,1)$} $(0,)$\\
     $(0,)$ $(0,)$\\
     1st:
     };

    \node[draw,rounded corners=4.4mm, minimum width=\fixedWidth, align=left] at (13,-4.5) {0th: $(0,1)$ $(0,)$\\
     \textcolor{brown}{$(0,2)$} $(0,)$\\
     1st: 
     };

    \node[draw,rounded corners=4.4mm, minimum width=\fixedWidth, align=left] at (17,-4.5) {0th: $(0,1)$ \textcolor{brown}{$(0,3)$}\\
     $(0,2)$ \textcolor{brown}{$(0,\infty)$}\\
     1st: \textcolor{brown}{$(3,\infty)$}
     };

     \node at (1, -6) {\LARGE $\operatorname{LS}$};
     \node[draw,rounded corners=4.4mm, minimum width=\fixedWidth, align=left] at (5,-6) {$(0,,)$ $(0,,)$\\
     $(0,,)$ $(0,,)$\\
     $(1,,)$
     };

    \node[draw,rounded corners=4.4mm, minimum width=\fixedWidth, align=left] at (9,-6) {\textcolor{brown}{$(0,1,\{2\})$}, $(0,,)$\\
     $(0,,)$ $(0,,)$\\
     $(1,,)$
     };

    \node[draw,rounded corners=4.4mm, minimum width=\fixedWidth, align=left] at (13,-6) {$(0,1,\{2\})$ $(0,,)$\\
    \textcolor{brown}{$(0,2,\{2\})$} $(0,,)$\\
     $(1,,)$
     };

    \node[draw,rounded corners=4.4mm, minimum width=\fixedWidth, align=left] at (17,-6) {$(0,1,\{2\})$ \textcolor{brown}{$(0,3,\{2, 2, 4\})$}\\
     $(0,2,\{2\})$ \textcolor{brown}{$(0,\infty,\{2, 2,4\})$}\\
     \textcolor{brown}{$(1,3,\{2, 2,4\})$}
     };

\end{tikzpicture}
  \end{adjustbox}
  \caption{Example computing SpectRe, RePHINE, (edge-level) PH, and LS on a graph $G$ with $f_v(\operatorname{blue}) = 1, f_v(\operatorname{red}) = 2$ and $f_e(\operatorname{red}) = 1, f_e(\operatorname{blue}) = 2, f_e(\operatorname{red-blue}) = 3$. The graph $G$ has an edge filtration by $f_e$.}
  \label{fig:all_filtration}
\end{figure*}
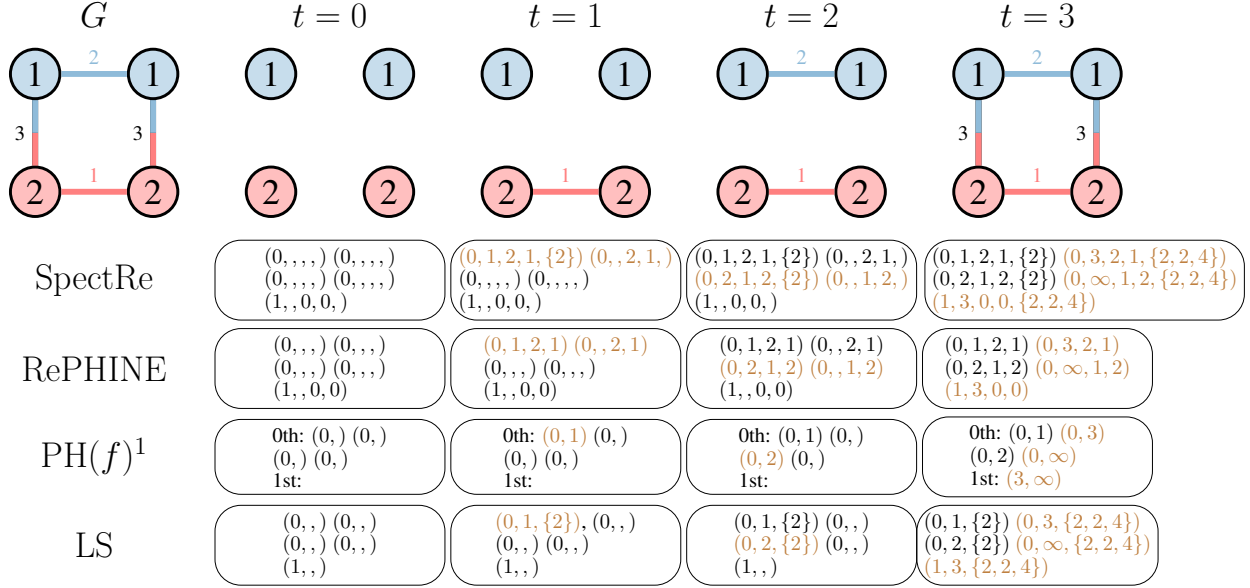

\subsection{Expressive Power}

Here, we compare the expressivity of SpectRe to both RePHINE and Laplacian spectrum, showing that SpectRe is strictly more expressive than the other two. We also discuss an alternative, more elaborate definition of SpectRe we initially considered using the ideas of a ``persistent Laplacian" in \citet{Wang_Nguyen_Wei_2020}. The upshot is that we show this approach is as expressive as our current definition.
\looseness=-1

To be precise on what we mean by {\em expressivity}, let $X, Y$ be two graph isomorphism invariants. We say $X$ has \textbf{at least the same expressivity} as $Y$ (denoted $X\succeq Y$) if for all non-isomorphic graphs $G$ and $H$ that $Y$ can tell apart, $X$ can also tell them apart. We say $X$ is \textbf{strictly more expressive} than $Y$ (denoted $X \succ Y$) if, in addition, there exist two non-isomorphic graphs $G$ and $H$ that $Y$ cannot tell apart but $X$ can. We say $X$ and $Y$ have the \textbf{same expressive power} (denoted $X = Y$) if $X \succeq Y$ and $Y \succeq X$. We say $X$ and $Y$ are \textbf{incomparable} if there are two non-isomorphic graphs $G$ and $H$ that $X$ can tell apart but $Y$ cannot, and vice versa. 

Before analyzing SpectRe’s expressivity, we must first verify its invariance to graph isomorphism.
\looseness=-1
\begin{resbox}
\begin{theorem}\label{thm::well-defined}
    Suppose $G$ and $H$ are isomorphic graphs with the same color set $X$. Let $f = (f_v, f_e)$ be any filtration functions on $X$, then $\operatorname{SpectRe}(G, f)$ is equal to $\operatorname{SpectRe}(H, f)$.
\end{theorem}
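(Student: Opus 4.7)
The plan is to leverage the isomorphism invariance of the RePHINE diagram, established as Theorem 4 of \citet{immonen2023going}, and then separately verify that the newly added spectral coordinates $\rho(v)$ and $\rho(e)$ transform consistently under an isomorphism $h: V(G) \to V(H)$. Since the tuples $(b, d, \alpha, \gamma)$ already coincide (up to the bijection $h$) by RePHINE's invariance, it suffices to establish that $\rho(h(v)) = \rho(v)$ for each vertex $v$ and analogously $\rho(h(e)) = \rho(e)$ for each edge creating a cycle.

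The first step is to observe that the induced filtration functions depend only on vertex and edge colors, which $h$ preserves by hypothesis. Concretely, since $F_e(v) = 0$ for all vertices and $F_e(e) = f_e(c(v_1), c(v_2))$ depends only on the endpoint colors, the isomorphism $h$ restricts to a (color-preserving) graph isomorphism $G^{f_e}_t \to H^{f_e}_t$ for every $t \in \R$. Consequently, $h$ sends the connected component of $G^{f_e}_t$ containing $v$ bijectively onto the connected component of $H^{f_e}_t$ containing $h(v)$, while preserving adjacency.

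Next, I would fix a vertex $v$ and set $t = d(v)$. By the previous step, the connected component of $v$ in $G^{f_e}_t$ is isomorphic as an abstract graph to the connected component of $h(v)$ in $H^{f_e}_t$. A graph isomorphism conjugates the combinatorial Laplacians by a permutation matrix, so the two Laplacians are similar and in particular share identical multisets of non-zero eigenvalues. This yields $\rho(v) = \rho(h(v))$. The identical argument, applied to an edge $e$ that completes a cycle at time $d(e)$, gives $\rho(e) = \rho(h(e))$, handling the 1-st dimensional component.

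The main subtlety I anticipate lies in the decision rule governing which vertex representative is killed when two components merge: if ties occur in both $\alpha$ and $\gamma$, there may be multiple legitimate choices of representative in either graph. However, Theorem 4 of \citet{immonen2023going} guarantees that the resulting RePHINE multiset is independent of any such choice, so by selecting matched choices on $G$ and $H$ via $h$ I ensure the birth/death pairings (and hence the associated connected components at each relevant time) respect the isomorphism. Combining this with the spectral equality above shows the augmented tuples agree, so the multisets $\operatorname{SpectRe}(G,f)$ and $\operatorname{SpectRe}(H,f)$ coincide.
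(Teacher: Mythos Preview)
Your approach is close to the paper's and the main line of reasoning---transport the filtration via $h$, observe that connected components correspond under $h$, and use that permutation-conjugate Laplacians share spectra---is sound. However, there is a genuine gap: you do not establish that $\operatorname{SpectRe}$ is \emph{well-defined}, i.e., independent of the tie-breaking choices made when $\alpha$ and $\gamma$ both tie. Theorem 4 of \citet{immonen2023going} gives you this for the $(b,d,\alpha,\gamma)$ tuple, but it says nothing about the new $\rho$-component. Your ``matched choices via $h$'' argument only shows that for each choice-sequence $\sigma$ on $G$ there is a corresponding choice-sequence $h(\sigma)$ on $H$ yielding the same diagram; it does not show that two different choice-sequences on $G$ itself yield the same multiset. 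Without that, $\operatorname{SpectRe}(G,f)$ is not a single object, and the equality in the theorem is not even well-posed.

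The paper closes this gap with a short but essential observation: at any filtration step where components $T_1,\ldots,T_n$ merge into a single component $C$, exactly $n-1$ representatives die, and \emph{every} dying vertex receives the same $\rho$, namely the spectrum of $C$. The resulting contribution to the multiset---$n-1$ copies of $\operatorname{spec}(C)$---is therefore independent of which representative survives. An analogous remark handles cycle births (the component containing a newborn cycle at time $t$ does not depend on the order in which simultaneous edges are processed). Once you add this well-definedness argument, your transport-via-$h$ reasoning completes the proof; indeed, at that point the two proofs are essentially the same.
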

\end{resbox}

Now we state the expressivity comparisons of SpectRe, RePHINE, and the Laplacian spectrum (LS). 
\begin{resbox}
\begin{theorem}\label{thm::expressivity}
    $\operatorname{SpectRe}$ is strictly more expressive than both $\operatorname{RePHINE}$ and $\operatorname{LS}$, which are incomparable to each other. Furthermore, the counterexamples are illustrated in Figure~\ref{fig:spec-plots}.
\end{theorem}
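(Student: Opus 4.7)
My plan is to decompose the theorem into three modular claims and handle them in order. First, I would establish the non-strict directions $\operatorname{SpectRe}\succeq\operatorname{RePHINE}$ and $\operatorname{SpectRe}\succeq\operatorname{LS}$. By Definition~\ref{def::RePHINE_Spec}, every $0$-dim SpectRe tuple has the form $(b,d,\alpha,\gamma,\rho)$ and every $1$-dim tuple has the form $(1,d,0,0,\rho)$; dropping the $\rho$ coordinate recovers exactly the corresponding RePHINE tuple of Definition~\ref{def:RePHINE}, while dropping the $\alpha$ and $\gamma$ coordinates recovers the $\operatorname{LS}$ tuple by construction. Since these are deterministic coordinate projections at the multiset level, equal SpectRe diagrams force equal RePHINE and equal LS diagrams, which is the two desired inequalities by contrapositive.

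Next, I observe that the remaining strict expressivity and incomparability claims reduce to exhibiting only two pairs of non-isomorphic graphs. Call them Pair~A, distinguished by RePHINE but not LS, and Pair~B, distinguished by LS but not RePHINE. Pair~B combined with the projection above shows both $\operatorname{SpectRe}\succ\operatorname{RePHINE}$ and $\operatorname{LS}\not\preceq\operatorname{RePHINE}$; Pair~A symmetrically yields $\operatorname{SpectRe}\succ\operatorname{LS}$ and $\operatorname{RePHINE}\not\preceq\operatorname{LS}$. Together, the two pairs give the incomparability of $\operatorname{RePHINE}$ and $\operatorname{LS}$. Both pairs appear in Figure~\ref{fig:spec-plots}, and it remains only to verify the claimed behavior on each.

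For Pair~B, the natural candidate is a pair of monochromatic non-isomorphic graphs sharing the same number of vertices, edges, and connected components, but with distinct Laplacian spectra. As already noted in the introduction, on monochromatic inputs RePHINE collapses to vanilla $0$-dim/$1$-dim PH, which records only component and cycle counts, so the two RePHINE diagrams agree; the $\rho$ coordinate, by contrast, records the actual Laplacian spectrum of each connected component at its birth or death time and therefore separates the pair. For Pair~A, one takes two cospectral non-isomorphic graphs together with vertex colorings chosen so that some $\alpha$ or $\gamma$ entry in the RePHINE diagrams disagrees, while the edge filtration $\{G^{f_e}_t\}_{t\in\Rbb}$ produces subgraphs that remain pairwise cospectral at every time $t$; in that case the $\rho$ coordinates, and hence the LS diagrams, coincide.

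The main obstacle is the Pair~A construction: cospectrality of the full graphs does not in general imply cospectrality of all intermediate subgraphs produced by an edge filtration, so a naive choice of Schwenk-type isospectral mates typically fails. The fix is to pick a small, symmetric pair together with a color filtration whose edge critical values introduce edges in matched pairs between the two graphs (respecting an explicit spectrum-preserving bijection of edge classes), guaranteeing spectral agreement at every filtration level. Once such a pair is identified, the remainder of the verification is a mechanical finite computation of birth/death times and Laplacian spectra at each critical value, following the rules in Definitions~\ref{def:RePHINE} and~\ref{def::RePHINE_Spec}.
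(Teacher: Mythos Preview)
Your decomposition and the projection argument for $\operatorname{SpectRe}\succeq\operatorname{RePHINE}$ and $\operatorname{SpectRe}\succeq\operatorname{LS}$ match the paper's proof exactly, as does your Pair~B strategy: the paper uses the monochromatic star $K_{1,3}$ versus the path $P_4$ from Figure~\ref{fig:spec-plots}(a), with Laplacian spectra $\{0,1,1,4\}$ and $\{0,2,2\pm\sqrt 2\}$, while RePHINE collapses to component and cycle counts on monochromatic inputs.

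For Pair~A, however, you are making the construction far harder than necessary. You cast the obstacle as finding non-isomorphic cospectral mates whose edge-filtration subgraphs stay cospectral at \emph{every} intermediate level, and propose engineering this via Schwenk-type pairs with matched edge classes. The paper's Figure~\ref{fig:spec-plots}(b) bypasses this entirely by taking the \emph{same} uncolored graph $K_{1,3}$ twice: $G$ with a red center and three blue leaves, $H$ with a blue center and three red leaves. Every edge in both graphs then carries the single edge color $(\text{red},\text{blue})$, so for any $f_e$ the filtrations $\{G^{f_e}_t\}$ and $\{H^{f_e}_t\}$ are literally identical as uncolored graphs at every $t$; all $(b,d)$ pairs and all $\rho$-components therefore agree automatically, and $\operatorname{LS}(G,f)=\operatorname{LS}(H,f)$ for every $f$ with no cospectrality argument needed. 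Meanwhile the $\alpha$-multisets are $\{f_v(\text{red}),f_v(\text{blue}),f_v(\text{blue}),f_v(\text{blue})\}$ versus $\{f_v(\text{blue}),f_v(\text{red}),f_v(\text{red}),f_v(\text{red})\}$, so RePHINE separates whenever $f_v(\text{red})\neq f_v(\text{blue})$. The key observation you are missing is that the Laplacian is color-blind and the LS diagram sees colors only through $f_e$: hence re-using one underlying graph with an identical multiset of edge colors trivially satisfies the ``cospectral at every filtration level'' requirement you identified as the main obstacle. Your route would work if carried out, but the paper's construction reduces Pair~A to a two-line check rather than a search for structured isospectral families.
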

\end{resbox}

\captionsetup[figure]{font=small}
\begin{wrapfigure}[12]{r}{0.35\textwidth}
  \centering
  \vspace{-8pt}
  \begin{adjustbox}{width=\linewidth}
  \begin{tikzpicture}
    \node[draw, circle, fill=red!50, line width=0.5mm, minimum size=0.5cm] (A) at (-8,0) {};
    \node[draw, circle, fill=red!50, line width=0.5mm, minimum size=0.5cm] (B) at (-6,0) {};
    \node[draw, circle, fill=red!50, line width=0.5mm, minimum size=0.5cm] (C) at (-6,-2) {};
    \node[draw, circle, fill=red!50, line width=0.5mm, minimum size=0.5cm] (D) at (-6,2) {};
    \node at (-7, -3) {\Huge $G$};
    \draw[line width=1.0mm] (A) -- (B);
    \draw[line width=1.0mm] (A) -- (C);
    \draw[line width=1.0mm] (A) -- (D);

        \node[draw, circle, fill=red!50, line width=0.5mm, minimum size=0.5cm] (A) at (-5,-2) {};
    \node[draw, circle, fill=red!50, line width=0.5mm, minimum size=0.5cm] (B) at (-3,0) {};
    \node[draw, circle, fill=red!50, line width=0.5mm, minimum size=0.5cm] (C) at (-3,-2) {};
    \node[draw, circle, fill=red!50, line width=0.5mm, minimum size=0.5cm] (D) at (-3,2) {};
    \node at (-4, -3) {\Huge $H$};
    \draw[line width=1.0mm] (A) -- (C);
    \draw[line width=1.0mm] (B) -- (C);
    \draw[line width=1.0mm] (D) -- (B);

    \node[draw, circle, fill=red!50, line width=0.5mm, minimum size=0.5cm] (A) at (-1,0) {};
    \node[draw, circle, fill=blue!50, line width=0.5mm, minimum size=0.5cm] (B) at (1,0) {};
    \node[draw, circle, fill=blue!50, line width=0.5mm, minimum size=0.5cm] (C) at (1,-2) {};
    \node[draw, circle, fill=blue!50, line width=0.5mm, minimum size=0.5cm] (D) at (1,2) {};
\node at (0, -3) {\Huge $G$};
    \draw[line width=1.0mm] (A) -- (B);
    \draw[line width=1.0mm] (A) -- (C);
    \draw[line width=1.0mm] (A) -- (D);

    \node[draw, circle, fill=blue!50, line width=0.5mm, minimum size=0.5cm] (A) at (2,0) {};
    \node[draw, circle, fill=red!50, line width=0.5mm, minimum size=0.5cm] (B) at (4,0) {};
    \node[draw, circle, fill=red!50, line width=0.5mm, minimum size=0.5cm] (C) at (4,-2) {};
    \node[draw, circle, fill=red!50, line width=0.5mm, minimum size=0.5cm] (D) at (4,2) {};
\node at (3, -3) {\Huge $H$};
    \draw[line width=1.0mm] (A) -- (B);
    \draw[line width=1.0mm] (A) -- (C);
    \draw[line width=1.0mm] (A) -- (D);

\node at (-5.5, 3.5) {\LARGE \textbf{(a)}};
\node at (1.5, 3.5) {\LARGE \textbf{(b)}};

\draw[rounded corners=4.4mm, line width=0.5mm] (-1.5,4) rectangle (4.5,-3.5);
\draw[rounded corners=4.4mm, line width=0.5mm] (-8.5,4) rectangle (-2.5,-3.5);

\end{tikzpicture}
  \end{adjustbox}
\caption{(a) Graphs that $\operatorname{SpectRe}$ and $\operatorname{LS}$ can separate but not $\operatorname{RePHINE}$. (b) Graphs that $\operatorname{SpectRe}$ and $\operatorname{RePHINE}$ can separate but not $\operatorname{LS}$.}
\label{fig:spec-plots}
\end{wrapfigure}
The intuition behind why RePHINE cannot differentiate the examples in Figure~\ref{fig:spec-plots} is that it has the same amount of expressive power as counting the number of connected components and independent cycles (i.e. the harmonic components of the Laplacian) on a monochromatic graph (see Lemma~\ref{lem::mono}). Introducing spectral information (i.e. the non-harmonic components of the Laplacian) can give lens to a wider scope to distinguish these graphs.
\looseness=-1

It is well-known that the kernel of the graph Laplacian of $G$ has the same dimension as the number of connected components (i.e. $0$-th Betti number) of $G$. There is a generalization of graph Laplacians to ``combinatorial Laplacians" (whose kernels yield higher Betti numbers) and even more generally ``a persistent version of Laplacians" \citep{Wang_Nguyen_Wei_2020} (whose kernels yield the persistent Betti numbers). One might ask whether incorporating their spectrum into $\operatorname{RePHINE}$ would result in greater expressivity than $\operatorname{SpectRe}$. 
\looseness=-1

\begin{proposition}\label{thm::persistent_Laplacian_no_more}
    Consider an alternative topological descriptor on $G = (V, E, c, X), f = (f_v, f_e)$ given by $\Phi(f) \coloneqq \{(b(v), d(v), \alpha(v), \gamma(v), \rho'(v)\}_{v \in V} \sqcup \{(b(e), d(e), \alpha(e), \gamma(v), \rho'(e)\}_{e \in \{\text{ind. cycles}\}}$, where the first four components are the same as $\operatorname{RePHINE}$ (Definition~\ref{def:RePHINE}), $\rho'(v)$ returns the spectrum of the persistent Laplacians in \cite{Wang_Nguyen_Wei_2020} on the connected component $v$ dies in, and $\rho'(e)$ returns the spectrum of the persistent Laplacians of the connected component $e$ is born in. \\
    
    \ul{If we remember which vertex and edge is assigned to which tuple in the computation}, $\Phi$ has the same expressive power as $\operatorname{SpectRe}$.
\end{proposition}

In other words, augmenting RePHINE with the spectrum of persistent Laplacians is just as expressive as including the spectrum of the graph Laplacian. Note that the assumption for remembering the tuple assignment is always satisfied in practice, as computing SpectRe, RePHINE, or PH-based methods in general would go through such assignments (see Algorithm 1 and 2 of \citep{immonen2023going}). It then suffices for us to focus on graph Laplacians hereafter.

We provide a self-contained proof of Proposition~\ref{thm::persistent_Laplacian_no_more} in Appendix~\ref{subsec::spectral_express} for completeness. Note that the comparison in Proposition~\ref{thm::persistent_Laplacian_no_more} can be derived from a fact noted in Example 2.3 of \citep{M_moli_2022} and Section 6 of \citep{10.5555/3618408.3618695}. We remark that it is unclear how Proposition~\ref{thm::persistent_Laplacian_no_more} would follow without assuming the underlined condition that we remember the tuple assignment, which is a situation that \citep{10.5555/3618408.3618695, M_moli_2022} did not encounter since their design of descriptors is different from ours.

\section{Stability of RePHINE and SpectRe}\label{sec::stability}

Let $f = (f_v: X \to \Rbb, f_e: X \times X \to \Rbb_{>0})$ and $g = (g_v: X \to \Rbb, g_e: X \times X \to \Rbb_{>0})$ be two pairs of functions on $X$. For a graph $G$ with the coloring set $X$, we would ideally like a way to measure how much the diagrams we constructed in Section~\ref{sec::setup} differ in SpectRe of $f$ and $g$. One way to measure this is to impose a suitable metric on the space of diagrams and obtain a stable bound. For $\operatorname{PH}$ diagrams on $G$, this suitable metric is called the \textbf{bottleneck distance} and has been classically shown to be bounded by $||f_v - g_v||_{\infty} + ||f_e - g_e||_{\infty}$ \citep{Cohen-Steiner_Edelsbrunner_Harer_2006}. In this section, we will discuss a generalization of this result to RePHINE and $\operatorname{SpectRe}$ diagrams.

Let us examine $\operatorname{RePHINE}$ first. One naive proposal for a suitable ``metric" on $\operatorname{RePHINE}$ diagrams would be to restrict only to the first two components ($b$ and $d$) of the multi-set and use the classical bottleneck distance for persistence diagrams. However, this approach ignores any information from the $\alpha$ and $\gamma$ components and will fail the non-degeneracy axioms for a metric. Thus, we need to modify the metric on RePHINE diagrams to take into account of its $\alpha$ and $\gamma$ components.

For ease of notation, we introduce the following definition to simplify our constructions.
\begin{notation}\label{notation::bottleneck}
    Let $A$ and $B$ be two finite multi-subsets of the same cardinality in a metric space $(X, d_X)$, we write $\operatorname{Bott}(A, B, d_X) \coloneqq \inf_{\pi \in \text{bijections A to B}} \max_{p \in A} d_X(p, \pi(p))$. Informally, $\operatorname{Bott}(A, B, d_X)$ is the infimum of distances for which there is a bijection between $A$ and $B$ in $X$.
\end{notation}

Note that when $X = \Rbb^2$ and $d_X$ is the $\ell_{\infty}$-norm, we recover the definition of bottleneck distance for PH diagrams. Now we will define a metric for RePHINE diagrams.

\begin{definition}\label{def:REPHINE_dist}
Let $\operatorname{RePHINE}(G, f)$ and $\operatorname{RePHINE}(G, g)$ be the two associated RePHINE diagrams for $G$ respectively. We define the \textbf{bottleneck distance} as
$d_B^R(\operatorname{RePHINE}(G, f), \operatorname{RePHINE}(G, g)) \coloneqq d_B^{R,0}(\operatorname{RePHINE}(G, f)^0, \operatorname{RePHINE}(G, g)^0)\ + d_B^{R,1}(\operatorname{RePHINE}(G, f)^1, \operatorname{RePHINE}(G, g)^1)$.

Here, $d^{R,0}_B(\bullet, \bullet)$ and $d^{R,1}_B(\bullet, \bullet)$ are both given by $\operatorname{Bott}(\bullet, \bullet, d)$, where $d$ is defined on $\Rbb^4$ as $$d((b_0, d_0, \alpha_0, \gamma_0), (b_1, d_1, \alpha_1, \gamma_1)) = \max\{|b_1 - b_0|, |d_1 - d_0|\} + |\alpha_1 - \alpha_0| + |\gamma_1 - \gamma_0|.$$
\end{definition}

Similarly, we define a metric for $\operatorname{SpectRe}$ diagrams.
\begin{definition}\label{def:REPHINE_spec_dist}
We define the \textbf{bottleneck distance} between $\operatorname{SpectRe}$ diagrams as
\begin{align*}
d_B^{\operatorname{Spec}R}(\operatorname{SpectRe}(G, f), \operatorname{SpectRe}(G, g)) \coloneqq &d_B^{\operatorname{Spec} R,0}(\operatorname{SpectRe}(G, f)^0, \operatorname{SpectRe}(G, g)^0)\\
+ &d_B^{\operatorname{Spec} R, 1}(\operatorname{SpectRe}(G, f)^1, \operatorname{SpectRe}(G, g)^1)    
\end{align*}
Here, $d^{\operatorname{Spec} R,0}_B(\bullet, \bullet)$ and $d^{\operatorname{Spec} R,1}_B(\bullet, \bullet)$ are both given by $\operatorname{Bott}(\bullet, \bullet, d')$, where $d'$ is defined on $\Rbb^5$ as 
\looseness=-1
\begin{align*}
    &d'((b_0, d_0, \alpha_0, \gamma_0, \rho_0), (b_1, d_1, \alpha_1, \gamma_1, \rho_1)) = d((b_0, d_0, \alpha_0, \gamma_0), (b_1, d_1, \alpha_1, \gamma_1)) + d^{Spec}(\rho_0, \rho_1).
\end{align*}
Here, $d$ is given in Definition~\ref{def:REPHINE_dist} and $d^{Spec}$ is given by embedding $\rho_0$ and $\rho_1$ as sorted lists (followed by zeroes) into $\ell^{1}(\mathbb{N})$ and taking their $\ell^{1}$-distance in $\ell^{1}(\mathbb{N})$. 
\end{definition}

We verify in Appendix~\ref{appendix::stability} that Definition~\ref{def:REPHINE_dist} and Definition~\ref{def:REPHINE_spec_dist} are indeed metrics. We also prove that the bottleneck distances of two RePHINE diagrams may be explicitly bounded in terms of the $\ell^{\infty}$ norms of the input functions, and hence RePHINE diagrams are stable in the following sense.
\looseness=-1

\begin{resbox}
\begin{theorem}\label{thm::stability_REPHINE}
    For any choice of $f= (f_v, f_e)$ and $g= (g_v, g_e)$ as before, we have the inequality
    $$d_B^R(\operatorname{RePHINE}(G, f), \operatorname{RePHINE}(G, g)) \leq 3 ||f_e - g_e||_{\infty} + ||f_v - g_v||_{\infty}.$$
\end{theorem}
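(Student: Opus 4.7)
The plan is to split the bottleneck into its $0$- and $1$-dimensional parts, $d_B^R = d_B^{R,0} + d_B^{R,1}$, bound each separately, and sum. For the $1$-dimensional part, every tuple in $\operatorname{RePHINE}(G,f)^1$ has the form $(1, d(e), 0, 0)$, where $d(e)$ is the filtration time at which the associated cycle is closed. Applying the classical Cohen-Steiner--Edelsbrunner--Harer bottleneck stability theorem to the edge-level filtration functions $F_e, G_e$ yields a bijection $\pi_1$ with $|d_f - d_g| \leq ||F_e - G_e||_\infty = ||f_e - g_e||_\infty$. Since the remaining three coordinates of every $1$-dimensional tuple are fixed at $1$ and $0$, this immediately gives $d_B^{R,1} \leq ||f_e - g_e||_\infty$.

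For the $0$-dimensional part, I aim to show $d_B^{R,0} \leq 2||f_e - g_e||_\infty + ||f_v - g_v||_\infty$. The natural ansatz is a vertex-identity bijection, matching the tuple of $v$ under $f$ with the tuple of $v$ under $g$. Since birth times are identically $0$ in the edge filtration, $|b_f(v) - b_g(v)| = 0$. The labels $\alpha(v) = f_v(c(v))$ and $\gamma(v) = \min_{w \in N(v)} f_e(c(v), c(w))$ are attached to $v$ intrinsically, so $|\alpha_f(v) - \alpha_g(v)| \leq ||f_v - g_v||_\infty$ and $|\gamma_f(v) - \gamma_g(v)| \leq ||f_e - g_e||_\infty$ follow immediately from the definitions (using the basic fact that the min of two sets of values that are pointwise $\epsilon$-close differs by at most $\epsilon$). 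Provided that the same edge kills $v$ under both $f$ and $g$, the death-time difference is also bounded by $||f_e - g_e||_\infty$, and the per-pair distance under the metric $d$ of Definition~\ref{def:REPHINE_dist} is at most $2||f_e - g_e||_\infty + ||f_v - g_v||_\infty$. Summing with the $1$-dimensional contribution yields the claimed bound.

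The main obstacle is precisely this ``same killing edge'' assumption: the RePHINE decision rule (kill the representative with greater $\alpha$, ties broken by smaller $\gamma$) may select a different dying vertex under $g$ than under $f$, sending a finite $d_f(v)$ to $d_g(v) = \infty$ under vertex identity and destroying the bound. To handle this, I would interpolate $h^t = (1-t)f + tg$, $t \in [0,1]$, and track the RePHINE diagram as a piecewise-constant multiset in $t$; changes occur only at finitely many critical $t^*$ at which either the edge-filtration order changes or an $\alpha/\gamma$-tie in the decision rule breaks. At each such $t^*$ the multiset differs from its neighbors by a transposition of two tuples whose relevant coordinates agree at $t^*$, and I would update the bijection by the same transposition, using the short observation that when the decision flips (say $\alpha_f(v_1) > \alpha_f(v_2)$ but $\alpha_g(v_1) < \alpha_g(v_2)$) the cross-inequality $|f_v(c_1) - g_v(c_2)| \leq ||f_v - g_v||_\infty$ holds, and analogously for $\gamma$. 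The truly delicate step will be controlling cascading effects, where a flipped representative subsequently participates in additional mergers; I would formalize this via induction on the merger order, ensuring each transposition's contribution stays local and the per-pair distance remains within the $2||f_e - g_e||_\infty + ||f_v - g_v||_\infty$ budget.
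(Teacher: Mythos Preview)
Your high-level plan matches the paper's: split $d_B^R = d_B^{R,0} + d_B^{R,1}$, handle the $1$-dimensional part by classical Cohen-Steiner--Edelsbrunner--Harer stability (giving $\|f_e-g_e\|_\infty$), and aim for $d_B^{R,0}\le 2\|f_e-g_e\|_\infty + \|f_v-g_v\|_\infty$ via a linear interpolation $h^t=(1-t)f+tg$. The coordinate-wise bounds you write for $b,\alpha,\gamma$ under a vertex-identity matching are exactly the ones the paper uses.

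Where your proposal diverges from the paper is in how the interpolation is organised, and this is also where your acknowledged gap (``cascading effects'') lives. You interpolate $f_v$ and $f_e$ simultaneously and propose to repair the bijection by transpositions at each critical $t^*$. The difficulty is real: at such a $t^*$ both the $\alpha$-based decision rule and the edge ordering can flip, so the dying vertex, the killing edge, and the downstream merge tree all change at once, and it is not clear that a single transposition keeps every subsequent tuple within the budget. The paper sidesteps this entirely by first applying the triangle inequality for $d_B^{R,0}$ through the intermediate pair $(g_v,f_e)$, reducing to two one-parameter problems: (i) fix $f_e$ and vary only the vertex function, and (ii) fix $f_v$ and vary only the edge function. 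In case (i) all death times and $\gamma$-values are frozen, so the only thing that can change along the interpolation is which representative is killed; in case (ii) the $\alpha$-values are frozen. In each case one shows that on every sub-interval where no ordering flips, the \emph{same} vertex--edge pairing (formalised in the paper as a separate construction) realises the bijection, and the per-tuple cost telescopes to $(t_{i+1}-t_i)\|f_v-g_v\|_\infty$ in case (i) and $2(t_{i+1}-t_i)\|f_e-g_e\|_\infty$ in case (ii). Summing over sub-intervals and then over the two cases gives $2\|f_e-g_e\|_\infty+\|f_v-g_v\|_\infty$ without ever having to analyse interacting flips.

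In short: your interpolation instinct is correct, but rather than trying to control cascades in a two-parameter homotopy, separate the two perturbations first via the triangle inequality. That decoupling is the missing ingredient that turns your sketch into a complete proof.
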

\end{resbox}

RePHINE diagrams are regarded as {\em globally stable} in the sense that no matter what $f$ and $g$ we choose, their respective RePHINE diagrams are bounded by a suitable norm on $f$ and $g$. SpectRe diagrams, in contrast, only satisfy a local form of stability. We make precise what {\em local} means by introducing a suitable topology on the possible space of filtration functions.

After fixing a canonical ordering on $X$ and $X \times X/\sim$ separately, we may view $f_v$ (resp. $f_e$) as an element of $\Rbb^{n_v}$ (resp. $(\Rbb_{>0})^{n_e}$). Furthermore, if $f_v$ (resp. $f_e$) is injective, it may viewed as an element in $\operatorname{Conf}_{n_v}(\Rbb)$ (resp. $\operatorname{Conf}_{n_e}(\Rbb_{>0})$), where $\operatorname{Conf}_{n_v}(\Rbb)$ (resp. $\operatorname{Conf}_{n_e}(\Rbb_{>0})$) is the subspace of $\Rbb^{n_v}$ (resp. $(\Rbb_{>0})^{n_e}$) composing of points whose coordinates have no repeated entries. From here we obtain the following theorem.

\begin{resbox}\begin{theorem}\label{thm::rephine_spec_loc_stab}
    If $f_v$ and $f_e$ are injective, then $f = (f_v, f_e)$ is locally stable on $\operatorname{Conf}_{n_v}(\Rbb) \times \operatorname{Conf}_{n_e}(\Rbb_{>0})$ under $d^{\operatorname{Spec}R}_{B}$. That is, over a graph $G$ with the coloring set $X$, we have:
    \[d^{\operatorname{Spec}R}_{B}(\operatorname{SpectRe}(G, f), \operatorname{SpectRe}(G, g) )
     \leq 3 ||f_e - g_e||_{\infty} + ||f_v - g_v||_{\infty}\]
    for all $g = (g_v, g_e)$ sufficiently close to $f$ in $\operatorname{Conf}_{n_v}(\Rbb) \times \operatorname{Conf}_{n_e}(\Rbb_{>0})$. Furthermore, the same bound holds without imposing the local stability condition in $f_v$.
\end{theorem}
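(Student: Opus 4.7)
My strategy is to use the injectivity of $f$ to freeze the combinatorial structure of the filtration on a small open neighborhood in $\operatorname{Conf}_{n_v}(\Rbb) \times \operatorname{Conf}_{n_e}(\Rbb_{>0})$, which makes the spectral component of SpectRe locally constant; the remaining numerical perturbation is then controlled by the global RePHINE bound of \Cref{thm::stability_REPHINE}.

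First I would choose the radius of the neighborhood. Because $f_v$ and $f_e$ are injective, the quantity
\[\delta := \tfrac{1}{2}\min\!\Bigl(\min_{x\neq y}|f_v(x)-f_v(y)|,\ \min_{(x,y)\neq (x',y')}|f_e(x,y)-f_e(x',y')|\Bigr)\]
is strictly positive. For any $g=(g_v,g_e)$ in $\operatorname{Conf}_{n_v}(\Rbb)\times\operatorname{Conf}_{n_e}(\Rbb_{>0})$ with $\|f_v-g_v\|_\infty<\delta$ and $\|f_e-g_e\|_\infty<\delta$, the strict orderings of the values of $g_v$ on $X$ and of $g_e$ on $X\times X/\!\sim$ agree with those of $f_v$ and $f_e$. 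Since the induced functions on $V(G)\cup E(G)$ from \Cref{def:coloring_fil} factor through vertex colors and through unordered pairs of colors, the orderings induced on the vertices and edges of $G$ are identical under $f$ and under $g$, with ties occurring in exactly the same places.

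Next I would trace the union-find algorithm that computes RePHINE and SpectRe: the rigidity established above forces it to execute the same sequence of combinatorial events under $f$ and $g$. The same vertices are grouped into the same connected components at every filtration step, the same edges generate independent cycles, the tie-breaking rule in \Cref{def:RePHINE} (which depends only on the $\alpha$- and $\gamma$-orderings, themselves frozen) selects the same vertex to die at each almost-hole, and hence the persistence pairing induces a canonical bijection $\pi$ that matches each tuple of $\operatorname{SpectRe}(G,f)$ with a tuple of $\operatorname{SpectRe}(G,g)$ sharing the same underlying vertex or cycle-creating edge. Under this bijection, the connected component appearing in the definition of $\rho$ (\Cref{def::RePHINE_Spec}) is the \emph{same} subgraph of $G$ in both computations, so the graph-Laplacian spectra agree exactly and the $d^{Spec}$-contribution vanishes on every matched pair.

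Consequently $d'$ reduces to $d$ along $\pi$; feeding $\pi$ into \Cref{notation::bottleneck} yields
\[d^{\operatorname{Spec}R}_B\bigl(\operatorname{SpectRe}(G,f),\operatorname{SpectRe}(G,g)\bigr) \leq d^R_B\bigl(\operatorname{RePHINE}(G,f),\operatorname{RePHINE}(G,g)\bigr),\]
and \Cref{thm::stability_REPHINE} delivers the advertised bound $3\|f_e-g_e\|_\infty + \|f_v-g_v\|_\infty$. The main obstacle is verifying the middle step cleanly: one must argue that the RePHINE tie-breaking survives the perturbation so that the \emph{same} vertex is declared dead at each almost-hole, which is what guarantees that the \emph{same} connected component is fed to the Laplacian. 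Because this rule is a deterministic function of the color data and the edge ordering (both of which are frozen on the neighborhood), the rigidity should propagate through --- but its dependence on $g$ must be unpacked carefully to rule out any residual sensitivity.
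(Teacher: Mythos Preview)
Your overall strategy is correct and in fact more direct than the paper's: the paper proceeds by linear interpolation, splitting into the cases $f_e=g_e$ and $f_v=g_v$ separately and invoking local convexity of the configuration spaces to guarantee the interpolants stay injective, whereas you freeze the combinatorial data on a single explicit $\delta$-ball and work with one bijection throughout. Both routes rest on the same key observation --- once the orderings on $X$ and on $X\times X/\!\sim$ are frozen, the connected component fed to the Laplacian at each event is literally the same subgraph under $f$ and $g$, so the $\rho$-contribution vanishes. Your concern about the tie-breaking rule surviving the perturbation is not the real obstacle: the rule compares $\alpha$- and then $\gamma$-values, and both comparisons are determined by the frozen orderings.

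There is, however, a logical slip in your displayed inequality. Feeding the bijection $\pi$ into \Cref{notation::bottleneck} only gives
\[d^{\operatorname{Spec}R}_B \;\leq\; \max_p d'(p,\pi(p)) \;=\; \max_p d(p,\pi(p)),\]
not $d^{\operatorname{Spec}R}_B \leq d^R_B$. In fact, since $d' \geq d$ pointwise, one always has $d^{\operatorname{Spec}R}_B \geq d^R_B$; the reverse inequality is not a consequence of exhibiting any single bijection (the infimum defining $d^R_B$ might be realized by some $\sigma\neq\pi$, under which the spectra need not match). You therefore cannot invoke \Cref{thm::stability_REPHINE} as a black box. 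The repair is to bound $\max_p d(p,\pi(p))$ directly: because $\pi$ matches the same underlying vertex or cycle-edge, the $b$-, $d$-, $\alpha$-, and $\gamma$-entries of a $0$-dimensional pair differ by at most $0$, $\|f_e-g_e\|_\infty$, $\|f_v-g_v\|_\infty$, and $\|f_e-g_e\|_\infty$ respectively (the last because the frozen edge ordering forces the same minimizing neighbor), and the $1$-dimensional tuples contribute another $\|f_e-g_e\|_\infty$. Summing gives the required $3\|f_e-g_e\|_\infty + \|f_v-g_v\|_\infty$. This is precisely the computation carried out inside the proofs of Propositions~\ref{prop::rephine_v_stable} and~\ref{prop::rephine_e_stable}, so no new ideas are needed --- you just have to reopen those arguments rather than quote their conclusion.
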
\end{resbox}

Local stability suffices in practical applications. Note that the injectivity assumption is necessary for $f_e$, and $\operatorname{SpectRe}$ is not globally stable in general. We illustrate this in the following example.
\begin{example}\label{exp:rephine_spec_not_globally_stable}
    Let $G$ be the path graph on $4$ vertices colored in the order red, blue, blue, red. Let $f_v = g_v$ be any functions. Let $g_e$ be constant with value $1$, and $f_e$ be given by $f_e(\text{red}, \text{blue}) = 1$, $f_e(\text{blue}, \text{blue}) = 1 - \epsilon$ for $\epsilon > 0$ small, and $f_e(\text{red}, \text{red})$ any value. $\operatorname{SpectRe}(G, f)^0$ has $1$ tuple representing $1$ blue vertex that dies at time $t = 1 - \epsilon$ whose $\rho$-parameter is $\{2\}$. In contrast, every tuple of $\operatorname{SpectRe}(G, g)^0$ has its $\rho$-parameter equal to the list $L = \{2, 2 \pm \sqrt{2}\}$. No matter how small $\epsilon > 0$ is, the distance on their $\operatorname{SpectRe}$ diagrams is bounded below by
    \[d^{\operatorname{Spec}R}_{B}(\operatorname{SpectRe}(G, f), \operatorname{SpectRe}(G, g) )
    \geq d^{Spec}(\{2\}, L) > 0.\]
    Thus, $\operatorname{SpectRe}$ could fail to be locally stable without the injectivity assumption.

    We also note that $\operatorname{SpectRe}$ is not globally stable, even with the injectivity assumption. A counter-example can be found by changing $g_e$ in the same example to the function given by $g_e(\text{red}, \text{red}) = f_e(\text{red}, \text{red})$, $g_e(\text{red}, \text{blue}) = 1$, and $g_e(\text{blue}, \text{blue}) = 1 + \epsilon$ for $\epsilon > 0$. The intuitive reason is that to change $g_e(\text{blue}, \text{blue}) = 1 + \epsilon$ to $f_e(\text{blue}, \text{blue}) = 1-\epsilon$, $g_e$ has to become non-injective when its value at $(\text{blue}, \text{blue})$ crosses $1$.
\end{example}

We can, however, obtain a bound to how unstable the situation becomes when injectivity is violated.
\begin{resbox}
\begin{theorem}\label{thm::bound_on_instability}
Let $G$ be a graph and let $f_e$ be an injective edge filtration function with values $a_1 < ... < a_n$ with $a_i = f(e_i)$ for each edge color type $e_i$. Let $g_e$ be a perturbation of $f_e$ such that $g_e(e_j)$ = $f_e(e_j)$ for all $j \neq i$, and $g_e(e_{i}) = g_e(e_{i+1}) = a_{i+1}$ (ie. $g_e$ merges the i-th and i+1-th value together). Let $f_v, g_v$ be vertex filtration functions, then the distance between the \textbf{0-dimensional components} of $\operatorname{SpectRe}(G, f_v, f_e)$ and $\operatorname{SpectRe}(G, g_v, g_e)$ is bounded by
\[||g_v - f_v||_{\infty} + 2(a_{i+1} - a_i) + 2 \max_{(H_1, H_2) \in Y} |E(H_2)| - |E(H_1)|,\]
\end{theorem}
where $Y$ is the collection of pairs $(H_1, H_2)$ where $H_1$ is the connected graph that a vertex v died at time $a_i$ for $f_e$ is contained in, $H_2$ is the connected graph the same vertex $v$ died at time $a_{i+1}$ for $g_e$ is in, and $H_1 \subset H_2$.\\

The distance between the \textbf{1-dimensional components} of $\operatorname{SpectRe}(G, f_v, f_e)$ and $\operatorname{SpectRe}(G, g_v, g_e)$ is bounded by
\[(a_{i+1} - a_i) + 2 \max_{(H_1, H_2) \in Z} |E(H_2)| - |E(H_1)|,\]
where $Z$ is the collection of pairs $(H_1, H_2)$ where $H_1$ is the connected graph that a cycle-creating edge $e$ born at $a_i$ for $f_e$ is contained in, $H_2$ is the connected graph for the same edge $e$ at time $a_{i+1}$ for $g_e$, and $H_1 \subset H_2$.
\end{resbox}
We conclude this section by remarking that analyzing the stability of $\operatorname{RePHINE}$ and $\operatorname{SpectRe}$ is more challenging than one might expect. For RePHINE, the $\alpha$ values can be interpreted as the birth time of the vertices, but there was no clear interpretation of what the $\gamma$ values are in terms of the birth/death time of a simplex, rendering a direct application of the stability of PH fruitless. For SpectRe, the filtration method was not globally stable, which came as a surprise to us. To circumvent this finding, we instead came up with a new description of the topology of the filtration functions and showed it is locally stable under this topology. For both RePHINE and SpectRe, it was also unclear how the setting of Bottleneck stability would change since the original metric on persistence pairs in \citep{Cohen-Steiner_Edelsbrunner_Harer_2006} was given by the $\ell_{\infty}$-norms, but we adopted various other norms in this work.
\looseness=-1

\section{Integration with GNNs}
Like most topological descriptors for graphs, SpectRe can be seamlessly integrated into GNNs. Following \citet{immonen2023going}, we use GNN representations at each layer as inputs to filtration functions and obtain a vectorized representation of the entire diagram by encoding its tuples with DeepSets \citep{Zaheer2017}. However, unlike existing methods, we must also encode the spectrum (the $\rho$ component) associated with each tuple. To this end, we use an additional DeepSet model dedicated to processing these spectral features.
Consequently, at each GNN layer, we obtain a new SpectRe diagram which we then vectorize (as described above). These representations are then integrated into the GNN --- for example, by concatenating them with the graph-level GNN's embedding --- just before the final classification head.
\looseness=-1

Formally, let $\operatorname{SpectRe}(G^k, f^k)^0$ be  the $0$-th dimensional SpectRe diagram obtained at the $k$-th GNN layer. Since each element of the diagram is associated with a node $v$, we compute the topological embedding $r^0_{k}$ of $\operatorname{SpectRe}(G^k, f^k)^0$ as
\begin{align}
    \tilde{r}^0_{v,k} = \phi^{1}_k\left(\sum_{p \in \rho(v)}\psi^1_k(p)\right) \quad \quad \quad
    r^0_{k} = \phi^2_k\left(\sum_{v \in V} \psi_k^2\left(b(v), d(v), \alpha(v), \gamma(v), \tilde{r}^0_{v,k}\right)\right),
\end{align}
where $\phi^{1}_k, \phi^{2}_k, \psi^{1}_k, \psi^{2}_k$ are feedforward neural networks. We follow an analogous procedure to obtain $1$-dimension topological embeddings at each layer.  

It is worth noting that there are alternative ways to integrate topological embeddings with GNNs. For instance, instead of computing a diagram-level representation, \citet{horn2022topological} incorporates vertex-level representations, $\tilde{r}^0_{v,k}$, by adding them to the node embeddings of the GNN at each layer.
\looseness=-1

\section{Experiments}
To assess the effectiveness of SpectRe from an empirical perspective, we consider two sets of experiments. The first consists of isomorphism tests on synthetic benchmarks designed to evaluate the expressive power of graph models. The second explores the combination of topological descriptors and GNNs in real-world tasks. Implementation details are provided in \autoref{sec::details}. In addition, our code is publicly available at \url{https://github.com/Aalto-QuML/SpectRe/}.

\begin{table}[!bt]
    \centering
    \caption{Accuracy in distinguishing all pairs of minimal Cayley graphs with $n$ nodes (c-$n$) and graphs in the BREC datasets. We use degree as vertex filter ($f_v$) and Forman–Ricci curvature as edge filter ($f_e$). SpectRe separates all Cayley graphs and is the best performing method on BREC datasets. Note that some outputs are slightly different from earlier versions of the paper due to a minor bug in the code (see \autoref{sec::details} for further details).}
    \label{tab:cayley}
    \vspace{-8pt}
\begin{adjustbox}{width=\linewidth}
\begin{tabular}{lccccccccccccccc}
\toprule
 & \multicolumn{8}{c}{Cayley Graphs} & & \multicolumn{6}{c}{BREC Datasets} \\
 \cmidrule(r){2-9} \cmidrule(r){11-16} 
Methods         & c-12 & c-16 & c-20 & c-24 & c-32 & c-36 & c-60 & c-63 &  &B (60) & R (50) & E (100) & C (100) & D (20) & All (400) \\
\midrule
PH$^0$   & 0.67   & 0.83   & 0.61   & 0.65   & 0.76   & 0.69   & 0.69   & 0.49  & & 0.03       & 0.00         & 0.07             & 0.03      & 0.00          & 0.03 \\
PH$^1$   & 0.95   & 0.83   & 0.61   & 0.86   & 0.76   & 0.84   & 0.78   & 0.73 & &  0.98       & 0.94         & 0.55             & 0.03      & 0.00          & 0.41  \\
RePHINE  & 0.95   & 0.83   & 0.61   & 0.86   & 0.76   & 0.84   & 0.78   & 0.73  & & 0.98       & 0.94         & 0.55             & 0.03     & 0.00          & 0.41 \\
SpectRe  & \textbf{1.00} & \textbf{1.00} & \textbf{1.00} & \textbf{1.00} & \textbf{1.00} & \textbf{1.00} & \textbf{1.00} & \textbf{1.00} & & \textbf{1.00} & \textbf{1.00} & \textbf{1.00}    & \textbf{0.04} & \textbf{0.05} & \textbf{0.54}\\
\bottomrule
\end{tabular}
\end{adjustbox}
\vspace{-8pt}
\end{table}

\subsection{Synthetic data} 

Following \citet{ballester2024expressivity}, we consider datasets containing minimal Cayley graphs with varying numbers of nodes \citep{Coolsaet2023} and the BREC benchmark \citep{wang2024empirical} - details are given in the Appendix. We compare four topological descriptors (PH$^0$, PH$^1$, RePHINE, and SpectRe) obtained using fixed filtration functions. Specifically, we use node degrees and augmented Forman–Ricci curvatures \citep{Samal2018} as vertex- and edge-level filtration functions, respectively — i.e., \( f_v(u) = |N(u)| \) and \( f_e(u,w) = 4 - |N(u)| - |N(w)| + 3|N(u) \cap N(w)| \). Note that only RePHINE and SpectRe leverage both functions as PH$^0$ applies \( f_e(u,w) = \max\{f_v(u), f_v(w)\} \) and PH$^1$ uses \( f_v(u) = 0 \) for all $u$.  
\looseness=-1   

\captionsetup[figure]{font=small}
\begin{wrapfigure}[13]{r}{0.51\textwidth}
    \centering
    \vspace{-12pt}
\includegraphics[width=0.25\textwidth]{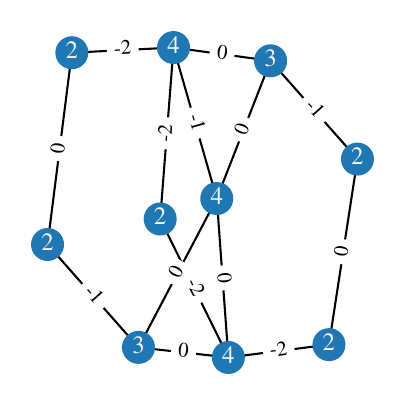}
    \includegraphics[width=0.25\textwidth]{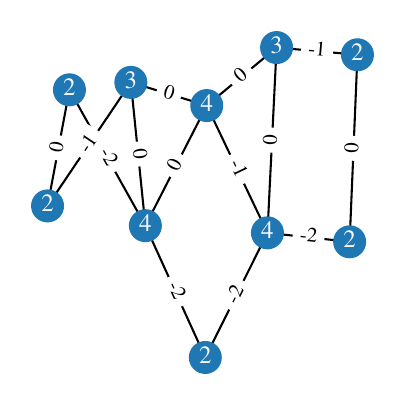}
    \caption{Example of a pair of graphs from the dataset Extension(100) that RePHINE cannot distinguish but SpectRe can. Node and edge labels denote filtration values.}
    \label{fig:examples_rephine_spectre}
\end{wrapfigure}
\captionsetup[figure]{font=normal}
\autoref{tab:cayley} presents the accuracy results for the Cayley and BREC datasets. Here, accuracy represents the fraction of pairs of distinct graphs for which the multisets of persistence tuples differ. In all datasets, the performance of PH$^0$ is bounded by that of PH$^1$, while PH$^1$ performs on par with RePHINE. Interestingly, for most datasets, these descriptors distinguish exactly the same graphs. Notably, SpectRe can separate all pairs of minimal Cayley graphs. 
Overall, PH$^0$ struggles to distinguish graphs across all BREC datasets, indicating that degree information is not informative in these cases. Except for the (regular) Distance dataset, these results confirm the high expressivity of SpectRe on exploiting complex graph structures, even under simple degree-based filtrations. We note that the BREC datasets `strongly connected' and `4-vertex condition' were omitted from the table since no descriptor can separate any of their pairs of graphs.
For completeness, \autoref{fig:examples_rephine_spectre} illustrates a pair of graphs that RePHINE does not separate but SpectRe does.
\looseness=-1

\subsection{Real data}
To illustrate the potential of SpectRe on boosting the power of GNNs, we consider nine datasets for graph-level predictive tasks: MUTAG, PTC-MM, PTC-MR, PTC-FR, NCI1, NCI109, IMDB-B, ZINC, MOLHIV \citep{Morris_2020}. Again, we wish to compare SpectRe against PH$^0$ (vertex filtrations) and RePHINE. Thus, we consider the same GNN architecture for all diagrams: graph convolutional network (GCN) \citep{kipf2017semisupervised}. We provide additional results using graph Transformers in the Appendix.
Importantly, all diagrams are vectorized exactly the same way using DeepSets. We report the mean and standard deviation of accuracy (MAE for ZINC, and AUROC for MOLHIV) over three independent runs.
\looseness=-1

To enable SpectRe for large datasets (e.g., NCI1, ZINC and MOLHIV), we employed the power method for computing the largest eigenvalue if $n > 9$ and used the full spectrum otherwise. In addition, we applied scheduling: we only computed the eigendecomposition at 33\% of the filtration steps.
\looseness=-1

\autoref{tab:real_data} shows the results on real-world data. SpectRe achieves the highest mean accuracies in 8 out of 9 cases, with a significant margin on the MUTAG and PTC-MM datasets. In most cases, the second best performing model is RePHINE. Overall, these results support the idea that GNNs can benefit from PH-based descriptors on real-world graph learning tasks. 
\begin{table}[!t]
    \centering
    \vspace{-4pt}
    \caption{Predictive performance on graph classification/regression. We denote the highest mean accuracy (lowest MAE for ZINC) in bold. For most datasets, SpectRe is the best performing model. \looseness=-1}
    \label{tab:real_data}
        \vspace{-4pt}
\begin{adjustbox}{width=\linewidth}
    \begin{tabular}{lccccccccc}
    \toprule
    Method  & MUTAG & PTC-MM & PTC-MR & PTC-FR & NCI1 & NCI109 & IMDB-B & ZINC & MOLHIV \\
    \midrule
    GCN & 63.2{\color{gray}\small$\pm$4.7} & 56.9{\color{gray}\small$\pm$3.3} & 58.1{\color{gray}\small$\pm$1.6} & 67.6{\color{gray}\small$\pm$1.6} & 74.33{\color{gray}\small$\pm$ 2.58} & 73.49 {\color{gray}\small$\pm$0.86} & 69.00{\color{gray}\small$\pm$1.41} & 0.87{\color{gray}\small$\pm$0.01} & 71.73{\color{gray}\small$\pm$1.05}\\
    PH$^0$ & 82.5{\color{gray}\small$\pm$6.0} & 59.8{\color{gray}\small$\pm$7.4} & 55.2{\color{gray}\small$\pm$3.3} & 65.7{\color{gray}\small$\pm$9.7} & 77.37{\color{gray}\small$\pm$2.06}  & 76.15{\color{gray}\small$\pm$2.57} & 71.00{\color{gray}\small$\pm$0.00} & 0.53{\color{gray}\small$\pm$ 0.01} & 71.31{\color{gray}\small$\pm$3.00} \\
    RePHINE & 87.7{\color{gray}\small$\pm$3.0} & 57.8{\color{gray}\small$\pm$1.6} & 58.1{\color{gray}\small$\pm$11.9} & 68.5{\color{gray}\small$\pm$6.4} & 80.66{\color{gray}\small$\pm$1.55} & 76.51{\color{gray}\small$\pm$1.03} & 75.00{\color{gray}\small$\pm$2.83} & \textbf{0.47}{\color{gray}\small$\pm$0.01} & 76.03{\color{gray}\small$\pm$0.48}\\
    SpectRe & \textbf{91.2}{\color{gray}\small$\pm$3.0} & \textbf{61.7}{\color{gray}\small$\pm$2.9}  & \textbf{59.1}{\color{gray}\small$\pm$5.9} & \textbf{69.4}{\color{gray}\small$\pm$2.8} & \textbf{80.90}{\color{gray}\small$\pm$0.17} & \textbf{76.63}{\color{gray}\small$\pm$0.86} & \textbf{76.00}{\color{gray}\small$\pm$0.00} & 0.48{\color{gray}\small$\pm$0.02} & \textbf{76.33}{\color{gray}\small$\pm$0.57}\\
    \bottomrule
    \end{tabular}
\end{adjustbox}
\end{table}

\noindent \textbf{Additional experiments.} For completeness, \autoref{app:additional_experiments} includes further experiments using Graph Transformers \citep{gps}, along with ablation studies and an empirical assessment of our stability bounds.
\looseness=-1

\noindent \textbf{Limitations.} Like most highly expressive graph models, SpectRe's power comes at the cost of increased computational complexity. The overhead of computing eigendecomposition across graph filtrations is non-negligible and can become a bottleneck for large-scale real-world datasets. 
Specifically, on graphs, 0- and 1-dimensional persistence diagrams can be computed in $O(m \log m)$ time using disjoint sets, where $m$ is the number of edges. Since SpectRe requires eigendecomposition, in the worst case, the cost is $O(n^5)$ which comes from a fully connected graph with the eigendecomposition being applied to $\Theta(n^2)$ filtration steps. In the best case (considering connected graphs), the whole graph is revealed at the same time and the cost is $O(n^3)$.
Fortunately, there are faster algorithms for computing (or approximating) partial spectral information. For instance, to run experiments on large datasets, one can employ LOBPCG or the power method, which reduces the time complexity to $O( k n^4 )$ in the worst case, where $k$ is the number of iterations (often much smaller than $n$). If we further consider scheduling, i.e., only applying decomposition to a fixed number of filtration steps, we can further reduce the worst-case complexity to $O(kn^2)$.

In this work, we focused on color-based filtrations of graphs. However, exploring alternative constructions (e.g., biparameter or Vietoris–Rips filtrations) would be an interesting research direction.
Also, while SpectRe incorporates spectral information with respect to edge-based filtrations, it would also be worthwhile to investigate how they interact with vertex-based filtrations. We discuss these directions in greater detail in Appendix~\ref{sec::alternative}.

\section{Conclusion}
We augmented PH-based descriptors on GNNs with the Laplacian spectrum, focusing on expressivity, stability, and experiments. For expressivity, we amalgamated spectral features with RePHINE to craft a strictly more expressive scheme SpectRe than both RePHINE and the Laplacian spectrum. For stability, we constructed a notion of bottleneck distance on RePHINE and SpectRe. We showed that the former is globally stable, and the latter is locally stable. Building on our theoretical foundations, we proposed an integration of SpectRe with GNNs and ``vectorized" the spectral component using DeepSets. We also benchmarked SpectRE against other TDs in this work on both synthetic and real data, showing empirical gains with our new approach.

\section*{Broader Impact}

While we do not anticipate immediate negative societal impacts from our work, we believe it can serve as a catalyst for the development of principled learning methods that effectively integrate topological information into graph representation learning. By enhancing the expressivity of graph-based models, such approaches hold promise for advancing a wide range of applications --- including molecular modeling, drug discovery, social network analysis, recommender systems, and material design --- where capturing multiscale structural and topological dependencies is essential.

\begin{ack}
This research was conducted while the first author was participating during the 2024 Aalto Science Institute international summer research programme at Aalto University. We are grateful to the anonymous program chair, area chair and the reviewers for their constructive feedback and service. VG acknowledges Saab-WASP (grant 411025), Academy of Finland (grant 342077), and the Jane and Aatos Erkko Foundation (grant 7001703) for their support. AS acknowledges the Conselho Nacional de Desenvolvimento Científico e Tecnológico (CNPq) (312068/2025-5). AS would also like to thank Jorge Franco for his assistance with C++ routines. We also acknowledge the computational resources provided under the Aalto Science-IT Project by Computer Science IT.
MJ would like to thank \text{Sab\'ina Gul\v{c}\'{i}kov\'{a}} for her amazing help at Finland in Summer 2024. MJ would also like to thank Cheng Chen, Yifan Guo, Aidan Hennessey, Yaojie Hu, Yongxi Lin, Yuhan Liu, Semir Mujevic, Jiayuan Sheng, Chenglu Wang, Anna Wei, Jinghui Yang, Jingxin Zhang, an anonymous friend, and more for their incredible support in the second half of 2024.
\end{ack}

{
\small
\bibliographystyle{plainnat} 
\bibliography{references}
}

\newpage
\appendix

\section{Proofs}\label{appendix::proof}

\subsection{Proofs for Section~\ref{sec::filtration}}\label{appendix::filtration}

In this section, we will prove Theorem~\ref{thm::well-defined} and Theorem~\ref{thm::expressivity}, and we discuss the proof of Theorem~\ref{thm::persistent_Laplacian_no_more} separately in Appendix~\ref{subsec::spectral_express}.

\begin{proof}[Proof of Theorem~\ref{thm::well-defined}]
Let us check that $\operatorname{SpectRe}$ is a graph isomorphism invariant. We will first show this for $\operatorname{SpectRe}(\bullet, f)^0$. Here we follow the suggestions laid out in Theorem 4 of \citet{immonen2023going} and decompose it in two steps.
\begin{enumerate}
    \item From Theorem 4 of \citet{immonen2023going}, we know that the original RePHINE with $(b, d, \alpha, \gamma)$ is an isomorphism invariant.

    \item We also want to show that the map $G \mapsto \{\rho(C(v, d(v)))\}_{v \in V(G)}$ is an isomorphism invariant. Here, $\rho$ produces the spectrum of the connected component $v$ is in at its time $d(v)$. Since $\rho$ itself does not depend on choice, it suffices for us to show the following map is an isomorphism invariant:
    \[G \mapsto \{C(v, d(v))\}_{v \in V(G)},\]
    where $C(v, d)$ is the component $v$ is in at time $d$. The ambiguity comes in, depending on our choice, a vertex may very well die at different times.\\

    Now from the proof of Theorem 4 of \citet{immonen2023going}, we already know that the multi-set of death times is an isomorphism invariant. Now a vertex death can only occur during a merging of old connected components $T_1, ..., T_n$ (with representatives $v_1, ..., v_n$) to a component $C$. Now under the RePHINE scheme, there is a specific procedure to choose which vertex. However, we see that any choice of the vertex does not affect the connected component that will be produced after merging. Thus, we will always be adding a constant $n-1$ copies of $C$ to the function we are producing.
    
    For the real holes, we know from the proof of Theorem 4 of \citet{immonen2023going} that, it does not matter how each of the remaining vertices is matched to the real holes, since the rest of the vertices are associated in an invariant way. Hence, the production of graph Laplacians for the real holes will not be affected. Finally, the description above also shows that we can concatenate the pair $(b, d, \alpha, \gamma)$ with $(\rho)$ in a consistent way.
    \item Hence, we see that $\operatorname{SpectRe}(\bullet, f)^0$ is an isomorphism invariant.
\end{enumerate}
Now for $\operatorname{SpectRe}(\bullet, f)^1$, the argument follows similarly as above. It suffices for us to check that the list $\{C(e, b(e))\}_{e \in \text{cycle}}$ is consistent. We observe that the birth of a cycle can only happen when an edge occurs that goes from a connected component $C$ back to itself. The only possible ambiguity is that, at the birth time, multiple edges are spawned at the same time, and an edge may or may not create a cycle based on the order it is added to the graph. However, regardless of the order, the resulting connected component that the edge belongs in will be the same. Thus, the list $\{C(e, b(e))\}_{e \in \text{cycle}}$ will be consistent.
\end{proof}

\begin{proof}[Proof of Theorem~\ref{thm::expressivity}]
    Clearly SpectRe has at least the same expressivity to either RePHINE or LS. This is because RePHINE is the first four coordinates of SpectRe, and LS is the first, second, and fifth coordinates of SpectRe. Thus, it suffices for us to show that RePHINE and LS are incomparable to each other.\\

    Consider the graphs $G$ and $H$ in Figure~\ref{fig:spec-plots}(a). From the remarks below Theorem 5 of \citet{immonen2023going}, we know that $\operatorname{RePHINE}$ cannot differentiate $G$ and $H$. However, the data about the real holes of the respective $\operatorname{SpectRe}$ diagrams of $G$ and $H$ will be different. This is because the eigenvalues of the graph Laplacian $\Delta_0(G)$ are $\{0, 1, 1, 4\}$ and the eigenvalues of the graph Laplacian $\Delta_0(H)$ are $\{0, 2, 2 - \sqrt{2}, 2+ \sqrt{2}\}$.

    On the other hand, consider the graphs $G$ and $H$ in Figure~\ref{fig:spec-plots}(b). We note that all edges in $G$ and $H$ would be labeled the color ``red-blue". Thus, the $\rho$-component of the LS diagrams for $G$ and $H$ would be the same, being the non-zero eigenvalues of the star graph on 4 vertices. The set of death-times would also be the same, being $\{f_e(\operatorname{red-blue}), f_e(\operatorname{red-blue}), f_e(\operatorname{red-blue}), \infty\}$. Thus, LS would not be able to differentiate $G$ and $H$. However, $\operatorname{RePHINE}$ would if we choose $f_v$ such that $f_v(\operatorname{red}) \neq f_v(\operatorname{blue})$, just based on comparing the $\alpha$-values.
\end{proof}

\subsection{Proofs for Section~\ref{sec::stability}}\label{appendix::stability}

For ease of notation, we will omit the parameter $G$ in the RePHINE diagram in this section since we will only be discussing functions on the same graph. We will first verify that our definition of a metric $d_B^{R}$ in Definition~\ref{def:REPHINE_dist} is actually a metric. To do this, we first remind the reader that, under Notation~\ref{notation::bottleneck}, the definition of $d^R_B$ can be elaborated as the following definition.

\begin{definition}
Let $\operatorname{RePHINE}(G, f)$ and $\operatorname{RePHINE}(G, g)$ be the two associated RePHINE diagrams for $G$ respectively. We define the \textbf{bottleneck distance} as
$d_B^R(\operatorname{RePHINE}(G, f), \operatorname{RePHINE}(G, g)) \coloneqq d_B^{R,0}(\operatorname{RePHINE}(G, f)^0, \operatorname{RePHINE}(G, g)^0) +\ d_B(\operatorname{RePHINE}(G, f)^1, \operatorname{RePHINE}(G, g)^1)$.

For the $0$-th dimensional component, $d_B^{R,0}$ is defined as,
\[d_B^{R,0}(\operatorname{RePHINE}(G, f)^0, \operatorname{RePHINE}(G, g)^0) \coloneqq \inf_{\pi \in \text{bijections}} \max_{p \in \operatorname{RePHINE}(G, f)^0} d(p, \pi(p)),\]
where $d$ is defined as $d((b_0, d_0, \alpha_0, \gamma_0), (b_1, d_1, \alpha_1, \gamma_1)) = \max\{|b_1 - b_0|, |d_1 - d_0|\} + |\alpha_1 - \alpha_0| + |\gamma_1 - \gamma_0|$, and $\pi$ ranges over all bijections $\operatorname{RePHINE}(G, f)^0 \to \operatorname{RePHINE}(G, g)^0$. For the $1$-st dimensional component, $d_B$ is the usual bottleneck distance on $1$-dimensional persistence pairs.
\end{definition}

\begin{proposition}\label{prop::metric}
    $d_B^{R}$ and $d_B^{\Spec R}$ are metrics.
\end{proposition}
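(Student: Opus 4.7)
My plan is to reduce the statement, in both cases, to the general fact that $\operatorname{Bott}(\cdot,\cdot,d_X)$ from Notation~\ref{notation::bottleneck} is a metric on the space of finite multi-subsets of $X$ of a fixed cardinality, whenever $(X,d_X)$ is itself a metric space, and that a finite sum of metrics on a product is a metric. Thus the task splits into (i) checking that the underlying point-wise distances $d$ and $d'$ on the tuple spaces are metrics, (ii) checking the bottleneck lifting preserves the metric axioms, and (iii) combining the $0$-th and $1$-st dimensional components by summation.

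For (i), the distance $d$ on $\mathbb{R}^4$ from Definition~\ref{def:REPHINE_dist} is a sum of $\max\{|b_1-b_0|,|d_1-d_0|\}$ (the $\ell^\infty$ norm on the first two coordinates) and the absolute-value norms on the $\alpha$- and $\gamma$-coordinates; each summand is a standard metric, and finite sums of metrics on a product space are metrics, so $d$ is a metric. For $d'$ on the 5-tuple space, I would additionally verify that $d^{\operatorname{Spec}}$ is a metric: by the definition, it is the pullback of the $\ell^1(\mathbb{N})$ metric under the map that embeds a finite multiset of non-negative reals as its canonically sorted sequence followed by zeros. This map is injective on the space of such multisets (two sorted sequences are equal in $\ell^1(\mathbb{N})$ iff they coincide termwise iff the underlying multisets agree), so $d^{\operatorname{Spec}}$ inherits all four metric axioms from $\ell^1(\mathbb{N})$. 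Hence $d' = d + d^{\operatorname{Spec}}$ is again a metric.

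For (ii), I would verify the four axioms for $\operatorname{Bott}(\cdot,\cdot,d_X)$ on finite multisets of fixed cardinality $n$. Non-negativity is immediate. Symmetry follows because bijections $A\to B$ are in involution with bijections $B\to A$ via inversion, and $d_X$ is symmetric, so $\max_{p\in A} d_X(p,\pi(p)) = \max_{q\in B} d_X(\pi^{-1}(q),q)$. For identity of indiscernibles, the set of bijections between two finite multisets of size $n$ is finite, so the infimum is attained by some $\pi$; if the value is $0$, then $d_X(p,\pi(p))=0$ and hence $p=\pi(p)$ for every $p\in A$, showing $A=B$ as multisets. The triangle inequality is the standard argument: given bijections $\pi_1\colon A\to B$ and $\pi_2\colon B\to C$, the composition $\pi_2\circ\pi_1\colon A\to C$ satisfies $d_X(p,\pi_2\pi_1(p))\le d_X(p,\pi_1(p)) + d_X(\pi_1(p),\pi_2(\pi_1(p)))$ by the triangle inequality for $d_X$; taking max over $p\in A$, bounding the two maxima independently, and then taking the infimum over $\pi_1,\pi_2$ produces the bound $\operatorname{Bott}(A,C,d_X)\le \operatorname{Bott}(A,B,d_X)+\operatorname{Bott}(B,C,d_X)$.

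For (iii), note that $d_B^R$ (resp.\ $d_B^{\operatorname{Spec}R}$) is defined as a sum of a $0$-dimensional and a $1$-dimensional bottleneck distance, applied to the corresponding components of the diagrams. Since the set of $\operatorname{RePHINE}$ (resp.\ $\operatorname{SpectRe}$) diagrams is naturally identified with a subspace of the product of the two multiset spaces, and both factor distances are metrics by (ii), their sum is a metric as well. The only point of care is the identity axiom: if $d_B^R(A,B)=0$, both summands vanish separately (nonnegativity), hence the $0$- and $1$-dimensional components of $A$ and $B$ agree, so $A=B$ as RePHINE diagrams (same reasoning for SpectRe). The main subtlety in the whole argument is (ii), specifically ensuring the infimum is attained so that $\operatorname{Bott}(A,B,d_X)=0$ forces $A=B$; this is where finiteness of the multisets is essential and why the definition is restricted to multisets of the same cardinality.
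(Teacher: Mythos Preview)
Your proposal is correct and follows essentially the same approach as the paper: the paper verifies non-degeneracy, symmetry, and the triangle inequality for $d_B^{R,0}$ directly (using finiteness of the set of bijections for the first, inversion of bijections for the second, and composition of bijections plus the triangle inequality of $d$ for the third), and then handles $d_B^{\operatorname{Spec}R}$ by reducing to the fact that $d^{\operatorname{Spec}}$ is the pullback of the $\ell^1(\mathbb{N})$ metric along an injective embedding. Your version packages step~(ii) as a general lemma about $\operatorname{Bott}(\cdot,\cdot,d_X)$ rather than proving it for the specific instance, which is a cleaner organization but not a different argument.
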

\begin{proof}
    We will first check this for $d_B^{R}$. Since the usual bottleneck distance is a metric, it suffices for us to check that  $d_B^{R,0}$ is a metric.
    
    \textbf{1. Non-degeneracy: } Clearly the function is non-negative, and choosing $\pi$ to be the identity bijection shows that $d_B^{R,0}(\operatorname{RePHINE}(f)^0, \operatorname{RePHINE}(f)^0) = 0$. For any pairs $\operatorname{RePHINE}(f)^0 \neq \operatorname{RePHINE}(g)^0$, the term $\max_{p \in \operatorname{RePHINE}(f)^0} d(p, \pi(p))$ will be greater than $0$ for any choice of bijection $\pi$. Since there are only finitely many possible bijections $\pi$, the infimum $d_B^{R,0}(\operatorname{RePHINE}(f)^0, \operatorname{RePHINE}(g)^0)$ will be greater than $0$.

    \textbf{2. Symmetry: } Any bijection $\pi: \operatorname{RePHINE}(f)^0 \to \operatorname{RePHINE}(g)^{0}$ corresponds exactly to a bijection $\pi^{-1}: \operatorname{RePHINE}(g)^{0} \to \operatorname{RePHINE}(f)^0$. Hence, the definition of $d_B^{R,0}$ is symmetric.

    \textbf{3. Triangle Inequality: } Let $\operatorname{RePHINE}(f)^0, \operatorname{RePHINE}(g)^0, \operatorname{RePHINE}(h)^0$ be the vertex components of RePHINE diagrams on $G$. Suppose $\sigma_1: \operatorname{RePHINE}(f)^0 \to \operatorname{RePHINE}(h)^0$ is a bijection that achieves the infimum labeled in the definition of $d_B^{R,0}$. In other words,
    \[d_B^{R,0}(\operatorname{RePHINE}(f)^0, \operatorname{RePHINE}(h)^0) =  \max_{p \in \operatorname{RePHINE}(f)^0} d(p, \sigma_1(p)).\]
    Suppose $\tau_1, \tau_2$ are bijections from $\operatorname{RePHINE}(f)^0 \to \operatorname{RePHINE}(g)^0$ and $\operatorname{RePHINE}(g)^0 \to \operatorname{RePHINE}(h)^0$ respectively. We then have that,
    \begin{align*}
        \max_{p \in \operatorname{RePHINE}(f)^0} d(p, \sigma_1(p)) &\leq \max_{p \in \operatorname{RePHINE}(f)^0} d(p, \tau_1(p)) + d(\tau_1(p), \tau_2(\tau_1(p))) \tag*{Triangle Inequality for $d$}\\
        &\leq \max_{p \in \operatorname{RePHINE}(f)^0} d(p, \tau_1(p)) + \max_{q \in \operatorname{RePHINE}(g)^0} d(q, \tau_2(q)).
    \end{align*}
    For the sake of paragraph space, we write $A = d_B^{R,0}(\operatorname{RePHINE}(f)^0, \operatorname{RePHINE}(h)^0)$. Taking infimum over all possible $\tau_1$ and over $\tau_2$ gives us that
    \begin{align*}
        A &= d_B^{R,0}(\operatorname{RePHINE}(f)^0, \operatorname{RePHINE}(h)^0)\\
        &\leq \inf_{\tau_1, \tau_2} \max_{p \in \operatorname{RePHINE}(f)^0} d(p, \tau_1(p)) + \max_{q \in \operatorname{RePHINE}(g)^0} d(q, \tau_2(q))\\
        &\leq \inf_{\tau_1 \in \text{bijection}} \max_{p \in \operatorname{RePHINE}(f)^0} d(p, \tau_1(p)) + \inf_{\tau_2 \in \text{bijection}} \max_{q \in \operatorname{RePHINE}(g)^0} d(q, \tau_2(q))\\
        &\leq d_B^{R,0}(\operatorname{RePHINE}(f)^0, \operatorname{RePHINE}(g)^0) + d_B^{R,0}(\operatorname{RePHINE}(g)^0, \operatorname{RePHINE}(h)^0).
    \end{align*}
    This shows that $d_B^{R,0}$ satisfies the triangle inequality.\\

    For $d_B^{\Spec R}$, since we showed $d$ from Definition~\ref{def:REPHINE_dist} is a metric, it suffices for us to check in this proposition that $d^{\operatorname{Spec}}$ is a metric. To reiterate the definition of $d^{\operatorname{Spec}}$, given a list $L$ of non-zero eigenvalues with length $n$, we define an embedding $\phi(L) \in \ell^{1}(\mathbb{N})$ where $\phi(L)$ where the first $n$-elements in the sequence are $L$ sorted in ascending order and the rest are zeroes. This embedding is clearly injective on the lists of non-zero eigenvalues. For two lists $\rho_0, \rho_1$, we define
    \[d^{Spec}(\rho_0, \rho_1) = ||\phi(\rho_0) - \phi(\rho_1)||_{1}.\]
    The fact that $d^{Spec}$ is a metric now follows from the fact that $\ell^{1}(\mathbb{N})$ is a metric space under its $\ell^1$-distance and $\phi$ is injective.
\end{proof}

To prove Theorem~\ref{thm::stability_REPHINE}, we first define a technical construction as follows.

\begin{definition}\label{def:construct_pair}
    Given a graph $G$ and functions $(f_v, f_e)$ on $X$. This induces functions $(F_v, F_e)$ as defined in Definition~\ref{def:coloring_fil}. From here, we construct a pairing between $\operatorname{RePHINE}(f)^0$ and $(v, e) \in V(G) \times \{0\} \cup E(G)$ as follows.
    \begin{enumerate}
        \item For every almost hole $(0, d)$ that occurs in the edge filtration by $F_e$, this corresponds to the merging of two connected components represented by vertices $v_i$ and $v_j$.
        \item We assign to $(0, d)$ the vertex that has greater value under $\alpha$. If there is a tie, we assign the vertex that has the lower value under $\gamma$. If there is a further tie, we will be flexible in how we assign them in the proof of the stability of the RePHINE diagram.
        \item The occurrence of an almost hole $(0, d)$ is caused by an edge $e$ whose value under $f_e$ is $d$ that merges two connected components. We assign this edge to $(0, d)$. If there are multiple such edges, we will be flexible in how we assign them in the proof of the stability of the RePHINE diagram.
        \item For the real holes, we assign them with the vertices left. The edge takes an uninformative value (i.e. $0$).
    \end{enumerate}
    Note that for any vertex $v$ that dies at finite time $d$, its associated edge $e = (v_1, v_2)$ satisfies $f_e(c(v_1), c(v_2)) = d(v)$.
\end{definition}

We will now state and prove two propositions that will directly imply Theorem~\ref{thm::stability_REPHINE}. Our proof is inspired by the methods presented in \citet{skraba2021noteselementaryproofstability}.

\begin{remark}
To expand on the original statement above: our proof relies on the technical construction in Definition~\ref{def:construct_pair}, which is inspired by vertex-edge pairing in \citet{skraba2021noteselementaryproofstability} (which are called pivots) and proceeds with a linear interpolation technique as in \citet{skraba2021noteselementaryproofstability}. Vertex-edge pairing has also appeared in \citet{skraba2025wassersteinstabilitypersistencediagrams} and \citet{vineyard} for stability results. While we focused on stability analysis here, the concept of vertex-edge pair is standard in TDA; e.g., it has appeared in numerous algorithmic designs in \citet{Dey_Wang_2022}.

Our definition of a pairing in Definition~\ref{def:construct_pair} is different from the pivots in \citet{skraba2021noteselementaryproofstability}. A pairing here is a pivot in \citet{skraba2021noteselementaryproofstability} with respect to $f_e$, but not every pivot with respect to $f_e$ is a pairing. This is because there are some obstructions to conclude Theorem~\ref{thm::stability_REPHINE} directly from bottleneck stability, such as \citep{Cohen-Steiner_Edelsbrunner_Harer_2006, skraba2021noteselementaryproofstability}: (1) RePHINE has 4 parameters, while the stability results with traditional PH deals with only 2 parameters. (2) Our analysis here has to pay attention to a pair of filtrations $(f_v, f_e)$ compared to another pair $(g_v, g_e)$, as opposed to just comparing between individual filtration functions. (3) For the pair $(f_v, f_e)$, it is unclear how to interpret the $\gamma$ parameter in terms of a birth/death time of a simplex in $(f_v, f_e)$. For an arbitrary vertex $v \in G$, its $\gamma(v)$ need not be its birth time or its death time in $f_v/f_e$. It is possible to view $\gamma$ as the birth-time arising in another filtration $f'_v$, but we did not pursue this approach as our metrics are defined with respect to the pair $(f_v, f_e)$.

Extending the stability analysis from PH to RePHINE is non-trivial. In particular, we must resort to a RePHINE version of the vertex-edge pairing (Definition~\ref{def:construct_pair}) that respects both filtrations $(f_v, f_e)$. For our purpose, we require a pairing that respects both vertex and edge filtrations. This can be seen in Proposition~\ref{prop::rephine_v_stable} below, whose proof uses an assignment with respect to $f_e$ to analyze changes to $f_v$. The subtlety also occurs in Proposition~\ref{prop::rephine_e_stable} that analyzes changes in $f_e$, since the proof entails a careful analysis on how the $\gamma$-parameter (which is not present in traditional PH) behaves.
\end{remark}

\begin{proposition}\label{prop::rephine_v_stable}
    Suppose $f_e = g_e = h$ for some edge coloring function $h$, then
    \[d^{R, 0}_B(\operatorname{RePHINE}(f_v, h)^0, \operatorname{RePHINE}(g_v, h)^0) \leq ||f_v - g_v||_{\infty}.\]
\end{proposition}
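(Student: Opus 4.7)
Since $f_e = g_e = h$, the underlying edge filtration $\{G^h_t\}_t$ is identical in the two constructions, so the multi-sets of birth/death pairs $\{(b(v), d(v))\}_{v \in V}$ coincide and each intrinsic value $\gamma(v) = \min_{w \in N(v)} h(c(v), c(w))$ is independent of $f_v$ and $g_v$. The only structural differences between $\operatorname{RePHINE}(f_v, h)^0$ and $\operatorname{RePHINE}(g_v, h)^0$ are therefore (i) the $\alpha$-value of each tuple, which changes from $f_v(c(v))$ to $g_v(c(v))$, and (ii) which vertex the decision rule assigns to each tuple at a merging event.

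My plan is to construct a bijection $\pi$ between the two diagrams by walking through the edge filtration merge-by-merge, following the pairing prescription of Definition~\ref{def:construct_pair}. For each almost hole $(0, d)$ born at a merge time $t$, the two decision rules single out killed vertices $v^f$ and $v^g$; I pair the two tuples generated by this event. The leftover real-hole tuples are matched by an appropriate bijection among the surviving representatives, exploiting the flexibility in Definition~\ref{def:construct_pair}. By construction the $(b, d)$-coordinates of paired tuples coincide, so they contribute zero to the metric $d$ of Definition~\ref{def:REPHINE_dist}. When the two rules agree ($v^f = v^g$), the $\gamma$-contribution vanishes and the $\alpha$-contribution is at most $|f_v(c(v^f)) - g_v(c(v^f))| \le ||f_v - g_v||_\infty$. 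When they disagree, the killing inequalities $f_v(c(v^f)) \ge f_v(c(s^f))$ and $g_v(c(v^g)) \ge g_v(c(s^g))$ (with $s^f, s^g$ the surviving representatives) combine via a short triangle-inequality computation to yield $|f_v(c(v^f)) - g_v(c(v^g))| \le ||f_v - g_v||_\infty$.

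The main obstacle is the $\gamma$-contribution in the disagreement case: $|\gamma(v^f) - \gamma(v^g)|$ reflects the global edge structure around $v^f$ and $v^g$ and has no a priori bound in terms of $||f_v - g_v||_\infty$. I expect the proof to absorb this via an exchange/chaining argument, proceeding by induction on the number of merge events. Whenever the two rules disagree at a merge, one of $v^f$ or $v^g$ typically resurfaces as a representative at a later event; iteratively refactoring the bijection then lets us arrange that every matched pair involves the same underlying vertex, driving the $\gamma$-difference to zero while still respecting the $\alpha$ comparison above. The flexibility built into Definition~\ref{def:construct_pair} --- in assigning edges to tuples when multiple edges share a filtration value, and in pairing real holes with surviving vertices --- is the mechanism that powers this refactoring. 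Making the chaining argument precise while maintaining consistency with the decision rule at every merge is, I anticipate, the technical heart of the proof.
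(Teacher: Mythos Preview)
Your plan has a real gap, and it is exactly the one you flag: the $\gamma$-contribution when the two decision rules disagree. But the issue runs deeper than you indicate, and your sketched ``chaining/exchange'' fix is not concrete enough to close it.

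First, your $\alpha$-bound in the disagreement case silently assumes that the two representatives being compared at the merge are the \emph{same} pair of vertices under both $(f_v,h)$ and $(g_v,h)$. That is only true at the first merge. Once a disagreement has occurred, the surviving representative of a component under $f_v$ can be a different vertex than under $g_v$; at the next merge the four relevant vertices $r_1^f,r_2^f,r_1^g,r_2^g$ need not pair up, and the ``killing inequalities'' you wrote down no longer connect $f_v(c(v^f))$ to $g_v(c(v^g))$ by a short triangle inequality. (One can still salvage the $\alpha$-bound using that each representative is the $\arg\min$ of $f_v\circ c$ over its component and that $\min$ and $\max$ are $1$-Lipschitz in $\|\cdot\|_\infty$, but this is not the argument you gave.) Second, even granting the $\alpha$-bound, your proposed inductive refactoring to force ``same underlying vertex'' in every matched pair cannot in general succeed in one shot: simple examples show that a vertex surviving to $\infty$ under $f_v$ may die at a finite time under $g_v$, so no bijection between the two diagrams can simultaneously match death times and underlying vertices.

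The paper bypasses the whole difficulty by interpolating. Set $h^t_v = (1-t)f_v + t g_v$ and subdivide $[0,1]$ into finitely many intervals $[t_i,t_{i+1}]$ on which the sign of $H^t_v(x)-H^t_v(y)$ is constant for every pair of simplices $x,y$. On each such interval the decision rule makes identical choices at every merge, so the vertex--edge pairing of Definition~\ref{def:construct_pair} can be taken to be \emph{the same} at $t_i$ and $t_{i+1}$; this kills the $b,d,\gamma$ contributions outright and bounds the $\alpha$ contribution by $(t_{i+1}-t_i)\|f_v-g_v\|_\infty$. Applying the triangle inequality for $d_B^{R,0}$ along the subdivision and telescoping gives $\|f_v-g_v\|_\infty$. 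The interpolation is precisely the mechanism that makes your intended ``refactoring'' rigorous: rather than one large bijection that must absorb many order swaps at once, you get a chain of bijections each of which faces no swap at all.
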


\begin{proof}
    Let $h^t_v(x) = (1-t)f_v(x) + t g_v(x)$. Also let $H^t_v: G \to \Rbb$ be the induced function of $h^t_v$ on $G$, in the sense of Definition~\ref{def:coloring_fil}. We can divide $[0, 1]$ into finite intervals $[t_0, t_1], [t_1, t_2], ..., [t_n, t_{n+1}]$, where $t_0 = 0, t_{n+1} = 1, t_0 < t_1 < ... < t_{n+1}$, such that for all $t \in [t_i, t_{i+1}]$ and all simplicies $x, y \in G$, either
    $$H^t_v(x) - H^t_v(y) \leq 0 \text{ or } \geq 0\quad (\dagger).$$
    To be clear on the wording, this means that we cannot find $s, s' \in [t_i, t_{i+1}]$ such that $H^{s}_v(x) > H^{s}_v(y)$ but $H^{s'}_v(x) < H^{s'}_v(y)$.
    
    For all $s_1, s_2 \in [t_i, t_{i+1}]$, we claim that we can use Definition~\ref{def:construct_pair} to produce the same list of pairs $(v, e)$ (with some flexible adjustments at endpoints if needed).
    
    Since $f_e = g_e$, the list of death times and order of edges that appear do not change, what could change is which vertex to kill off at the time stamp. Let us now order the finite death times (i.e. those corresponding to almost holes), accounting for multiplicity, as $d_1 \leq d_2 \leq ... \leq d_n < \infty$. Now we observe that
    \begin{enumerate}
        \item At $d_1$, $\operatorname{RePHINE}(h_v^{s_1}, h)$ and $\operatorname{RePHINE}(h_v^{s_2}, h)$ will be merging the same two connected components with vertex representatives $v$ and $w$. For the RePHINE diagram at $s_1$ (resp. $s_2$), we choose which vertex to kill off based on which vertex has a higher value under $H_v^{s_1}$ (resp. $H_v^{s_2}$). By $(\dagger)$, we will be killing off the same vertex. If there happens to be a tie of $\alpha$ values, we will still kill off the same vertex in the comparison of $\gamma$ values since $f_e = g_e$. Finally, if there is a tie of $\gamma$ values, we make the flexible choice to kill off the same vertex.
        
        Since $f_e = g_e$, the edge associated to this vertex can be chosen to be the same. Hence, $\operatorname{RePHINE}(h_v^{s_1}, h)$ and $\operatorname{RePHINE}(h_v^{s_2}, h)$ will produce the same pair $(v, e)$ at time $d_1$.

        \item Suppose that up to the $i$-th death, both RePHINE diagrams are producing the same pairs and merging the same components. For the $i+1$-death, the RePHINE diagrams at both $s_1$ and $s_2$ will be merging the same two components $v'$ and $w'$. The same argument as the case for $d_1$ shows that they will produce the same pair of vertex and edge.

        \item After we go through all finite death times, both RePHINE diagrams will have the same list of vertices that are not killed off, which are then matched to real holes.
    \end{enumerate}
    This proves the claim above. From triangle inequality, we know that $d^{R, 0}_B(\operatorname{RePHINE}(f_v, h)^0, \operatorname{RePHINE}(g_v, h)^0)$ is bounded by the term $$\sum_{i=0}^n d^{R, 0}_B(\operatorname{RePHINE}(h_v^{t_i}, h)^0,\operatorname{RePHINE}(h_v^{t_{i+1}}, h)^0).$$
    
    For each summand on the right, we assign a bijection from $\operatorname{RePHINE}(h_v^{t_i}, h)^0$ to $\operatorname{RePHINE}(h_v^{t_{i+1}}, h)^0$ as follows - using our previous claim, we send $(0, d, \alpha, \gamma) \in \operatorname{RePHINE}(h_v^{t_i}, h)^0$ to the pair in $\operatorname{RePHINE}(h_v^{t_{i+1}}, h)^0$ that correspond to the same $(v, e)$. For the sake of paragraph space, we write $A_i = d^{R, 0}_B(\operatorname{RePHINE}(h_v^{t_i}, h)^0,\operatorname{RePHINE}(h_v^{t_{i+1}}, h)^0)$ and see that
    \begin{align*}
        A_i &= d^{R, 0}_B(\operatorname{RePHINE}(h_v^{t_i}, h)^0,\operatorname{RePHINE}(h_v^{t_{i+1}}, h)^0)\\ 
        &\leq \max_{(v, e)} |d_{t_{i+1}}(v) - d_{t_i}(v)| + |\alpha_{t_{i+1}}(v) - \alpha_{t_i}(v)| + |\gamma_{t_{i+1}}(v) - \gamma_{t_i}(v)|\\
        &=  \max_{(v, e)} 0 + |\alpha_{t_{i+1}}(v) - \alpha_{t_i}(v)| + 0 \tag*{Since $f_e = g_e$}\\
        &= \max_{v} |\alpha_{t_{i+1}}(v) - \alpha_{t_i}(v)|\\
        &= \max_{w} |h^{t_{i+1}}_v(c(w)) - h^{t_{i}}_v(c(w))|\\
        &= \max_{w} |(1-t_{i+1}) f_v(c(w)) + t_{i+1} g_v(c(w)) - (1-t_i) f_v(c(w)) - t_i g_v(c(w)) |\\
        &= \max_{w} |(t_i - t_{i+1}) f_v(c(w)) + (t_{i+1} - t_i) g_v(c(w))|\\
        &= \max_{w} (t_{i+1} - t_i) |f_v(c(w)) - g_v(c(w))|\\
        &\leq (t_{i+1} - t_i) ||f_v - g_v||_{\infty}.
    \end{align*}
    Hence, we have that 
    \begin{align*}
        d^{R, 0}_B(\operatorname{RePHINE}(f_v, h)^0, \operatorname{RePHINE}(g_v, h)^0) &\leq \sum_{i=0}^n A_i\\
        &\leq \sum_{i=0}^n (t_{i+1} - t_i) ||f_v - g_v||_{\infty}\\
        &= ||f_v - g_v||_{\infty}.
    \end{align*}
\end{proof}

\begin{remark}
    In the proof of the previous proposition, we claimed that we can assign the same vertex-edge pair to each death time for both filtration at $t_i$ and $t_{i+1}$. This may seem contradictory at first, as this seems to suggest that, by connecting the endpoints of the interval, RePHINE would assign the same vertices on $G$ as real holes regardless of the choice of functions. However, in our proof, the choice of vertex-edge assignment on $t_i \in [t_i, t_{i+1}]$ need not be the same as the choice of vertex-edge assignment on $t_i \in [t_{i-1}, t_i]$. This is what we meant by ``flexibility" in Definition~\ref{def:construct_pair}, the key point is that both choices will give the same RePHINE diagram.
\end{remark}

\begin{proposition}\label{prop::rephine_e_stable}
    Suppose $f_v = g_v = h$ for some vertex coloring function $h$, then
    \[d^{R, 0}_B(\operatorname{RePHINE}(h, f_e)^0, \operatorname{RePHINE}(h, g_e)^0) \leq 2 ||f_e - g_e||_{\infty}.\]
\end{proposition}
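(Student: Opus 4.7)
The plan is to mirror the strategy in Proposition~\ref{prop::rephine_v_stable}, but this time linearly interpolating the edge filter. Set $h^t_e(a,b) = (1-t)f_e(a,b) + t g_e(a,b)$ for $t \in [0,1]$, which stays in $\mathbb{R}_{>0}$ by convexity. Let $H^t_e$ denote the induced function on $V \cup E$. I would then partition $[0,1]$ into finitely many intervals $[t_0,t_1], \dots, [t_n, t_{n+1}]$ on which the following two orderings are stable: (i) for any two simplices $x,y \in G$, the sign of $H^t_e(x) - H^t_e(y)$ is constant; and (ii) for any two vertices $v,w$, the sign of $\gamma_t(v) - \gamma_t(w)$ is constant, where $\gamma_t(v) = \min_{u \in N(v)} h^t_e(c(v),c(u))$. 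Both conditions are governed by finitely many linear functions of $t$, so only finitely many crossings occur and such a partition exists.

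The core claim is that on each subinterval $[t_i, t_{i+1}]$, both endpoint RePHINE diagrams admit the same vertex-edge pair assignment from Definition~\ref{def:construct_pair}. The argument proceeds by induction over the (common) order in which edges arrive under condition (i). At each merge event, the decision rule first compares $\alpha$-values, which agree identically since $f_v = g_v = h$; on $\alpha$-ties, it compares $\gamma$-values, and condition (ii) guarantees the same vertex is selected at both $t_i$ and $t_{i+1}$; any remaining ties are resolved by the flexible choice permitted in Definition~\ref{def:construct_pair}, which we make consistently. The same common edge is then assigned to the event. Real holes receive the leftover vertices, and since the number of connected components of $G$ is independent of the filtration, this matching is consistent as well.

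Once the pair assignment agrees on both endpoints, I bound the contribution of each tuple to $d^{R,0}_B(\operatorname{RePHINE}(h, h_e^{t_i})^0, \operatorname{RePHINE}(h, h_e^{t_{i+1}})^0)$ term by term. The birth times are both $0$; the $\alpha$-values coincide; the death time of a vertex $v$ equals $h_e^{t}(c(v_1),c(v_2))$ for its assigned edge, hence changes by at most $(t_{i+1}-t_i)\|f_e - g_e\|_\infty$; and since $\gamma_t(v)$ is a minimum of values that each shift by at most that much, $|\gamma_{t_{i+1}}(v) - \gamma_{t_i}(v)| \leq (t_{i+1}-t_i)\|f_e-g_e\|_\infty$. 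Real holes contribute $0$. The total per subinterval is at most $2(t_{i+1}-t_i)\|f_e-g_e\|_\infty$, and summing telescopes to $2\|f_e-g_e\|_\infty$ via the triangle inequality for $d_B^{R,0}$.

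The main obstacle I anticipate is the bookkeeping around tie-breaking: showing that the ``flexible'' choices in Definition~\ref{def:construct_pair} can genuinely be made in the same way at both endpoints of a subinterval, and that the partition of $[0,1]$ really can be made finite enough to stabilize both the filtration order and the $\gamma$-order simultaneously. The factor of $2$ (as opposed to the $1$ in Proposition~\ref{prop::rephine_v_stable}) reflects that, in this regime, both $d$ and $\gamma$ move under the interpolation, while $\alpha$ is frozen --- so the two moving coordinates each contribute $\|f_e-g_e\|_\infty$.
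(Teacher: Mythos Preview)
Your approach is essentially the same as the paper's: interpolate the edge filter, partition $[0,1]$ so that the edge order is stable, show the vertex--edge assignment from Definition~\ref{def:construct_pair} can be made identically at both endpoints of each subinterval, and then bound the $d$- and $\gamma$-contributions each by $(t_{i+1}-t_i)\|f_e-g_e\|_\infty$ before telescoping. The one minor difference is that you impose condition (ii) on the $\gamma$-order as a separate partitioning requirement, whereas the paper derives the stability of the $\gamma$-comparison directly from your condition (i) via a short contradiction argument (if $\gamma_{s_1}(a) < \gamma_{s_1}(b)$ but $\gamma_{s_2}(a) > \gamma_{s_2}(b)$, the minimizing edges witness a violation of the edge-order condition) --- so your condition (ii) is in fact redundant, though harmless.
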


\begin{proof}
    Let $h^t_e(x) = (1-t) f_e(x) + t g_e(x): X \times X \to \Rbb_{>0}$ be as in the previous proof. Also let $H^t_e: G \to \Rbb_{\geq 0}$ denote the induced function on $G$ in the sense of Definition~\ref{def:coloring_fil}.  We can divide $[0, 1]$ into finite intervals $[t_0, t_1], [t_1, t_2], ..., [t_n, t_{n+1}]$, where $t_0 = 0, t_{n+1} = 1, t_0 < t_1 < ... < t_{n+1}$, such that for all $t \in [t_i, t_{i+1}]$ and all edges $x, y \in G$, either
    $$H^t_e(x) - H^t_e(y) \leq 0 \text{ or } \geq 0\quad (\dagger).$$
    For all $s_1, s_2 \in [t_i, t_{i+1}]$, we claim that we can use Definition~\ref{def:construct_pair} to produce the same list of pairs $(v, e)$ (with some flexible adjustments at endpoints if needed).

    The death times for the RePHINE diagrams at $s_1$ and $s_2$ may be different. Let us write $d_1^{s_1} \leq  ... \leq d_n^{s_1}$ (with multiplicity) to indicate all the finite death times for $s_1$, and similarly we write $d_1^{s_2} \leq ... \leq d_n^{s_2}$ for $s_2$ (with reordering allowed for deaths that occur at the same time). We claim that the corresponding $(v, e)$ produced at $d_1^{s_2}$ and $d_i^{s_2}$ can be chosen to be the same.
    \begin{enumerate}
        \item For each death time that occurs, we are free to choose any of the merging of two components that occurred at that time to be assigned to that death time.
        
        \item At $d_1^{s_1}$, the death occurs between the merging of two vertices $v$ and $w$ by an edge $e$ such that $H^{s_1}_e(e) = d_1^{s_1}$. If $H^{s_2}_e(e) = d_1^{s_2}$, then we can choose the first death to occur with the same edge $e$ between $v$ and $w$.
        
        Otherwise, suppose $H^{s_2}_e(e) > d_1^{s_2}$. There exists an edge $e'$ such that $H^{s_2}_e(e') = d_1^{s_2}$, so $H^{s_2}_e(e) > H^{s_2}_e(e')$. By $(\dagger)$, this means that $H^{s_1}_e(e) \geq H^{s_1}_e(e')$, which implies that $H^{s_1}_e(e') = d_1^{s_1}$. We instead choose the first death in $d_1^{s_1}$ to occur with the edge $e'$ between its adjacent vertices.

        In either case, we see that at the first death time, we can choose an assignment such that the RePHINE diagrams at $s_1$ and $s_2$ are merging the same two connected components $a, b$ by the same edge. Now we will show that they will kill off the same vertex. Since $f_v = g_v$, the first comparison will always give the same result. If there is a tie, then we are comparing $f_e$ and $g_e$. Suppose for contradiction, that without loss, $a$ has lower $\gamma$ value at $s_1$ and $b$ has lower $\gamma$ value at $s_2$. This means that there exists an edge $e_a$ adjacent to $a$ such that
        \[H^{s_1}_e(e_a) < H^{s_1}_e(e), \text{ for all } e \text{ adjacent to } b.\]
        Now, since $b$ has lower $\gamma$ value at $s_2$, this means that there exists an edge $e_b$ adjacent to $b$ such that 
        \[H^{s_2}_e(e_b) < H^{s_2}_e(e), \text{ for all } e \text{ adjacent to } a.\]
        In particular, this means that $H^{s_2}_e(e_b) < H^{s_2}_e(e_a)$ and $H^{s_1}_e(e_b) > H^{s_1}_e(e_a)$, which violates $(\dagger)$. Hence, we will have a consistent vertex to kill off. Finally, if there is a tie, then we flexibly choose the same vertex to kill off. Since we are comparing the same edge, there is a canonical edge associated too.

        Thus, we have shown that $d_1^{s_1}$ and $d_2^{s_2}$ can be chosen to give the same vertex edge pair.

        \item Inductively, suppose that up to the $i$-th death, both RePHINE diagrams are producing the same pairs and merging the same components.

        For the $i+1$-th death, $d^{s_1}_{i+1}$ occurs between the merging of two connected components $C_1$ and $C_2$ by an edge $e$ such that $H^{s_1}_{e}(e) = d^{s_1}_{i+1}$. Now if $H^{s_2}_{e}(e) = d^{s_2}_{i+1}$, then by our inductive hypothesis we can choose both filtration so that they would be merging the same connected components.

        Now suppose $H^{s_2}_{e}(e) \neq d^{s_2}_{i+1}$. By the inductive hypothesis, it cannot be lower, so $H^{s_2}_{e}(e) > d^{s_2}_{i+1}$. In this case, we look at $d^{s_2}_{i+1}$ itself, which also occurs with an edge $e'$ that merges connected components $C_1'$ and $C_2'$. Hence, we have that
        \[H^{s_2}_{e}(e) > H^{s_2}_{e}(e') = d^{s_2}_{i+1}.\]
        By $(\dagger)$, this means that
         \[H^{s_1}_{e}(e) \geq H^{s_2}_{e}(e').\]
         By our inductive hypothesis, this edge $e'$ cannot have occurred in prior deaths, hence we have that $H^{s_2}_{e}(e') = d^{s_1}_{i+1}$, and the same arguments as the base case follow through.

         In either case, we see that at the $i+1$-th death time, we can choose an assignment such that the RePHINE diagrams at $s_1$ and $s_2$ are merging the same two connected components $a, b$ by the same edge. Similar to our discussion in the base case, the vertex-edge pair produced would be consistent.

         \item After we go through all finite death times, both RePHINE diagrams will have the same list of vertices that are not killed off, which are then matched to real holes.
    \end{enumerate}
    This proves the claim above. Now, from triangle inequality, we again have that $d^{R, 0}_B(\operatorname{RePHINE}(h, f_e)^0, \operatorname{RePHINE}(h, g_e)^0)$ is bounded by the sum
\[ \sum_{i=0}^n d^{R, 0}_B(\operatorname{RePHINE}(h, h_e^{t_i})^0,\operatorname{RePHINE}(h, h_e^{t_{i+1}})^0).\]
    For each summand on the right, we assign a bijection with the exact same strategy as the proof of the previous proposition. For the sake of paragraph space, we write $A_i = d^{R, 0}_B(\operatorname{RePHINE}(h, h_e^{t_i})^0,\operatorname{RePHINE}(h, h_e^{t_{i+1}})^0)$ and compute that
    \begin{align*}
         A_i &= d^{R, 0}_B(\operatorname{RePHINE}(h, h_e^{t_i})^0,\operatorname{RePHINE}(h, h_e^{t_{i+1}})^0)\\
         &\leq \max_{(v, e)} |d_{t_{i+1}}(v) - d_{t_i}(v)| + |\alpha_{t_{i+1}}(v) - \alpha_{t_i}(v)| + |\gamma_{t_{i+1}}(v) - \gamma_{t_i}(v)|\\
        &=  \max_{(v, e)} |d_{t_{i+1}}(v) - d_{t_i}(v)| + 0 + |\gamma_{t_{i+1}}(v) - \gamma_{t_i}(v)| \tag*{Since $f_v = g_v$}\\
        &= \max_{(v, e)} |H^{t_{i+1}}_e(e) - H^{t_i}_e(e)| + |\gamma_{t_{i+1}}(v) - \gamma_{t_i}(v)| \\
        &\leq (t_{i+1} - t_i) ||f_e - g_e||_{\infty} + \max_{v} |\gamma_{t_{i+1}}(v) - \gamma_{t_i}(v)|.
    \end{align*}
    We claim that $|\gamma_{t_{i+1}}(v) - \gamma_{t_i}(v)| \leq ||h^{t_{i+1}}_e - h^{t_i}_e||_{\infty}$. Indeed, without loss let us say $\gamma_{t_{i+1}}(v) \geq \gamma_{t_{i}}(v)$. Let $e_i$ be the edge adjacent to $v$ that has minimum value under $H^{t_i}_e$, then this means that
    \begin{align*}
        |\gamma_{t_{i+1}}(v) - \gamma_{t_i}(v)| &= \gamma_{t_{i+1}}(v) - \gamma_{t_i}(v)\\
        &= \gamma_{t_{i+1}}(v) - H^{t_i}_e(e_i)\\
        &\leq H^{t_{i+1}}_e(e_{i}) - H^{t_i}_e(e_i) \tag*{Since $\gamma_{t_{i+1}}(v)$ is minimum}\\
        &\leq ||H^{t_{i+1}}_e - H^{t_i}_e||_{\infty}\\
        &\leq ||h^{t_{i+1}}_e - h^{t_i}_e||_{\infty}\\
        &\leq (t_{i+1} - t_i) ||f_e - g_e||_{\infty}.
    \end{align*}
    Thus, we have that
    \[ A_i = d^{R, 0}_B(\operatorname{RePHINE}(h, f_e)^0, \operatorname{RePHINE}(h, g_e)^0) \leq 2 (t_{i+1} - t_i) ||f_e - g_e||_{\infty}.\]
    Hence, we have that 
    \begin{align*}
        d^{R, 0}_B(\operatorname{RePHINE}(h, f_e)^0, \operatorname{RePHINE}(h, g_e)^0) &\leq \sum_{i=0}^n  A_i\\
        &\leq \sum_{i=0}^n  2 (t_{i+1} - t_i) ||f_e - g_e||_{\infty}\\
        &= 2 ||f_e - g_e||_{\infty}.
    \end{align*}
\end{proof}

Now we finally give a proof of Theorem~\ref{thm::stability_REPHINE}.

\begin{proof}[Proof of Theorem~\ref{thm::stability_REPHINE}]
On the $1$-dimensional components of the RePHINE diagram, we have the usual bottleneck distance. \citet{Cohen-Steiner_Edelsbrunner_Harer_2006} gives a standard bound on this term by $d_B(\operatorname{RePHINE}(f)^1, \operatorname{RePHINE}(g)^1) \leq ||f_e - g_e||_{\infty}$. The theorem then follows from the triangle inequality, the inequality in the previous sentence, and the previous two propositions.
\end{proof}

Now we show that $\operatorname{SpectRe}$ is locally stable under the metric $d^{\operatorname{Spec} R}_B$.

\begin{proof}[Proof of Theorem~\ref{thm::rephine_spec_loc_stab}]
    Let us \ul{first prove the case with constraints on both $f_v$ and $f_e$}. We will again split this into two cases where $f_e = g_e$ and $f_v = g_v$ respectively.

    Suppose again that $f_v = g_v = h$, let us try to follow the proof of Proposition~\ref{prop::rephine_e_stable} to give an idea on why using this method falls apart. let $h^t_e(x) = (1-t)f_e(x) + t g_e(x)$ and $H^t_e: G \to \Rbb$ be the induced function of $h^t_e$ on $G$, in the sense of Definition~\ref{def:coloring_fil}. We can again divide $[0, 1]$ into finite intervals $[t_0, t_1], [t_1, t_2], ..., [t_n, t_{n+1}]$, where $t_0 = 0, t_{n+1} = 1, t_0 < t_1 < ... < t_{n+1}$, such that for all $t \in [t_i, t_{i+1}]$ and all simplicies $x, y \in G$, either
    $$H^t_e(x) - H^t_e(y) \leq 0 \text{ or } \geq 0\quad (\dagger).$$
    For all $s_1, s_2 \in [t_0, t_{1}]$, we can again use Definition~\ref{def:construct_pair} to produce the same list of pairs $(v, e)$ (vertex to edge identification). Previously, by choosing $s_1 = t_0 = 0$ and $s_2 = t_{1}$, we were able to obtain a reasonable bound on the bottleneck distance for $\operatorname{RePHINE}$ in terms of the $L^{\infty}$ norms of $h^{t_0}_e$ and $h^{t_1}_e$. We could do this for $\operatorname{RePHINE}$ because the $b, d, \alpha, \gamma$ parameters of $\operatorname{RePHINE}$ are all not sensitive to the loss of injectivity. However, the $\rho$-parameter in $\operatorname{SpectRe}$ is sensitive to the loss of injectivity, as seen in Example~\ref{exp:rephine_spec_not_globally_stable}. Moreover, the way we constructed the division of $[0, 1]$ indicates that we are forced to cross some time stamps $t$ in $[0, 1]$ where $h^t_e$ is no-longer injective.

    However, we observe that clearly we could get the desired bound 
    \[d^{\operatorname{Spec}R, 0}_{B}(\operatorname{SpectRe}(h, f_e)^0, \operatorname{SpectRe}(h, g_e)^0 ) \leq 2 ||f_e - g_e||_{\infty},\]
    provided that the following more restrictive condition holds - $h^t_e$ is injective for all $t \in [0, 1]$. The bounds on the $b, d, \alpha, \gamma$ parameters evidently follows from the same proof of Proposition~\ref{prop::rephine_v_stable}. For the bound of $\rho$, we observe that in the production of the vertex-edge pairs $(v, e)$ in the proof of Proposition~\ref{prop::rephine_v_stable}, we can choose the order of vertex deaths to be the same for both $(h, f_e) = (h, h^0_e)$ and $(h, g_e) = (h, h^1_e)$. Furthermore, the condition that $h^t_e$ is injective for all $t \in [0, 1]$ means that the ordering of colors in $X$ given by $f_e$ and $g_e$ respectively are exactly the same. Furthermore, both orderings are strict as they are injective. Thus, the component that the vertices die in at each time are also the same. What this effectively means is that, $\rho_f(v) = \rho_g(v)$ for all $v \in V$ (after choosing the $(v, e)$ identification). Thus, we would obtain the same bound.

    Suppose $f_e = g_e = h$, then we note that an analogous argument (although not required, see the proof of Theorem~\ref{thm::bound_on_instability} later), would work to show the bound
    \[d^{\operatorname{Spec}R, 0}_{B}(\operatorname{SpectRe}(h, f_v)^0, \operatorname{SpectRe}(h, g_v)^0 ) \leq ||f_v - g_v||_{\infty},\]
    if we impose the condition that $h^t_v$ is injective for all $t \in [0, 1]$ in the context of the proof for Proposition~\ref{prop::rephine_v_stable}.
    
    We still need to check what happens for $d^{\operatorname{Spec}R, 1}_{B}$, which is no longer the usual bottleneck distance. If $f_e = g_e = h$ and $h^t_v$ is injective for all $t \in [0, 1]$, then the 1st dimensional component of $\operatorname{SpectRe}(f_v, h)$ and $\operatorname{SpectRe}(g_v, h)$ would quite literally be identical. If $f_v = g_v = h$, and $h^t_e$ is injective for all $t \in [0, 1]$, then a similar argument as in Proposition~\ref{prop::rephine_e_stable} would show that
    \[d^{\operatorname{Spec}R, 1}_{B}(\operatorname{SpectRe}(h, f_e)^1, \operatorname{SpectRe}(h, g_e)^1) \leq ||f_e - g_e||_{\infty}.\]
    The idea is that the only obstruction to this bound was the presence of the $\rho$-parameter, which we could always choose the presence of cycles to have the same strict order with the same graph components showing up.

    Thus, we have proven the following result - let $f = (f_v, f_e)$ and $g = (g_v, g_e)$, suppose $h^t_v(x) = (1-t)f_v(x) + t g_v(x)$ and $h^t_e(x) = (1-t)f_e(x) + t g_e(x)$ are injective for all $t \in [0, 1]$, then
    \[d^{\operatorname{Spec}R}_{B}(\operatorname{SpectRe}(f), \operatorname{SpectRe}(g) ) \leq 3 ||f_e - g_e||_{\infty} + ||f_v - g_v||_{\infty}.\]
    It remains for us to show that the conditions on $h^t_v$ and $h^t_e$ are locally satisfied. However, we note that clearly $\operatorname{Conf}_{n_v}(\Rbb)$ and $\operatorname{Conf}_{n_e}(\Rbb_{>0})$ are both locally convex, which is the same as imposing the hypothesis to obtain this bound. Thus, we have proven that $\operatorname{SpectRe}$ is locally stable in $(f_v, f_e)$.\\

    \ul{Let us now argue why the local stability condition is not required for $f_v$}. Indeed, this is because the parameter $\rho$ in SpectRe is completely independent of the vertex-level filtration. No matter which vertex one decides to kill off, the connected component the vertex dies in is always the same if the edge filtration function does not change (and the same goes for which edge is born). Thus, if $f_e = g_e$, then one can correctly assign the matching of tuples so that the $\rho$-components cancel out identically during a similar linear interpolation proof as in Proposition~\ref{prop::rephine_v_stable}. This concludes the proof.
\end{proof}

Now we will give a proof of Theorem~\ref{thm::bound_on_instability}, which gives an upper bound on how unstable SpectRe can be relative to the complexity of the graph.
\begin{proof}[Proof of Theorem~\ref{thm::bound_on_instability}]
    Let $D$ denote the distance $d^{\operatorname{Spec} R, 0}_B(\operatorname{SpectRe}(G, f_v, f_e), \operatorname{SpectRe}(G, g_v, g_e))$. By the explanation in the proof of Theorem~\ref{thm::persistent_Laplacian_no_more}, we know the SpectRe is globally stable in the vertex filtration. Thus, we have that
    \begin{align*}
        D &\leq d^{\operatorname{Spec} R, 0}_B(\operatorname{SpectRe}(G, f_v, f_e)^0, \operatorname{SpectRe}(G, g_v, f_e)^0) \\
        &+ d^{\operatorname{Spec} R, 0}_B(\operatorname{SpectRe}(G, g_v, f_e)^0, \operatorname{SpectRe}(G, g_v, g_e)^0)\\
        &\leq ||f_v - g_v||_{\infty} + d^{\operatorname{Spec} R, 0}_B(\operatorname{SpectRe}(G, g_v, f_e)^0, \operatorname{SpectRe}(G, g_v, g_e)^0).\\
    \end{align*}
    This means that we have reduced to the case that they have the same vertex-filtration function. Now observe that the SpectRe diagram for both $(G, g_v, f_e)$ and $(G, g_v, f_e)$ \ul{are the same outside of the vertex deaths at time $a_i$, $a_{i+1}$ for $f_e$ and time $a_{i+1}$ for $g_e$}, by how $g_e$ is constructed. Thus, we can choose a suitable bijection $\pi$ to cancel out the identical pairs outside this critical range. In the critical range, the vertex deaths at time $a_{i+1}$ for $f_e$ are a subset of vertex deaths at time $a_{i+1}$ for $g_e$, such that they have the same $\alpha$ and $\rho$ parameters, so we extend the bijection $\pi$ to match them too. Finally, we match the vertex deaths at time $a_{i}$ for $f_e$ to the remaining ones left in $a_{i+1}$ for $g_e$. Write $D' = d^{\operatorname{Spec} R, 0}_B(\operatorname{SpectRe}(G, g_v, f_e)^0, \operatorname{SpectRe}(G, g_v, g_e)^0)$, this bijection $\pi$ will give us the bound
    \begin{align*}
    D' &\leq \underbrace{(a_{i+1} - a_i)}_{\text{$d$-parameter}} + \underbrace{||f_e - g_e||_{\infty}}_{\text{$\gamma$-parameter}} + \underbrace{\max_{(H_1, H_2) \in Y} d^{\Spec}(\rho(H_1), \rho(H_2))}_{\text{$\rho$-parameter}}\\
        &\leq 2(a_{i+1} - a_i) + \max_{(H_1, H_2) \in Y} d^{\Spec}(\rho(H_1), \rho(H_2)).
    \end{align*}
    Here, $Y$ is the set indicated in the description of Theorem~\ref{thm::bound_on_instability}, that is - $Y$ is the collection of pairs $(H_1, H_2)$ where $H_1$ is the connected graph that a vertex v died at time $a_i$ for $f_e$ is contained in, $H_2$ is the connected graph the same vertex $v$ died at time $a_{i+1}$ for $g_e$ is in, and $H_1 \subset H_2$. The reason why $Y$ occurs here comes from a direct examination of the graphs $H_1$ and $H_2$ that the $\rho$-parameter considers in the bijection $\pi$.\\

    By the interlacing theorem (Proposition 3.2.1 of \citet{brouwer2012spectra}), $d^{\operatorname{Spec}}(\rho(H_1), \rho(H_2))$ can be written as $\operatorname{tr}(\Delta_0(H_2)) - \operatorname{tr}(\Delta_0(H_1))$, where $\operatorname{tr}$ is the matrix trace and $\Delta_0(-)$ is the graph Laplacian. The trace of Laplacian is also the sum of degrees of the graph, which is also twice the number of edges. Thus, we have that
$$D' \leq 2(a_{i+1} - a_i) + 2 \max_{(H_1, H_2) \in Y} |E(H_2)| - |E(H_1)|.$$

This concludes the proof for the 0-dimensional part.\\

For the 1-dimensional part, we can perform a similar bijection in this case. In this case, the $\alpha$ and $\gamma$ parameters are automatically zero, so the terms they contribute ($||f_v - g_e||_{\infty}$ for $\alpha$ and $(a_{i+1} - a_i)$ for $\gamma$) also vanish. Following a similar proof as before yields the bound
\[(a_{i+1} - a_i) + 2 \max_{(H_1, H_2) \in Z} |E(H_2)| - |E(H_1)|.\]
\end{proof}

\section{Expressivity of Spectral Information}\label{subsec::spectral_express}

In this section, we will give a self-contained proof of Proposition~\ref{thm::persistent_Laplacian_no_more}. We will first introduce the relevant concepts. Suppose $K$ is an $n$-dimensional finite simplicial complex. There is a standard simplicial chain complex of the form
\[\begin{tikzcd}
	{...} & 0 & {C_n(K)} & {...} & {C_1(K)} & {C_0(K)} & 0
	\arrow[from=1-1, to=1-2]
	\arrow["{\partial_{n+1}}", from=1-2, to=1-3]
	\arrow["{\partial_n}", from=1-3, to=1-4]
	\arrow[from=1-4, to=1-5]
	\arrow["{\partial_1}", from=1-5, to=1-6]
	\arrow["{\partial_0}", from=1-6, to=1-7]
\end{tikzcd}.\]
Here each $C_i(K)$ has a formal basis being the finite set of $i$-simplicies in $K$, hence there is a way well-defined notion of an adjoint (which is the transpose) $\partial_i^T$ for each $\partial_i$. The $i$-th combinatorial Laplacian of $K$ is defined as
\[\Delta_i(K) = \partial_i^T \circ \partial_i + \partial_{i+1} \circ \partial_{i+1}^T.\]
$\Delta_i$ is a linear operator on $C_i(K)$. Note that when $K$ is a graph and $i = 0$, $\Delta_0(K)$ is exactly the graph Laplacian of $K$. It is a general fact that the dimension of $\ker \Delta_i(K)$ is the same as the $i$-th Betti number of $K$. Hence, the multiplicity of the zero eigenvalues of $\Delta_i(K)$ corresponds to the $i$-th Betti number of $K$. However, the Betti numbers of $K$ (harmonic information) do not give any information on the non-zero eigenvalues of $\Delta_i(K)$ (which we can think of as the non-harmonic information). This is the data that we would like to keep track of.

For a filtration of a simplicial complex $K$ by $\emptyset = K_0 \subseteq K_1 \subseteq ... \subseteq K_m = K$, \citet{Wang_Nguyen_Wei_2020} proposed a persistent version of combinatorial Laplacians as follows.

\begin{definition}
Let $C^t_q = C_q(K_t)$ denote the $q$-th simplicial chain group of $K_t$, $\partial^t_q: C_q(K_t) \to C_{q-1}(K_t)$ be the boundary map on the simplicial subcomplex $K_t$. For $p > 0$, we use $\Cbb^{t+p}_q$ to denote the subset of $C_q^{t+p}$ whose boundary is in $C^t_{q-1}$ (in other words $\Cbb^{t+p}_q \coloneqq \{\alpha \in C^{t+p}_q\ |\ \partial^{t+p}_q(\alpha) \in C^t_{q-1}\}$).\\

We define the operator $\eth^{t+p}_{q}: \Cbb^{t+p}_q \to C^t_{q-1}$ as the restriction of $\partial^{t+p}_q$ to $\Cbb^{t+p}_q$. From here, we define the $p$-persistent $q$-combinatorial Laplacian $\Delta^{t+p}_q(K): C_q(K_t) \to C_q(K_t)$ as
\[\Delta^{t+p}_q(K) = \eth^{t+p}_{q+1} (\eth^{t+p}_{q+1})^T + (\partial^t_q)^T \partial^t_q.\]
\end{definition}

Note that the multiplicity of the zero eigenvalues in $\Delta^{t+p}_q(K)$ coincides with the $p$-persistent $q$-th Betti number.

Now we will focus on the special case where $K = G$ is a graph. In this case, we only need to look at the $p$-persistent $1$-combinatorial Laplacians and the $p$-persistent $0$-combinatorial Laplacians. Our goal is to augment the RePHINE diagram, so we intuitively would like to include all the non-zero eigenvalues of the $p$-persistent $q$-combinatorial Laplacians in our augmentation. In this section, we will show that augmentation is no more expressive than simply focusing on the spectral information of the ordinary graph Laplacian.

\begin{lemma}\label{lem:same_non_zero_eigen}
On a graph $G$, the multi-set of non-zero eigenvalues of $\Delta_1(G)$ is the same as the non-zero eigenvalues of $\Delta_0(G)$. 
\end{lemma}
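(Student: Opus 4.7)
The plan is to reduce the claim to a standard linear-algebra fact about the shared non-zero spectrum of $A A^T$ and $A^T A$.

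First I would unpack the definitions of $\Delta_0(G)$ and $\Delta_1(G)$ using the fact that $G$ is a graph, i.e.\ a $1$-dimensional simplicial complex. Since there are no $2$-simplices, $C_2(G) = 0$, so $\partial_2 = 0$ and therefore $\Delta_1(G) = \partial_1^T \partial_1$. Likewise $\partial_0 = 0$ by the standard convention, so $\Delta_0(G) = \partial_1 \partial_1^T$. Writing $B \coloneqq \partial_1$, the statement reduces to showing that $B B^T$ and $B^T B$ have the same non-zero eigenvalues with the same multiplicities.

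Next I would invoke (or briefly reprove) the standard fact that for any linear operator $B : C_1(G) \to C_0(G)$ between finite-dimensional inner product spaces, the operators $B B^T$ and $B^T B$ share their non-zero spectrum with multiplicity. A self-contained argument is: if $B^T B v = \lambda v$ with $\lambda \neq 0$, then $v \notin \ker B$ (since $\|Bv\|^2 = \langle v, B^T B v\rangle = \lambda \|v\|^2 \neq 0$), and applying $B$ gives $(B B^T)(Bv) = \lambda(Bv)$ with $Bv \neq 0$. The map $v \mapsto Bv$ therefore sends the $\lambda$-eigenspace of $B^T B$ injectively into the $\lambda$-eigenspace of $B B^T$; by symmetry (applying the same argument to $B^T$) it is a linear isomorphism between the corresponding eigenspaces, giving equality of multiplicities.

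Combining these two observations yields the conclusion. I do not anticipate any real obstacle here: the only subtlety is tracking the convention that $\Delta_0(G) = \partial_1 \partial_1^T$ (rather than, say, adding a $\partial_0^T \partial_0$ term), and confirming that the absence of higher simplices in a graph cleanly kills the $\partial_2 \partial_2^T$ summand in $\Delta_1(G)$. Once those reductions are made, the result is immediate from the SVD / shared-non-zero-spectrum identity.
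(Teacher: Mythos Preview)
Your proposal is correct and follows essentially the same approach as the paper: both reduce to $\Delta_0 = \partial_1\partial_1^T$ and $\Delta_1 = \partial_1^T\partial_1$ and then push eigenvectors through $\partial_1$ (or $\partial_1^T$) to match non-zero eigenspaces. The only minor difference is in closing the argument: you obtain the reverse inclusion by symmetry (swapping the roles of $B$ and $B^T$), whereas the paper instead counts dimensions via the Euler characteristic identity $|V(G)|-\dim\ker\Delta_0 = |E(G)|-\dim\ker\Delta_1$; your route is arguably cleaner and avoids that extra step.
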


\begin{proof}
    For ease of notation, we omit the parameter $G$ in the combinatorial Laplacian. Since $G$ has dimension $1$, $\partial_0$ and $\partial_2$ are both $0$. Hence, the two combinatorial Laplacians may be written as
    \[\Delta_0 = \partial_1 \circ \partial_1^T \text{ and } \Delta_1 = \partial_1^T \circ \partial_1.\]
    Let $v$ be an eigenvector of $\Delta_0$ corresponding to a non-zero eigenvalue $\lambda$, then 
    \[\Delta_1(\partial_1^T v) =  \partial_1^T (\partial_1 \circ \partial_1^T(v)) = \partial_1^T(\Delta_0(v)) = \partial_1^T(\lambda v) = \lambda (\partial_1^T v).\]
    Hence, $\partial_1^T v$ is an eigenvector of $\Delta_1$ with eigenvalue $\lambda$.
    
    We also need to check that if $v, w$ are linearly independent eigenvectors of $\Delta_0$ with the same eigenvalue $\lambda$, then $\partial_1^T v$ and $\partial_1^T w$ are linearly independent. Suppose for contradiction this is not the case, then there exist coefficients $a, b \in \Rbb$ (not all zero) such that
    \[0 = a \partial_1^T v + b \partial_1^T w = \partial_1^T (a v + b w).\]
    This means that $av + bw \in \ker(\partial_1^T) \subset \ker(\Delta_0)$ is a non-zero eigenvector corresponding to the eigenvalue $0$. However, we also know that $av + bw$ is a non-zero eigenvector of $\Delta_0$ corresponding to the eigenvalue $\lambda$. Thus, it has to be the case that $av + bw = 0$, so we have a contradiction.

    Hence, the non-zero eigenvalues of $\Delta_0$ form a sub-multiset of that of $\Delta_1$. The other direction may also be proven using linear algebra. Alternatively, however, we observe that by the equality of the Euler characteristic,
    \[|V(G)| - |E(G)| = \chi(G) = \dim \ker(\Delta_0) - \dim \ker(\Delta_1).\]
    Rearranging the terms gives us
    \[|V(G)| - \dim \ker(\Delta_0) = |E(G)| - \dim \ker(\Delta_1).\]
    This means that $\Delta_1$ and $\Delta_0$ have the same number of non-zero eigenvalues, so their respective multi-sets of non-zero eigenvalues are equal.
\end{proof}

Let $G$ be a graph and
\[\emptyset  = G_0 \subseteq G_1 \subseteq G_2 \subseteq ... \subseteq G_m = G\]
be a sequence of subgraphs of $G$. Recall that $\Delta^{t+p}_q(G)$ denotes the $p$-persistent $q$-combinatorial Laplacian operator. We will first examine what happens when $q = 1$. 

\begin{lemma}\label{lem::1_comb_non_zero}
     The $1$-combinatorial $p$-persistence Laplacian $\Delta^{t+p}_1(G)$ is equal to $\Delta^{t}_1(G) = \Delta_1(G_t)$. Moreover, the non-zero eigenvalues of $p$-persistence $\Delta^{t+p}_1(G)$ are the same as the non-zero eigenvalues of $\Delta^{t}_0(G)$, accounting for multiplicity.
\end{lemma}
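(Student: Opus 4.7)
The plan is to reduce both claims to the previous lemma by exploiting the fact that $G$ is one-dimensional. First I would unpack the definition, which gives
\[
\Delta^{t+p}_1(G) \;=\; \eth^{t+p}_{2}\,(\eth^{t+p}_{2})^T + (\partial^t_1)^T \partial^t_1.
\]
Since $G$ is a graph, every subgraph $G_s$ satisfies $\dim G_s \leq 1$, so $C_2(G_s) = 0$ for every $s$, which forces $\partial^{t+p}_2 = 0$ and therefore $\eth^{t+p}_2 = 0$ on the subspace $\Cbb^{t+p}_2$. Thus the first summand vanishes identically, and what remains is
\[
\Delta^{t+p}_1(G) \;=\; (\partial^t_1)^T \partial^t_1 \;=\; \Delta_1(G_t) \;=\; \Delta^{t}_1(G),
\]
proving the first half of the statement. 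The key point is that persistence provides no new information in top degree: when one of the chain groups above is already zero, the ``persistent'' correction term drops out.

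For the spectral claim, I would then simply invoke Lemma~\ref{lem:same_non_zero_eigen} applied to the graph $G_t$. That lemma says the multi-set of non-zero eigenvalues of $\Delta_1(G_t)$ agrees with that of $\Delta_0(G_t)$, and by definition $\Delta_0(G_t) = \Delta^{t}_0(G)$ (here the $0$-persistent $0$-Laplacian coincides with the ordinary graph Laplacian of $G_t$ because $\eth^{t}_1$ is just $\partial^t_1$ and $(\partial^t_0)^T \partial^t_0 = 0$). Combining the two displayed identities gives the desired equality of non-zero spectra.

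I do not expect any serious obstacle here: the whole argument is a two-line consequence of dimensional vanishing plus the earlier lemma. The one point to be careful about is the definition of $\eth^{t+p}_2$ on $\Cbb^{t+p}_2$ — I should verify that the domain $\Cbb^{t+p}_2$ is the zero space (not merely that the map is zero on it), which is immediate because $\Cbb^{t+p}_2 \subseteq C^{t+p}_2 = 0$. With that cleared up, the argument is complete.
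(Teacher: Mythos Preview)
Your proposal is correct and follows essentially the same approach as the paper: both arguments observe that $\eth^{t+p}_2$ vanishes because $G$ is one-dimensional, reduce $\Delta^{t+p}_1(G)$ to $(\partial^t_1)^T\partial^t_1 = \Delta_1(G_t)$, and then invoke Lemma~\ref{lem:same_non_zero_eigen}. Your extra care in checking that $\Cbb^{t+p}_2 = 0$ and that $\Delta^t_0(G) = \Delta_0(G_t)$ is sound and slightly more explicit than the paper's version, but the logical route is the same.
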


\begin{proof}
    Recall that $\Delta^{t+p}_q(G)$ is defined as
    \[\Delta^{t+p}_q(G) = \eth^{t+p}_{q+1} (\eth^{t+p}_{q+1})^T + (\partial^t_q)^T \partial^t_q.\]
    When $q = 1$, we know that $\eth^{t+p}_{2}(G)$ is the zero matrix since $G$ is a graph, and hence
    \[\Delta^{t+p}_1(G) = (\partial^t_1)^T \partial^t_1.\]
    This is independent of $p$ and is just $\Delta_1^t(G)$. Finally, from Lemma~\ref{lem:same_non_zero_eigen}, we have that $\Delta^{t}_1(G)$ and $\Delta^{t}_{0}(G)$ have the same multi-set of non-zero eigenvalues.
\end{proof}

\begin{remark}
This is reflective of the definition of the $p$-persistent $k$-th homology group of $G^t$, which is given by
\[H^p_k(G^t) = \ker \partial_k(G^t)/(\operatorname{im} \partial_{k+1}(G^{t+p}) \cap \ker(\partial_k(G^t)))\]
In this case, when $k = 1$, $\partial_{k+1}(G^{t+p})  = \partial_2(G^{t+p})$ is the zero map, so $H^p_1(G^t) = \ker \partial_1(G^t) = H^1(G^t)$. Hence, the $p$-persistent 1st homology groups of $G^t$ stays constant as $p$ varies. This is also reflective of the fact that an inclusion of subgraph $i: G \to G'$ induces an injective homomorphism $i_*: H_1(G) \to H_1(G')$.
\end{remark}

The focus of persistent spectral theory should then be on the data given by the graph Laplacians, so it makes sense for us to interpret what exactly $\Delta^{t+p}_0(G)$ is.

\begin{lemma}\label{lem::persistent_0_Lap}
    Suppose $\Cbb^{t+p}_1 = \{\alpha \in C^{t+p}_1\ |\ \partial_1^{t+p}(\alpha) \in C^{t}_0\}$ is equal to the span of all 1-simplicies in $G_{t+p}$ whose vertices are in $G_t$, the $p$-persistent $0$-combinatorial Laplacian operator of $G$
        \[\Delta^{t+p}_0(G) = \eth^{t+p}_{1} (\eth^{t+p}_{1})^T\]
    is the graph Laplacian of the subgraph of $G_{t+q}$ with all the vertices in $G_t$.
\end{lemma}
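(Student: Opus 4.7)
The plan is to identify the persistent Laplacian directly with the ordinary graph Laplacian of an induced subgraph, reducing the problem to matching linear maps on finite-dimensional chain groups. I would let $H$ denote the subgraph of $G_{t+p}$ whose vertex set is $V(G_t)$ and whose edge set consists of all edges of $G_{t+p}$ both of whose endpoints lie in $V(G_t)$; this is the candidate whose graph Laplacian $\Delta_0(H) = \partial_1^H (\partial_1^H)^T$ I aim to recover.

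First I would identify the relevant spaces. The codomain $C^t_0$ equals $C_0(G_t) = C_0(H)$, since both are free on $V(G_t)$. By the stated hypothesis, $\Cbb^{t+p}_1$ equals the span of the edges of $H$, so $\Cbb^{t+p}_1 = C_1(H)$ as a subspace of $C^{t+p}_1$. Next I would verify on basis elements that $\eth^{t+p}_1$ agrees with $\partial_1^H$: for any edge $e = [v_0,v_1]$ of $H$, one has $\eth^{t+p}_1(e) = \partial^{t+p}_1(e) = v_1 - v_0$, and because $v_0, v_1 \in V(H)$ this lies in $C^t_0 = C_0(H)$ and is precisely $\partial_1^H(e)$. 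Both chain groups carry the canonical inner product that makes simplices orthonormal, and the inner products inherited on $\Cbb^{t+p}_1$ and $C^t_0$ from the ambient spaces coincide with the native ones on $C_1(H)$ and $C_0(H)$, so taking adjoints yields $(\eth^{t+p}_1)^T = (\partial_1^H)^T$.

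I would then conclude by using that $\partial^t_0 = 0$ (there are no $(-1)$-simplices), so
\[
\Delta^{t+p}_0(G) = \eth^{t+p}_1 (\eth^{t+p}_1)^T + (\partial^t_0)^T \partial^t_0 = \partial_1^H (\partial_1^H)^T = \Delta_0(H),
\]
which is exactly the graph Laplacian of the stated induced subgraph.

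The main subtlety to keep in mind is the hypothesis itself: in general $\Cbb^{t+p}_1$ can strictly contain $C_1(H)$, since linear combinations of edges with an endpoint outside $V(G_t)$ may have the ``outside'' contributions cancel under $\partial_1^{t+p}$ and land back in $C^t_0$ (e.g.\ $e_1 - e_2$ where $e_1 = (v,u)$ and $e_2 = (v',u)$ with $v,v' \in V(G_t)$ but $u \notin V(G_t)$). The hypothesis rules this out by fiat, and once it is in force the rest of the proof is essentially bookkeeping; no nontrivial computation with eigenvectors or matrix algebra is needed beyond the chain-group identifications.
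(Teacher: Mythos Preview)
Your proof is correct and follows essentially the same approach as the paper: both arguments identify $\Cbb^{t+p}_1$ with $C_1(H)$ and $C^t_0$ with $C_0(H)$ via the hypothesis, observe that $\eth^{t+p}_1$ coincides with $\partial_1^H$ under these identifications, and conclude that the persistent Laplacian equals $\Delta_0(H)$. Your write-up is in fact slightly more explicit than the paper's (you spell out the adjoint step, the vanishing of $(\partial^t_0)^T\partial^t_0$, and the reason the hypothesis is genuinely needed), but the underlying idea is the same commutative-square identification.
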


\begin{proof}
    The map $\eth^{t+p}_1: \Cbb^{t+p}_1 \to C^{t}_{0}$ is the restriction map on $\partial^{t+p}$ onto $\Cbb^{t+p}_0$. Let $G'$ denote the subgraph of $G_{t+q}$ generated by vertices in $G_t$. Note that by our assumption $\Cbb^{t+p}_1 = C_1(G')$. In this case, there are two vertical isomorphisms, by quite literally the identity map, such that the following diagram commutes,
\[\begin{tikzcd}
	{C_1(G')} & {C_0(G')} \\
	{\Cbb_1^{t+p}} & {C_0^t}
	\arrow["\partial", from=1-1, to=1-2]
	\arrow[from=1-1, to=2-1]
	\arrow[from=1-2, to=2-2]
	\arrow["\eth", from=2-1, to=2-2]
\end{tikzcd}.\]
Hence, the graph Laplacian of $G'$ is the same as the Laplacian $\Delta_0^{t+p}$.
\end{proof}

\begin{corollary}\label{cor:graph_laplacian_enough}
Lemma~\ref{lem::1_comb_non_zero} asserts that the non-zero eigenvalues of $\Delta^{t+p}_1(G)$ are the same as the non-zero eigenvalues of $\Delta^{t}_0(G) = \Delta_0(G_t)$. In the special case where we focus on filtrations of $G$ given by $(F_v, F_e)$ outlined in Section~\ref{sec::setup}, we have that $\Delta^{t+p}_0(G)$ is the same as $\Delta_0(G_{t+p})$ in the edge filtration given by $F_e$.
\end{corollary}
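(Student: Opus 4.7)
The plan is to deduce the corollary directly from the three preceding lemmas, with the crucial input being the specific structure of the edge filtration $\{G^{f_e}_t\}$ defined in Section~\ref{sec::setup}. The first assertion is essentially a restatement: Lemma~\ref{lem::1_comb_non_zero} already gives that the non-zero spectrum of $\Delta^{t+p}_1(G)$ equals that of $\Delta^{t}_0(G)$, and the identification $\Delta^{t}_0(G) = \Delta_0(G_t)$ is obtained by applying Lemma~\ref{lem::persistent_0_Lap} in the degenerate case $p = 0$ (or, equivalently, directly from the definition). So no genuinely new argument is required for that half.

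For the second assertion, I would invoke Definition~\ref{def:coloring_fil}, which specifies $F_e(v) = 0$ for every vertex $v \in V(G)$. The immediate consequence is that every vertex of $G$ already belongs to $G^{f_e}_t$ for every $t \geq 0$; that is, $V(G^{f_e}_t) = V(G)$ as soon as $t \geq 0$. My goal would then be to verify the hypothesis of Lemma~\ref{lem::persistent_0_Lap}, namely that $\Cbb^{t+p}_1 = \{\alpha \in C^{t+p}_1 : \partial^{t+p}_1(\alpha) \in C^t_0\}$ coincides with the span of all $1$-simplices of $G_{t+p}$ whose vertices lie in $G_t$.

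Because $V(G_{t+p}) = V(G) = V(G_t)$ in this filtration, every edge $\alpha \in C^{t+p}_1$ automatically has both endpoints in $V(G_t)$, so $\partial^{t+p}_1(\alpha) \in C^t_0$ trivially. Hence $\Cbb^{t+p}_1 = C^{t+p}_1$, the hypothesis of Lemma~\ref{lem::persistent_0_Lap} holds, and the lemma identifies $\Delta^{t+p}_0(G)$ with the graph Laplacian of the subgraph of $G_{t+p}$ generated by the vertices of $G_t$. Since those vertices are all of $V(G)$, the generated subgraph is $G_{t+p}$ itself, yielding $\Delta^{t+p}_0(G) = \Delta_0(G_{t+p})$.

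This is a genuinely short corollary once Lemmas~\ref{lem:same_non_zero_eigen},~\ref{lem::1_comb_non_zero}, and~\ref{lem::persistent_0_Lap} are in place; I do not anticipate a serious obstacle. The only subtlety worth emphasizing is the convention $F_e(v) = 0$ from Definition~\ref{def:coloring_fil}, which is precisely what forces $V(G_t) = V(G)$ for all $t \geq 0$ and thereby collapses the restricted boundary operator $\eth^{t+p}_1$ into the ordinary boundary map $\partial^{t+p}_1$ on $G_{t+p}$. Without this convention the statement would fail, which is why the corollary is phrased for filtrations of the specific form introduced in Section~\ref{sec::setup}.
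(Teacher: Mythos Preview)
Your proposal is correct and follows essentially the same approach as the paper's proof: both arguments verify the hypothesis of Lemma~\ref{lem::persistent_0_Lap} by observing that in the edge filtration all vertices appear at time $0$ (so $C^t_0 = C^{t+p}_0$, equivalently $V(G_t) = V(G_{t+p}) = V(G)$), and then conclude that the subgraph of $G_{t+p}$ generated by $V(G_t)$ is $G_{t+p}$ itself. Your write-up is slightly more explicit in spelling out why $\Cbb^{t+p}_1 = C^{t+p}_1$, but the underlying idea is identical.
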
    

\begin{proof}
    We first observe that edge filtrations satisfy the hypothesis of Lemma~\ref{lem::persistent_0_Lap} since all vertices are spawned at time $0$ and hence $C^t_0 = C^{t+p}_0$. To prove Corollary~\ref{cor:graph_laplacian_enough}, observe Lemma~\ref{lem::persistent_0_Lap} implies that $\Delta^{t+p}_0(G)$ is the graph Laplacian of the subgraph of $G_{t+p}$ generated by the vertices in $G_t$. However, all vertices are spawned at the start, so they have the same vertex set and the subgraph is just the entire graph $G_{t+p}$.
\end{proof}

Now we will define the alternative descriptor in Proposition~\ref{thm::persistent_Laplacian_no_more} more formally and prove the theorem.

\begin{definition}
Given a graph $G = (V, E, c, X)$ and filtration functions $f = (f_v, f_e)$. We define $\Phi(f) \coloneqq \{(b(v), d(v), \alpha(v), \gamma(v), \rho'(v)\}_{v \in V} \sqcup \{(b(e), d(e), \alpha(e), \gamma(v), \rho'(e)\}_{e \in \{\text{ind. cycles}\}}$.\\

Here, the first four components are the same as $\operatorname{RePHINE}$ (Definition~\ref{def:RePHINE}). The last component $\rho'$ refers to the following data.

Let $d$ be the time the vertex $v$ dies in and $G(v)$ be the connected component of $v$ at time $d$. The edge filtration gives a sub-filtration of $G(v)$ such that at time 0, we start with only the vertices of $G(v)$ and we have all of $G(v)$ at time $d$. This gives a sequence of subgraphs:
$$ \text{vertices of } G(v) = G(v)_0 \subsetneq G(v)_1 \subsetneq … \subsetneq G(v)_m = G(v) \quad (\dagger)$$
$\rho’(v)$ returns the eigenvalues of the Laplacians $\Delta_0^{i+p}(G(v))$ and $\Delta_1^{i+p}(G(v))$ for all $i + p = m$ with respect to the filtration $(\dagger)$. Here $i$ refers to the subscript $G(v)_i$ in the filtration. The definition of $\rho’(e)$ for $e$ represents a cycle (in the sense of $H_1(G(v))$ is defined similarly, with the graph being the connected component of $e$ at the time $e$ is born in.
\end{definition}

Now we will prove that $\Phi$ is no more expressive than $\operatorname{SpectRe}$ given a technical assumption. The assumption is that we need to remember which vertex is assigned to which tuple and which edge (representing cycle) is assigned to which tuple in our computation of SpectRe. This assumption is always satisfied for a computer algorithm, as computing SpectRe (or RePHINE) in practice already goes through such assignments (see Algorithm 2 of \cite{immonen2023going}).

\begin{proof}[Proof of Proposition~\ref{thm::persistent_Laplacian_no_more}]
Let us look at why the SpectRe alternative $\Phi$ does not give more information than our SpectRe (Def 3.1) for vertices. The argument for the 1st-dimensional components will follow similarly.

By Lemma~\ref{lem::persistent_0_Lap} and Corollary~\ref{cor:graph_laplacian_enough} above, we have that $\Delta_0^{i+p}(G(v))$ is equal to the graph Laplacian of  $G(v)_{i+p} = G(v)_{m} = G(v)$. On the other hand, Lemma~\ref{lem::1_comb_non_zero} implies that $\Delta_1^{i+p}(G(v)) = \Delta_1(G(v)_i)$, where $\Delta_1$ denotes the 1st combinatorial Laplacian. The multiplicity of the zero eigenvalues of $\Delta_1^{i+p}(G(v))$ is equal to $\dim H_1(G(v)_i)$, which can be recovered by looking at the number of $(1, d(e))$ with $e \in G(v)$ that has appeared before or at $G(v)_i$.\\

By Lemma~\ref{lem::1_comb_non_zero}, the non-zero eigenvalues of $\Delta_1^{i+p}(G(v))$ are equal to the non-zero eigenvalues of the graph Laplacian of $G(v)_t$. Although $G(v)_i$ may not be connected, we can mark its connected components as $C_1, …, C_r$ (as graphs). Let $t(i)$ be the time where the filtration $(\dagger)$ got to $G(v)_i$. For each $1 \leq j \leq r$, we let $a_j \leq t(i)$ be the time for which the connected graph $C_j$ is created. The creation of the graph $C_j$ is done either by merging two components (i.e. a vertex in $C_j$ died at time $a_j$) or by adding a cycle (i.e. an $e$ was born at time $a_j$), so there exists a tuple in the SpectRe (Def 3.1) diagram that has the non-zero eigenvalues of the graph Laplacian of $C_j$. Doing this for all $j$ recovers the non-zero eigenvalues of $\Delta_1^{i+p}(G(v))$.
\end{proof}

The upshot is that considering the eigenvalues of persistent Laplacians on edge filtrations reduces to computations that could be found from the graph Laplacian and persistent homology.

\section{RePHINE on Monochromatic Graphs}

\begin{lemma}\label{lem::mono}
Let G and H be two graphs with the same number of nodes with one single color, then $\operatorname{RePHINE}$ can differentiate G and H if and only if either $b_0(G) \neq b_0(H)$ or $b_1(G) \neq b_1(H)$.    
\end{lemma}

Here $b_0, b_1$ refer to the 0th and 1st Betti numbers of G and H, which are the number of connected components and independent cycles.

\begin{proof}
Since G and H only have one color, $f_v, f_e$ are both constant functions of values $a_V, a_E$, so the $\alpha$ and $\gamma$ parameters of RePHINE for $G$ and $H$ are the same and can be discarded. Here, the filtrations with respect to $f_e$ has two steps - it first spawns all the vertices and then adds in all the edges. Thus, RePHINE can differentiate G and H if and only if looking at its first two parameters alone can differentiate $G$ and $H$. Since $f_v$ and $f_e$ are constant, a vertex either dies at time $a_V$ or at infinity, and all independent cycles are born at time $a_E$. Thus RePHINE can differentiate G and H if and only if they have different counts of pairs $(0, a_V), (0, \infty)$, and $(1, a_E)$. Since G and H have the same number of vertices, these counts differ if and only if $b_0(G) \neq b_0(H)$ and $b_1(G) \neq b_1(H)$.
\end{proof}

\section{Considerations and Comparisons with Alternative Designs}\label{sec::alternative}

In this section, we discuss on and how some topological descriptors we described in the main text may behave on alternative filtration and descriptor designs. Specifically, we will be discussing: 
\begin{enumerate}
    \item The behavior of PH for Biparameter Filtrations.
    \item The behavior of PH for Vietoris-Rips Filtrations.
    \item Spectral information on vertex-based filtrations.
\end{enumerate}

\subsection{Vertex and Edge Filtration vs. Biparameter Filtration}\label{sec::biparameter}

The purpose of this section is to compare the individual filtrations of $f_v$ and $f_e$ against the biparameter filtration inducded by $f_v$ and $f_e$ together.
\begin{definition}
    Let $f, g: G \to \Rbb$ be two filtration functions of a graph $G$. We define a function $f \oplus g: G \to \Rbb^2$ with $f\oplus g(x) = (f(x), g(x))$. A \textbf{biparameter filtration} of $G$ is the collection of the subgraphs $G_{s,t} \coloneqq (f \oplus g)^{-1}((-\infty, s] \times (-\infty, t]))$. Let $a_0, ..., a_n$ be the time steps for the filtration of $G$ by $f$, and let $b_0, ..., b_m$ be the time steps for the filtration of $G$ by $g$. We also let $a_{-1} < a_0$ and $b_{-1} < b_0$. The collection $A = \{G_{s, t}\}_{s \in \{a_{-1}, ..., a_n\}, t \in \{b_{-1}, ..., b_m\}}$ and have a poset structure induced by inclusions $G_{s, t} \subset G_{s', t'}$ for $s \leq s'$ and $t \leq t'$. For our purposes, the \textbf{biparameter persistence} of $(G, f, g)$ is the PH of all possible poset pathes in the collection $A$.
\end{definition}

The upshot of this section in the appendix is the following result.
\begin{proposition}
    Let $f_v, f_e: G \to \Rbb$ be vertex-level and edge-level filtration functions (still with respect to a coloring), then the \textbf{biparameter persistence} of $(G, f, g)$ is strictly more expressive than the PH of $(G, f)$ added with the PH of $(G, g)$.
\end{proposition}

\begin{proof}
Let $G$ and $H$ be the following graphs:
\[
\begin{tikzpicture}[x=0.75pt,y=0.75pt,yscale=-1,xscale=1]

\draw [line width=1.5]    (105,99) -- (147,99.1) ;
\draw  [fill={rgb, 255:red, 208; green, 2; blue, 27 }  ,fill opacity=1 ][line width=1.5]  (97.55,99) .. controls (97.55,94.89) and (100.89,91.55) .. (105,91.55) .. controls (109.11,91.55) and (112.45,94.89) .. (112.45,99) .. controls (112.45,103.11) and (109.11,106.45) .. (105,106.45) .. controls (100.89,106.45) and (97.55,103.11) .. (97.55,99) -- cycle ;
\draw  [fill={rgb, 255:red, 208; green, 2; blue, 27 }  ,fill opacity=1 ][line width=1.5]  (139.55,99.1) .. controls (139.55,94.99) and (142.89,91.65) .. (147,91.65) .. controls (151.11,91.65) and (154.45,94.99) .. (154.45,99.1) .. controls (154.45,103.21) and (151.11,106.55) .. (147,106.55) .. controls (142.89,106.55) and (139.55,103.21) .. (139.55,99.1) -- cycle ;
\draw [line width=1.5]    (147,106.55) -- (147,134.1) ;
\draw [line width=1.5]    (154.45,99.1) -- (196.45,99.2) ;
\draw  [fill={rgb, 255:red, 74; green, 0; blue, 226 }  ,fill opacity=1 ][line width=1.5]  (189,99.2) .. controls (189,95.09) and (192.34,91.75) .. (196.45,91.75) .. controls (200.56,91.75) and (203.9,95.09) .. (203.9,99.2) .. controls (203.9,103.31) and (200.56,106.65) .. (196.45,106.65) .. controls (192.34,106.65) and (189,103.31) .. (189,99.2) -- cycle ;
\draw  [fill={rgb, 255:red, 74; green, 0; blue, 226 }  ,fill opacity=1 ][line width=1.5]  (139.55,134.1) .. controls (139.55,129.99) and (142.89,126.65) .. (147,126.65) .. controls (151.11,126.65) and (154.45,129.99) .. (154.45,134.1) .. controls (154.45,138.21) and (151.11,141.55) .. (147,141.55) .. controls (142.89,141.55) and (139.55,138.21) .. (139.55,134.1) -- cycle ;
\draw [line width=1.5]    (304,100) -- (346,100.1) ;
\draw  [fill={rgb, 255:red, 208; green, 2; blue, 27 }  ,fill opacity=1 ][line width=1.5]  (296.55,100) .. controls (296.55,95.89) and (299.89,92.55) .. (304,92.55) .. controls (308.11,92.55) and (311.45,95.89) .. (311.45,100) .. controls (311.45,104.11) and (308.11,107.45) .. (304,107.45) .. controls (299.89,107.45) and (296.55,104.11) .. (296.55,100) -- cycle ;
\draw  [fill={rgb, 255:red, 208; green, 2; blue, 27 }  ,fill opacity=1 ][line width=1.5]  (338.55,100.1) .. controls (338.55,95.99) and (341.89,92.65) .. (346,92.65) .. controls (350.11,92.65) and (353.45,95.99) .. (353.45,100.1) .. controls (353.45,104.21) and (350.11,107.55) .. (346,107.55) .. controls (341.89,107.55) and (338.55,104.21) .. (338.55,100.1) -- cycle ;
\draw [line width=1.5]    (353.45,100.1) -- (395.45,100.2) ;
\draw  [fill={rgb, 255:red, 74; green, 0; blue, 226 }  ,fill opacity=1 ][line width=1.5]  (388,100.2) .. controls (388,96.09) and (391.34,92.75) .. (395.45,92.75) .. controls (399.56,92.75) and (402.9,96.09) .. (402.9,100.2) .. controls (402.9,104.31) and (399.56,107.65) .. (395.45,107.65) .. controls (391.34,107.65) and (388,104.31) .. (388,100.2) -- cycle ;
\draw [line width=1.5]    (254.55,99.9) -- (296.55,100) ;
\draw  [fill={rgb, 255:red, 74; green, 0; blue, 226 }  ,fill opacity=1 ][line width=1.5]  (249,100.2) .. controls (249,96.09) and (252.34,92.75) .. (256.45,92.75) .. controls (260.56,92.75) and (263.9,96.09) .. (263.9,100.2) .. controls (263.9,104.31) and (260.56,107.65) .. (256.45,107.65) .. controls (252.34,107.65) and (249,104.31) .. (249,100.2) -- cycle ;

\draw (131,157.4) node [anchor=north west][inner sep=0.75pt]  [font=\LARGE]  {$G$};
\draw (310,159.4) node [anchor=north west][inner sep=0.75pt]  [font=\LARGE]  {$H$};
\draw (98,66.4) node [anchor=north west][inner sep=0.75pt]    {$R$};
\draw (140,66.4) node [anchor=north west][inner sep=0.75pt]    {$R$};
\draw (188,66.4) node [anchor=north west][inner sep=0.75pt]    {$B$};
\draw (120,133.4) node [anchor=north west][inner sep=0.75pt]    {$B$};
\draw (297,67.4) node [anchor=north west][inner sep=0.75pt]    {$R$};
\draw (339,67.4) node [anchor=north west][inner sep=0.75pt]    {$R$};
\draw (387,67.4) node [anchor=north west][inner sep=0.75pt]    {$B$};
\draw (251,68.4) node [anchor=north west][inner sep=0.75pt]    {$B$};

\end{tikzpicture}
\]
In this case, one can check PH in Definition~\ref{def:PH} cannot tell them apart for any choices of $f_v$ and $f_e$ (this was also done in \cite{immonen2023going}). Now let $f_e(R-B) = 1, f_e(R-R) = 2, f_v(B) = 1, f_v(R) = 2$. Consider the sub-filtration of the 2-parameter filtration given by $G_{*, 1}$, ie. ($G_{0, 1} \to G_{1, 1} \to G_{2, 1}$.) and $H_{*,1}$ respectively. One can check that this is the vertex-coloring filtration $f_v$ on the subgraphs:
\[
\begin{tikzpicture}[x=0.75pt,y=0.75pt,yscale=-1,xscale=1]

\draw  [fill={rgb, 255:red, 208; green, 2; blue, 27 }  ,fill opacity=1 ][line width=1.5]  (117.55,117.4) .. controls (117.55,113.29) and (120.89,109.95) .. (125,109.95) .. controls (129.11,109.95) and (132.45,113.29) .. (132.45,117.4) .. controls (132.45,121.51) and (129.11,124.85) .. (125,124.85) .. controls (120.89,124.85) and (117.55,121.51) .. (117.55,117.4) -- cycle ;
\draw  [fill={rgb, 255:red, 208; green, 2; blue, 27 }  ,fill opacity=1 ][line width=1.5]  (159.55,117.5) .. controls (159.55,113.39) and (162.89,110.05) .. (167,110.05) .. controls (171.11,110.05) and (174.45,113.39) .. (174.45,117.5) .. controls (174.45,121.61) and (171.11,124.95) .. (167,124.95) .. controls (162.89,124.95) and (159.55,121.61) .. (159.55,117.5) -- cycle ;
\draw [line width=1.5]    (167,124.95) -- (167,152.5) ;
\draw [line width=1.5]    (174.45,117.5) -- (216.45,117.6) ;
\draw  [fill={rgb, 255:red, 74; green, 0; blue, 226 }  ,fill opacity=1 ][line width=1.5]  (209,117.6) .. controls (209,113.49) and (212.34,110.15) .. (216.45,110.15) .. controls (220.56,110.15) and (223.9,113.49) .. (223.9,117.6) .. controls (223.9,121.71) and (220.56,125.05) .. (216.45,125.05) .. controls (212.34,125.05) and (209,121.71) .. (209,117.6) -- cycle ;
\draw  [fill={rgb, 255:red, 74; green, 0; blue, 226 }  ,fill opacity=1 ][line width=1.5]  (159.55,152.5) .. controls (159.55,148.39) and (162.89,145.05) .. (167,145.05) .. controls (171.11,145.05) and (174.45,148.39) .. (174.45,152.5) .. controls (174.45,156.61) and (171.11,159.95) .. (167,159.95) .. controls (162.89,159.95) and (159.55,156.61) .. (159.55,152.5) -- cycle ;
\draw  [fill={rgb, 255:red, 208; green, 2; blue, 27 }  ,fill opacity=1 ][line width=1.5]  (316.55,118.4) .. controls (316.55,114.29) and (319.89,110.95) .. (324,110.95) .. controls (328.11,110.95) and (331.45,114.29) .. (331.45,118.4) .. controls (331.45,122.51) and (328.11,125.85) .. (324,125.85) .. controls (319.89,125.85) and (316.55,122.51) .. (316.55,118.4) -- cycle ;
\draw  [fill={rgb, 255:red, 208; green, 2; blue, 27 }  ,fill opacity=1 ][line width=1.5]  (358.55,118.5) .. controls (358.55,114.39) and (361.89,111.05) .. (366,111.05) .. controls (370.11,111.05) and (373.45,114.39) .. (373.45,118.5) .. controls (373.45,122.61) and (370.11,125.95) .. (366,125.95) .. controls (361.89,125.95) and (358.55,122.61) .. (358.55,118.5) -- cycle ;
\draw [line width=1.5]    (373.45,118.5) -- (415.45,118.6) ;
\draw  [fill={rgb, 255:red, 74; green, 0; blue, 226 }  ,fill opacity=1 ][line width=1.5]  (408,118.6) .. controls (408,114.49) and (411.34,111.15) .. (415.45,111.15) .. controls (419.56,111.15) and (422.9,114.49) .. (422.9,118.6) .. controls (422.9,122.71) and (419.56,126.05) .. (415.45,126.05) .. controls (411.34,126.05) and (408,122.71) .. (408,118.6) -- cycle ;
\draw [line width=1.5]    (274.55,118.3) -- (316.55,118.4) ;
\draw  [fill={rgb, 255:red, 74; green, 0; blue, 226 }  ,fill opacity=1 ][line width=1.5]  (269,118.6) .. controls (269,114.49) and (272.34,111.15) .. (276.45,111.15) .. controls (280.56,111.15) and (283.9,114.49) .. (283.9,118.6) .. controls (283.9,122.71) and (280.56,126.05) .. (276.45,126.05) .. controls (272.34,126.05) and (269,122.71) .. (269,118.6) -- cycle ;

\draw (151,175.8) node [anchor=north west][inner sep=0.75pt]  [font=\LARGE]  {$G_{3,1}$};
\draw (330,177.8) node [anchor=north west][inner sep=0.75pt]  [font=\LARGE]  {$H_{3,1}$};
\draw (118,84.8) node [anchor=north west][inner sep=0.75pt]    {$R$};
\draw (160,84.8) node [anchor=north west][inner sep=0.75pt]    {$R$};
\draw (208,84.8) node [anchor=north west][inner sep=0.75pt]    {$B$};
\draw (140,151.8) node [anchor=north west][inner sep=0.75pt]    {$B$};
\draw (317,85.8) node [anchor=north west][inner sep=0.75pt]    {$R$};
\draw (359,85.8) node [anchor=north west][inner sep=0.75pt]    {$R$};
\draw (407,85.8) node [anchor=north west][inner sep=0.75pt]    {$B$};
\draw (271,86.8) node [anchor=north west][inner sep=0.75pt]    {$B$};

\end{tikzpicture}
\]
In this case, the left one gives persistence pairs (1, 2), (2, 2), (2, $\infty$), (1, $\infty$) (since a red node (R) and a blue node (B) die here at $t=2$), but the right one gives pairs $(1, \infty), (1, \infty), (2, 2), (2,2)$ (since the two R's die here at $t=2$).
\end{proof}

\subsection{Vietoris-Rips Filtrations}

In our main paper, we were concerned with conducting filtrations directly on the given graph $G$. The idea of a Vietoris-Rips (VR) filtration is to build a simplicial complex $K$ out of the graph and examines a filtration on $K$ instead. The hope is that the filtration on $K$ would contain more information. For our purposes, a Vietoris-Rips (VR) filtration is defined as in the discussions right under Theorem 1 of \citet{ballester2024expressivity}:
\begin{definition}
Given a graph $G = (V,E)$, we consider the filtration $f_V$ of a simplicial complex $K$, where $K$ is the set of all non-empty subsets $S$ of $V$ such that the diameter of $S$ is not infinity (ie. they are all reachable from one another), and the filtration is $f_V(S) = max_{u, v} d_G(u, v)$ (where $d_G$ is the shortest-path distance). We write $\operatorname{VRPH}(G)$ to be the persistence diagrams associated to this filtration $(K, f_V)$. 
\end{definition}

We note here that VRPH is, in general, incomparable to PH (Definition~\ref{def:PH}) in terms of expressive power. Indeed, let $G$ be a path-graph on 3 vertices and $H$ be a cycle graph on 3 vertices, then VRPH cannot differ them (see the discussions right before Section 5 of \cite{ballester2024expressivity}), but PH clearly can, due to the presence of a cycle in H that it can detect. On the other hand, the PH used in Definition~\ref{def:PH} is color-based and processes graphs with the structure of colors on them. On monochromatic graphs, PH is no more expressive than the number of connected components and independent cycles, whereas VRPH can tell more examples apart as it does not need to respect the colors. More comparisons between VRPH and other persistence methods are present in Table 3 of \citet{ballester2024expressivity}.

Thus, rather than viewing VR filtrations as a strictly richer simplicial filtration, the graph filtration we considered and the VR filtrations are really complementary to each other. There is no reason to expect the two to be comparable if we also add in spectral information either. It would be interesting, though, to look at how combinatorial Laplacians and persistence Laplacians method behave on VR filtrations (or simplicial filtrations in general).

We do note that, if a simplicial filtration of $K$ adds in the entire graph first, and then adds in the higher dimensional simplicies at a later time, then it does subsume the context of a graph-based filtration. A full simplicial filtration may in general be quite difficult to enumerate when the scale of the base graph gets larger. For example: on a connected graph $G$ with $n$ vertices, the full VR filtration on $G$ will go through $2^n - 1$ many simplicies. This is exponential in scale whereas SpectRe can be done in polynomial time with respect to $G$.

\subsection{Adding Vertex-level Spectral Information to SpectRe}

In the definition of SpectRe (Definition~\ref{def::RePHINE_Spec}), we added an extra $\gamma$-parameter with respect to the edge-level filtration $f_e$. One can observer, as in the proof of Theorem~\ref{thm::rephine_spec_loc_stab}, that changing the function $f_v$ has no effect on the $\rho$-parameter of the respective vertices. One can consider what would happen if we want to add spectral information with respect to $f_v$ as well.

From here we make an interesting observation on whether Proposition~\ref{thm::persistent_Laplacian_no_more} extends to the case of $f_v$. For edge-based $f_e$, we observe that the proof of Proposition~\ref{thm::persistent_Laplacian_no_more} follows from Corollary~\ref{cor:graph_laplacian_enough} in Appendix~\ref{subsec::spectral_express} - that the non-zero eigenvalues of persistent Laplacians in edge-based filtrations on graphs can be recovered by the eigenvalues of the graph Laplacians. We note this is however not true for vertex-based filtrations.

Indeed, consider a two step filtration $K \subset L$ where $L$ is the path-graph on 4-vertices labeled 1-2-3-4 and $K$ is the discrete subgraph $\{1,4\}$. This filtration is the vertex-based filtration of a function $f_v$ given by $f_v(1) = f_v(4) = 1$ and $f_v(2) = f_v(3) = 2$. If we only look at the graph Laplacian spectra of the filtration, we would get that $K$ has eigenvalues $0,0$ and $L$ has eigenvalues $0,2-\sqrt{2},2,2+\sqrt{2}$.

The persistent 0-dim Laplacian of the pair $(K, L)$ is the matrix $\begin{pmatrix}
    1/3 & -1/3\\
    -1/3 & 1/3
\end{pmatrix}$ with eigenvalues $0, 2/3$. This can be verified using the Matlab code in \citet{persistence_laplacian_code} with inputs
\begin{verbatim}
B1 = [0 0];
B2 = [-1 0 0; 1 -1 0; 0 1 -1; 0 0 1];
Gind = [1 4];
\end{verbatim}
The extra $2/3$ cannot be recovered from the graph Laplacian spectra of $K$ and $L$ alone. Indeed, for a different filtration of $L$ with $K' = \{1, 3\}$, we would get the matrix $\begin{pmatrix}
    1/2 & -1/2\\
    -1/2 & 1/2
\end{pmatrix}$ with eigenvalues $0, 1$.

\newpage
\section{Datasets and implementation details}\label{sec::details}

\paragraph{Datasets.} 

\begin{table}[thb]
    \centering
    \caption{Statistics of datasets for graph classification, for TUDatasets we obtain a random 80\%/10\%/10\% (train/val/test) split. For ZINC and OGB-MOLHIV, we use public splits.}
\resizebox{0.85\textwidth}{!}{

    \begin{tabular}{cccccccc}
    \hline
       \textbf{Dataset}  & $\#$\textbf{graphs} & $\#$\textbf{classes}& \textbf{Avg} $\#$\textbf{nodes} & \textbf{Avg} $\#$\textbf{edges} & \textbf{Train}\%& \textbf{Val}\%&\textbf{Test}\%\\
    \hline
       MUTAG    & 188 &  $2$ & $17.93$ & $19.79$ & $80$ & $10$ & $10$ \\
       PTC-MM    & 336 &  $2$ & $13.97$ & $14.32$ & $80$ & $10$ & $10$ \\
    PTC-MR  & 344	& 2	& 14.29	& 14.69  & 80 & 10 & 10\\
PTC-FR    & 351	& 2	& 14.56	& 15.00 & 80 & 10 & 10 \\
       NCI1    & $4110$& $2$ & $29.87$ & $32.30$ & $80$ & $10$ & $10$  \\
       NCI109  & $4127$& $2$ &  $29.68$ & $32.13$ & $80$ & $10$ & $10$   \\
       IMDB-B  & 1000 & 2	& 19.77	& 96.53 & 80 & 10 & 10\\
       MOLHIV  & $41127$ & $2$ &  $25.5$ & $27.5$ & \multicolumn{3}{c}{Public Split} \\
       ZINC  & $12000$ & - &  $23.16$ & $49.83$ & \multicolumn{3}{c}{Public Split}\\
    \hline
    \end{tabular}
}
    \label{tab:data_detail}
\end{table}

\autoref{tab:data_detail} reports summary statistics of the real-world datasets used in the paper. 
MUTAG contains 188 aromatic and heteroaromatic nitro compounds tested for mutagenicity with avg. number of nodes and edges equal to 17.93 and 19.79, respectively. 
The PTC dataset contains compounds labeled according to carcinogenicity on rodents divided into male mice (MM), male rats (MR), female mice (FM) and female rats (FR). For instance, the PTC-MM dataset comprises 336 graphs with 13.97 nodes (average) and 14.32 edges (average). 
Except for ZINC and MOLHIV, all datasets are part of the TUDataset repository, a vast collection of datasets commonly used for evaluating graph kernel methods and GNNs. The datasets are available at \url{https://chrsmrrs.github.io/datasets/docs/datasets/}.
In addition, MOLHIV is the largest dataset (over 41K graphs) and is part of the Open Graph Benchmark\footnote{\texttt{https://ogb.stanford.edu}}. We also consider a regression task using the ZINC dataset --- a subset of the popular ZINC-250K chemical compounds \citep{zinc}, which is particularly suitable for molecular property prediction \citep{benchmarking2020}.

Our first set of synthetic datasets comprises minimal Cayley graphs --- a special class of Cayley graphs only partially understood. For instance, it is unkonwn whether their chromatic number is bounded by a global constant. These datasets have been used to assess the expressivity of graph models \citep{ballester2024expressivity} and can be found at \url{https://houseofgraphs.org/meta-directory/minimal-cayley} .
BREC is a benchmark for GNN expressiveness comparison. It includes 800 non-isomorphic graphs arranged in a pairwise manner to construct 400 pairs in four categories (Basic, Regular, Extension, CFI). Basic graphs consist of 60 pairs of 1-WL-indistinguishable graphs. Regular graphs consist of 140 pairs of regular graphs split into simple regular graphs, strongly regular graphs, 4-vertex condition graphs and distance regular graphs. For further details on the remaining graph structures, we refer to \citet{wang2024empirical}.
\looseness=-1

\paragraph{Experimental setup.}

We implement all models using the PyTorch Geometric Library~\citep{Fey/Lenssen/2019}. Our implementation is an extension of the official code repository of \citep{immonen2023going}. For all experiments, we use a cluster with Nvidia V100 GPUs.

For the experiments on real data, we employ MLPs to obtain vertex and edge filtrations followed by sigmoid activation functions, following \citep{immonen2023going}. We use two different DeepSets to process the 0-dim and 1-dim diagrams. 

Regarding model selection, we apply grid-search considering a combination of $\{1, 2\}$ GNN layers and $\{1, 4\}$ filtration functions. We set the number of hidden units in the \texttt{DeepSet} and GNN layers to 32, and of the filtration functions to 16 --- i.e., the vertex/edge filtration functions consist of a 2-layer MLP with 16 hidden units. The GNN node embeddings are combined using a global mean pooling layer. We employ the Adam optimizer~\citep{adam} with a maximum of 500 epochs, learning rate of $10^{-4}$, and batch size equal to 64.

We use a random 80\%/10\%/10\% (train/val/test) split for all datasets. All models are initialized with a learning rate of $10^{-3}$ that is halved if the validation loss does not improve over 10 epochs. We apply early stopping with patience equal to 30.

\paragraph{Implementation note.}
For the sake of full transparency, while preparing the final version of the manuscript, we discovered that an earlier version of our implementation contained a minor error in the computation of the augmented Forman–Ricci curvature. This issue affected a few results reported in \autoref{tab:cayley} and \autoref{tab:ablation}, and accounts for small numerical discrepancies with the previous version of the manuscript. After correcting the implementation, all results were recomputed. Importantly, the correction does not alter the qualitative findings of the paper: in particular, the conclusion that SpectRe is the best-performing model across our experiments remains unchanged. Further details are available in the official code repository.

\section{Additional experiments}\label{app:additional_experiments}

For completeness, we consider two sets of additional experiments. First, we run an ablation study to measure the impact of using partial information on SpectRe's performance on BREC datasets. The second group of experiments aims to assess the performance of SpectRe when combined with other graph neural networks. To do so, we consider the graph transformer model in \citep{gps} as backbone GNN. 

\autoref{tab:ablation} shows results regarding SpectRe using partial spectrum information (one third of the total eigenvalues). As we can see, SpectRe with partial spectrum can distinguish the same graphs as the full spectral approach on Basic, Regular, and Extension.
However, if we remove the spectral information, the expressivity drops significantly — SpectRe without spectrum reduces to RePHINE.
We note that using partial spectrum is one approach to speed up SpectRe.

\begin{table}[htb]
    \centering
    \caption{\textbf{Additional ablation study}: SpectRe with partial spectral information (1/3 of the total number of eigenvalues) and Laplacian Spectrum (LS) on the BREC datasets. The results show that using only a small subset of eigenvalues allows distinguishing \emph{almost} as many graphs as the original (full spectrum) approach. Note that RePHINE corresponds to SpectRe with no spectral information.}
    \label{tab:ablation}
    \begin{tabular}{lcccccc}
    \toprule
    Dataset  & PH$^0$ & PH$^1$ & RePHINE & LS & SpectRe & SpectRe (partial spectrum)\\
    \midrule
    Basic (60) & 0.03 & 0.98 & 0.98 & \textbf{1.00} & \textbf{1.00} & \textbf{1.00} \\
    Regular (50) & 0.00 & 0.94 & 0.94 & \textbf{1.00} & \textbf{1.00} & \textbf{1.00} \\
    Extension (100) & 0.07 & 0.55 & 0.55 & \textbf{1.00} & \textbf{1.00} & \textbf{1.00}\\
    CFI (100) & 0.03 & 0.03 & 0.03 & \textbf{0.04} & \textbf{0.04} & 0.03 \\
    Distance (20) & 0.00 & 0.00 & 0.00 & \textbf{0.05} & \textbf{0.05} & 0.00\\
    \bottomrule
    \end{tabular}
\end{table}

\autoref{tab:ablation} also reports results using LS (Laplacian Spectrum) on BREC datasets. The results show that LS perform on par with SpectRe on all datasets. This is somehow expected, given the identical performance of PH$^1$ and RePHINE under our simple choice of filtrations.
It is worth noting that LS is a simplified version of SpectRe, and is also a contribution of this paper --- as far as we know, no prior work has exploited persistent spectral information in graph learning.

\autoref{tab:spectre_gps} shows the results of integrating SpectRe into GPS. Leveraging topological descriptors boosts the performance of the graph Transformer in 3 out of 4 datasets. Again the gains achieved by SpectRe are higher than those obtained with RePHINE. 
In these experiments, we applied the fast variant of SpectRe: specifically, we used the power method to approximate the largest eigenvalue when $n>9$, and computed the full spectrum otherwise. Additionally, we employed a scheduling strategy, computing the eigendecomposition at only one-third of the filtration steps.
\begin{table}[!htb]
    \centering
    \caption{\textbf{Graph Transformer (GPS, \citep{gps}) and SpectRe}. Here, we consider the combination of topological descriptors with a SOTA graph model. As we can observe, SpectRe boosts the performance of the GPS model and beats RePHINE. For ZINC, we only considered a single filtration.} 
    \label{tab:spectre_gps}
    \begin{tabular}{lcccc}
    \toprule
    Method  & NCI1 & NCI109 & IMDB-BINARY & ZINC\\
    \midrule
    GPS & 81.51 $\pm$ 1.72 & 77.00 $\pm$ 0.68 & \textbf{76.00} $\pm$ 2.83 & 0.38 $\pm$ 0.01   \\
    GPS+RePHINE & 82.36 $\pm$ 0.86 & 77.97 $\pm$ 2.74 & 71.50 $\pm$ 2.12 & \textbf{0.34} $\pm$ 0.04\\
    GPS+FastSpectRe & \textbf{83.33} $\pm$ 2.23 & \textbf{79.66} $\pm$ 0.34 & 75.50 $\pm$ 0.71 & \textbf{0.34} $\pm$ 0.02  \\
    \bottomrule
    \end{tabular}
\end{table}

Finally, we also provide experimental results regarding our stability bounds. In particular, we look at the first 4 graphs from the BREC dataset (basic.npy). For each graph, we define the base filtrations ($f_v$, $f_e$) with $f_v$ being the degree of the vertex and $f_e$ being the average of the degree of the two vertices. \autoref{tab:stability} shows the bottleneck distance and the inequality bound $3||f_e - g_e|| + ||f_v - g_v||$ for different choices of $(g_v, g_e)$.

\begin{table}[h!]
\centering
\caption{\textbf{Empirical validation of the stability bounds}.}
\begin{adjustbox}{width=\linewidth}
\begin{tabular}{cc|c|cc|cc}
\toprule
\textbf{$g_v$} & \textbf{$g_e$} & \textbf{Graph ID} & \textbf{RePHINE Dist} & \textbf{RePHINE Bound} & \textbf{SpectRe Dist} & \textbf{SpectRe Bound} \\
\midrule
$\sin(10 f_v)$ & $5 f_e$        & G1 & 65.25 & 78.30 & 66.23 & 78.30 \\
                &                & G2 & 55.25 & 66.30 & 56.30 & 66.30 \\
                &                & G3 & 68.25 & 86.99 & 69.25 & 86.99 \\
                &                & G4 & 62.26 & 72.30 & 62.26 & 72.30 \\
\midrule
$0.1f_v+0.1$   & $\exp(-f_e)$   & G1 & 21.18 & 24.19 & 71.95 & 24.19 \\
                &                & G2 & 17.77 & 20.28 & 65.98 & 20.28 \\
                &                & G3 & 22.59 & 26.60 & 72.37 & 26.60 \\
                &                & G4 & 18.78 & 21.79 & 60.60 & 21.79 \\
\bottomrule
\end{tabular}\label{tab:stability}
\end{adjustbox}
\end{table}

The results in \autoref{tab:stability} correspond to the expected behaviors. We first see that no matter what the $(g_v, g_e)$ is, the RePHINE distance is always bounded by the RePHINE bound, which corresponds to RePHINE being \emph{globally stable}.
For SpectRe, setting $g_v = \sin(10 f_v), g_e = 5 f_e$ does not break stability. This is because of two reasons (1) changes in the vertex-filtration function do not affect stability, and (2) the change $f_e \rightarrow 5g_e$ does not cross the region of non-injectivity. We also remark that Reason (2) shows there is a great flexibility to perturb $f_e$ without breaking stability.
When $g_v = 0.1f_v+0.1, g_e=\exp(-f_e)$, we see the stability of SpectRe is broken. This is expected behavior because $\exp(-f_e)$ is an order reversing function, and the path from $f_e$ to $\exp(-f_e)$ would necessarily cross some region of non-injectivity.

\end{document}